\DeclareMathOperator{\vol}{vol}
\DeclareMathOperator{\sgn}{sign}
\title{Implicit Hypersurface Approximation Capacity in Deep ReLU Networks}
\newcommand{\BR}{B_R^d}
\newcommand\labelAndRemember[2]
\gdef\csname labeled:#1\endcsname{#2}%
\newcommand\recallLabel[1]
\endcsname\tag{\ref{#1}}}
 \newtheoremstyle{TheoremNum}
        {\topsep}{\topsep}              %%% space between body and thm
        {\itshape}                      %%% Thm body font
        {}                              %%% Indent amount (empty = no indent)
        {\bfseries}                     %%% Thm head font
        {.}                             %%% Punctuation after thm head
        { }                             %%% Space after thm head
        {\thmname{#1}\thmnote{ \bfseries #3}}%%% Thm head spec
    \theoremstyle{TheoremNum}
\newtheoremstyle{TheoremNum}
        {\topsep}{\topsep}              %%% space between body and thm
        {\itshape}                      %%% Thm body font
        {}                              %%% Indent amount (empty = no indent)
        {\bfseries}                     %%% Thm head font
        {.}                             %%% Punctuation after thm head
        { }                             %%% Space after thm head
        {\thmname{#1}\thmnote{ \bfseries #3}}%%% Thm head spec
    \theoremstyle{TheoremNum}
    \newtheorem{lemn}{Lemma} 
    \newtheoremstyle{TheoremNum}
        {\topsep}{\topsep}              %%% space between body and thm
        {\itshape}                      %%% Thm body font
        {}                              %%% Indent amount (empty = no indent)
        {\bfseries}                     %%% Thm head font
        {.}                             %%% Punctuation after thm head
        { }                             %%% Space after thm head
        {\thmname{#1}\thmnote{ \bfseries #3}}%%% Thm head spec
    \theoremstyle{TheoremNum}
    \newtheorem{propn}{Proposition}
\newcommand\restr[2]{{% we make the whole thing an ordinary symbol
  \left.\kern-\nulldelimiterspace % automatically resize the bar with \right
  #1 % the function
  \littletaller % pretend it's a little taller at normal size
  \right|_{#2} % this is the delimiter
  }}
\newcommand{\littletaller}{\mathchoice{\vphantom{\big|}}{}{}{}}
\author{Jonatan Vallin, \quad Karl Larsson, \quad Mats G. Larson}
\date{\today}
\begin{document}
\maketitle 
\begin{abstract}
We develop a geometric approximation theory for deep feed-forward neural networks with ReLU activations. Given a $d$-dimensional hypersurface in $\IR^{d+1}$ represented as the graph of a $C^2$-function $\phi$, we show that a deep fully-connected ReLU network of width $d+1$ can implicitly construct an approximation as its zero contour with a precision bound depending on the number of layers.
This result is directly applicable to the binary classification setting where the sign of the network is trained as a classifier, with the network's zero contour as a decision boundary.
Our proof is constructive and relies on the geometrical structure of ReLU layers provided in \cite[\doi{10.48550/arXiv.2310.03482}]{vallin2023geometry}. Inspired by this geometrical description, we define a new equivalent network architecture that is easier to interpret geometrically, where the action of each hidden layer is a projection onto a polyhedral cone derived from the layer's parameters. By repeatedly adding such layers, with parameters chosen such that we project small parts of the graph of $\phi$ from the outside in, we, in a controlled way, construct a network that implicitly approximates the graph over a ball of radius $R$. The accuracy of this construction is controlled by a discretization parameter $\delta$ and we show that the tolerance in the resulting error bound scales as $(d-1)R^{3/2}\delta^{1/2}$ and the required number of layers is of order $d\big(\frac{32R}{\delta}\big)^{\frac{d+1}{2}}$.
\end{abstract}

%\paragraph{Keywords} Deep learning $\cdot$ Neural networks $\cdot$ Approximation $\cdot$ ReLU $\cdot$ Supervised learning $\cdot$ Binary classification $\cdot$ Decision boundary

%\paragraph{Mathematics Subject Classification} 68T07 $\cdot$ 41A63 $\cdot$ 41A30

\section{Introduction}
Deep neural networks (DNNs) consist of a sequence of nonlinear activations, allowing them to learn complex patterns in data. In recent years, DNNs have been extremely successful in many areas, including image recognition \cite{krizhevsky2012imagenet, he2016deep}, natural language processing \cite{wu2016google, brown2020language, vaswani2017attention, ray2023chatgpt, bubeck2023sparks} and reinforcement learning \cite{silver2017mastering}. Due to these remarkable accomplishments, the theoretical properties of DNNs have regained much attention lately. One such theoretical aspect is their approximation capacity, which is the central topic of this paper. It was already known in the 1980s that neural networks with a single hidden layer can approximate any continuous function arbitrarily well on compact domains as long as there is no limitation on the number of units \cite{cybenko1989approximation}. Nowadays, deep learning (techniques using networks with multiple layers) has achieved state-of-the-art performance in many machine learning areas, and these deep networks have outperformed their shallow predecessors. Hence, a rigid mathematical theory for deep networks is highly desirable.

\paragraph{Contributions.} In this paper, we develop a geometric approximation theory for deep fully-connected networks with the Rectified Linear Unit (ReLU) activation. The proof is constructive and builds upon our earlier work \cite{vallin2023geometry} on the geometrical structure of fully-connected ReLU layers. Our main results can be summarized as follows:
\begin{itemize}

\item Any $d$-dimensional hypersurface $\Gamma_\phi \subset \IR^{d+1}$ represented as the graph of a sufficiently regular function $\phi:\Omega\to \IR$, where $\Omega\subset \IR^d$ is a bounded domain with diameter $R$, can be implicitly approximated to any accuracy using a deep ReLU network $F:\Omega\times\IR\to\IR$ of constant width $d+1$. The proof is based on explicitly constructing such a network, and the accuracy is controlled via a discretization parameter~$\delta$.

\item For points further than a tolerance $\varepsilon$ from $\Gamma_\phi$,
the sign of the constructed network correctly classifies whether a point in $\Omega\times\IR$ lies above or below $\Gamma_\phi$. This tolerance depends on  $\delta,R,d$ as
\begin{align}
\varepsilon \lesssim (d-1)R^{3/2}\delta^{1/2}
\end{align}
As a consequence, the zero contour $\Gamma$ of the network $F$ will approximate $\Gamma_\phi$ in the sense that any point on $\Gamma$ will be closer than $\varepsilon$ from $\Gamma_\phi$.

\item The number of layers $N$ required for this construction satisfies the bound
\begin{align}
N \lesssim d\biggl(\frac{32R}{\delta}\biggr)^{\frac{d+1}{2}}
\end{align}

\end{itemize}
These results directly apply to the binary classification setting where the objective is to train a network such that its sign is a classifier for two classes of data points, and the network's zero contour is a decision boundary separating the two classes.

\paragraph{Related Works.} Several papers concerning the approximation ability and complexity of DNNs have been published in recent years. For instance, in \cite{montufar2014number}, it is shown that the number of linear pieces can grow exponentially with the number of layers in deep ReLU networks in contrast to shallow ones. However, as also noted in \cite{vallin2023geometry}, the linear pieces are not entirely independent of each other, and this feature might improve the generalization ability in deep multi-layered networks. In contrast, \cite{hanin2019deep} provides theoretical and experimental evidence that the number of activation regions (closely related to the linear regions) of the actual functions learned by deep networks during training is considerably fewer than the theoretical maximum. Along the same lines, it is demonstrated in \cite{ba2013deep} that shallow networks can be trained to mimic deep neural nets and thus obtain similar accuracy. However, that analysis is empirical and limited to only two data sets. Also, the shallow networks are trained on the outputs from the pre-trained deep networks. Still, when training directly on the actual data sets, the accuracy drops significantly for the shallow models. Accordingly, it is suggested that the success of deep learning might partly be explained by these deep architectures fitting the current training algorithms well and thus making learning easier.

Other works, such as \cite{telgarsky2015representation, telgarsky2016benefits}, provide classification problems that can be solved by deep networks much more efficiently than what is possible for shallow networks. More generally, in \cite{yarotsky2017error}, it is proven that deep networks can approximate smooth functions in Sobolev-type spaces more efficiently than shallow models. Thus, one possible explanation for the success of deep learning might be that deep networks are much more efficient at computing functions typically encountered in applications.

A universal approximation result for deep ReLU networks is provided in \cite{lu2017expressive} where it is shown that any Lebesgue integrable function in $d$ variables can be approximated to any given precision by a deep ReLU network as long as the widths (i.e., the number of units in each layer) are greater than or equal to $d+4$. Moreover, it is also proven that this universal approximation property breaks down when the widths are bounded above by the input dimension $d$. Hence, even in the limit of infinite depth, ReLU networks of widths bounded by the input dimension have limited approximation capacities. The minimal width allowing universal approximation was later settled in \cite{hanin2017approximating}, where it is shown that deep ReLU networks of width $d+1$ can approximate any real-valued continuous function on compact domains. That proof relies on a specific construction of the layers, allowing building chains of nested max and min operations of affine functions, which require the parameters in each layer to attain specific values. More precisely, the first $d$ units in all layers are unaffected in their construction and thus always hold the input variables, which can be repeatedly reused to build their approximation in the $d+1$ unit. Related results concerning universal approximation for different function classes and activation functions are provided in \cite{park2020minimum}.

Other works consider networks in the classification setting and provide results concerning topological properties of the decision regions, that is, the subsets of the input space $\IR^d$ where the network will predict each class label. For instance, in \cite{nguyen2018neural}, it is shown that pyramidal networks, where the widths form a non-increasing sequence bounded above by the input dimension $d$, cannot realize disconnected decision regions for certain activation functions, as leaky-ReLU, if the weight matrices have full rank. Thus, all possible decision regions are connected for such networks; hence, these models are limited as classifiers. A related result is found in \cite{beise2021decision}, where it is proven that networks with strictly monotonic or ReLU activations of width bounded by the input dimension $d$ can only generate unbounded decision regions.

The effect of the geometrical properties of decision boundaries in relation to the robustness of the networks, as classifiers, has also been investigated \cite{fawzi2018empirical,moosavi2019robustness}. More precisely, it is shown that networks with decision boundaries with large curvature will be more sensitive to perturbations in the input data. In the related study \cite{liu2022some}, tools from differential geometry are used to provide conditions on the network parameters guaranteeing a flat or developable decision boundary. Other works, such as \cite{lee2024defining, berzins2023polyhedral, huchette2023deep}, consider the polyhedral structure of ReLU networks and its relationship to the geometrical structure of data sets in the classification setting.

Decision boundaries of ReLU networks have also been studied through the perspective of tropical geometry where the network complexity and other properties are related to tropical objects \cite{pmlr-v80-zhang18i, piwek2023exact, alfarra2022decision}. For instance, it is shown that the decision boundary will lie inside the convex hull of two zonotopes defined by the parameters of the network.

\paragraph{Outline.}
In Section~\ref{sec:2}, we describe the problem of implicitly approximating a hypersurface using deep ReLU networks, describe our main result on the approximation capacity of such networks, and outline the steps in its proof. We also comment on the applicability of this result to the binary classification problem.
In Section~\ref{sec:structure}, we review how the layers in the network can be geometrically interpreted as projections onto hypercones.
In Section~\ref{sec: projecting the graph}, we project pieces of the hypersurface onto hyperplanes, starting from the outside and moving inwards until only a final small piece remains. By the geometric description, each projection gives the parameters of a layer in the network. In Section~\ref{sec:Decision Boundary}, we show how to approximate the final central piece using a hyperplane, yielding the parameters of the final layer, and how the preimage of the complete sequence of layers is an approximation to the hypersurface.
Finally, in Section~\ref{sec:conclusions}, we give some conclusions.

\section{Approximation Problem}
\label{sec:2}

The problem we are considering is the implicit approximation of a $d$-dimensional hypersurface in $\IR^{d+1}$ by the zero contour of a deep neural network of width $d+1$. We show that such an approximation can be constructed to any desired accuracy, given a sufficient number of layers in the network. In this section, we begin by defining the hypersurface and specifying how the approximation to the hypersurface is modeled via a specific deep neural network. We present our main approximation result and summarize the main steps taken to prove this result. Finally, we describe how this applies to the binary classification problem.

\begin{definition}[Exact Hypersurface] \label{def:exact-surface}
Let $\BR$ be an open ball in $\IR^d$ of radius $R$ centered at the origin, and let $\phi:\BR\to\IR$ be a $C^2(\BR)$ function for which the second derivatives are bounded by a constant $D$.
The exact hypersurface in $\IR^{d+1}$ is defined as the graph of $\phi$;
\begin{align}
\boxed{
\Gamma_\phi = \{ (x,y) \in \BR\times\IR \, : \, y=\phi(x) \}
}
\end{align}
\end{definition}

\begin{definition}[Hypersurface Approximation Representation] \label{def:approx-surface}
 Approximative hypersurfaces are implicitly represented as the zero contour of a deep neural network $F:\BR\times\IR \to \IR$, i.e.,
 \begin{align} \label{eq:approx-hypersurface}
  \boxed{
  \Gamma = \{ (x,y) \in \BR\times\IR \,:\, F((x,y))=0 \}
  }
  \end{align}
  where the architecture of $F$ is defined as follows.
  Let $F:\BR\times\IR \to \IR$ be a fully-connected ReLU network of width $d+1$ and depth $N$ (illustrated in Figure~\ref{fig:fcc_2}), written as the composition
  \begin{align}
  F(x)=L \circ T_N \circ \cdots \circ T_1(x)
  \label{eq:deepnet}
  \end{align}
  Here, $L:\IR^{d+1}\to \IR$ is an affine function, and $T_k:\IR^{d+1}\to \IR^{d+1}_+$, $k=1,\hdots, N$, are functions, so-called ReLU layers, of the form
  \begin{align} 
  T_k(x)=\mathrm{ReLU}(A_kx+b_k)
  \label{def:T_n}
  \end{align}
  where the activation function $\mathrm{ReLU}(x)=\max(x,0)$ is applied elementwise to the vector $A_kx+b_k$,
  and $\IR^d_+$ denotes the non-negative orthant of $\IR^d$.
  Hence, the parameters of the network are the matrices $A_k\in \IR^{(d+1) \times (d+1)}$, the bias vectors $b_k\in \IR^{d+1}$, and the parameters of the affine function $L$.
\end{definition}

\begin{rem}[Piecewise Linear Hypersurface Approximation.]
 Since the ReLU activation is a piecewise linear function it follows directly that a fully-connected ReLU network is itself a piecewise linear function, and in turn $\Gamma$ will be a piecewise linear hypersurface. However, the partition of the input domain on which the network is piecewise linear depends non-trivially on the network parameters, and the same goes for the polygonal pieces of the hypersurface.
\end{rem}

\begin{figure}
\centering
\includegraphics[scale=0.5]{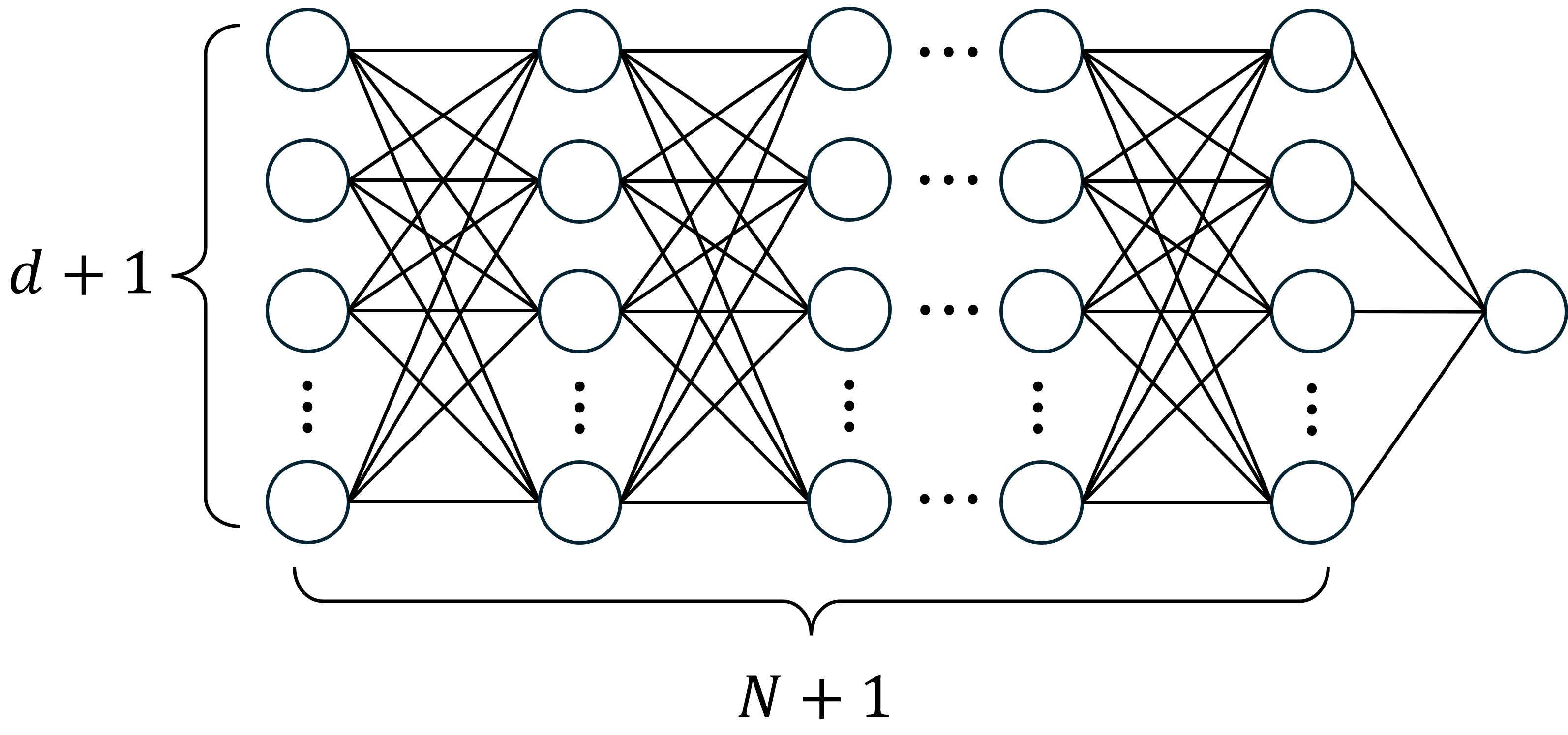}
\caption{\textit{Fully-Connected Network.} The schematics of a fully-connected network of constant width $d+1$ and $N$ hidden layers.
}
\label{fig:fcc_2}
\end{figure}

\subsection{Main Result}

We here present our main approximation result and then outline the main steps taken in its proof.
To express the approximation result, we first introduce the concept of a $\varepsilon$-band, which is used for describing a $(d+1)$-dimensional neighborhood close to the exact hypersurface $\Gamma_\phi$.

\begin{definition}[\boldmath{$\mathbf{\varepsilon}$}-band]
  \label{def:eps_band}
  Let $f:\Omega \to \IR$ where $\Omega\subseteq \IR^d$. Given a set $\omega \subseteq \IR^d$ and a non-negative constant $\varepsilon$ we define the $\varepsilon$-band of $f$ over $\omega$ as %, denoted by $\Sigma_{f}^{\varepsilon}(\omega)$, according to
  \begin{align} \label{eq:eps_band}
  \Sigma_{f}^{\varepsilon}(\omega)=\{(x,y)\in (\Omega \cap \omega)\times \IR: |f(x)-y|\leq \varepsilon\}
  \end{align}
  \end{definition}

\begin{rem}
\label{rem:band}
If we have two continuous functions $f,g:\Omega\to \IR$ for some $\Omega \subset \IR^d$ and the graph $\Gamma_{g}$ of $g$ restricted to a subset $\omega\subseteq \Omega$ fulfills $\Gamma_g \subseteq \Sigma_{f}^{\varepsilon}(\omega)$ for some $\varepsilon\geq 0$ then it is clear that
\begin{align}
\label{eq:rem-band}
\sup_{x\in \omega}|f(x)-g(x)|\leq \varepsilon
\end{align}
Likewise, if \eqref{eq:rem-band} holds then $\Gamma_g \subseteq \Sigma_{f}^{\varepsilon}(\omega)$.
\end{rem}

\begin{lem}[Implicit Approximation Capacity] \label{lem:implicit-approximation-capacity}
Let $\phi \in C^2(\BR)$ be a scalar valued function with graph $\Gamma_\phi$, see Defintion~\ref{def:exact-surface}, and let $\Sigma_\phi^\varepsilon(\BR) \subset \BR\times\IR$ be the $\varepsilon$-band covering $\Gamma_\phi$
, see Definition~\ref{def:eps_band}.
Given a positive discretization parameter $\delta\leq R\left(1-2^{-1/2}\right)$, there exists a deep fully-connected ReLU network $F:\BR \times \IR \to \IR$, see Definition~\ref{def:approx-surface}, of constant width $d+1$ and a depth of at most $N=Cd\big(\frac{32R}{\delta}\big)^{\frac{d+1}{2}}$, such that 
\begin{align}
\boxed{
\sgn(F(x,y)) = \sgn(\phi(x) - y) \,, \quad (x,y) \in (\BR \times \IR) \setminus \Sigma_\phi^\varepsilon(\BR)
}
\end{align}
with a tolerance
\begin{align}
\varepsilon \leq C_5(d-1)R^{3/2}\delta^{1/2}
\end{align}
where $C_5$ is a constant independent on $\delta, R$ and $d$.
The zero contour to $F$ hence satisfies
\begin{align}
\boxed{
\Gamma
\subseteq \Sigma_\phi^\varepsilon(\BR)
}
\end{align}
\end{lem}

\begin{rem}[$\boldsymbol{\Gamma}$ as the Graph of a Function]
  Below, we prove a slightly stronger version of this approximation bound -- Theorem~\ref{thm:main-result}.
  There, 
  we by construction show that there exists a network $F$ of depth $N$ such that its zero contour $\Gamma$ coincides with the graph of a continuous piecewise linear function $\hat\phi:B_R^d \to \IR$ fulfilling the more direct error bound
  \begin{align}
   \sup_{x\in \BR}|\phi(x)-\hat{\phi}(x)|\leq \varepsilon
   \end{align}
  which in turn implies Lemma~\ref{lem:implicit-approximation-capacity}.
  In practical applications, the parameters of the network are trained rather than set according to our construction, so such a $\hat\phi$ corresponding to a trained network $F$ might not always exist. This, however, does not limit the applicability of the theorem since it still gives a lower bound on the approximation capacity.
\end{rem}

\subsection{Proof Outline}
As the proof of Theorem~\ref{thm:main-result} is rather extensive we will here give a brief description of the key steps involved and an overview of the structure of this paper. The overall idea is that the layers in the network will successively project small pieces of $\Gamma_{\phi}$ onto hyperplanes in a controlled way such that its image will consist of a small unaffected central piece of the original graph and a projected part contained in a small neighborhood around its boundary. We will then approximate the remaining piece by the hyperplane defined by the kernel of the last affine function in the network. As the decision boundary is the preimage of this hyperplane we will show that if the aforementioned neighborhood is small then the decision boundary will be close to $\Gamma_{\phi}$.
\begin{itemize}

\item In Section~\ref{sec:structure} we start by reexamining the geometrical structure of standard ReLU layers, detailed in \cite{vallin2023geometry}. Based on this geometrical description, we in Section~\ref{sec:modified-structure} propose a modified network architecture, equivalent to the fully-connected ReLU networks we are interested in, with a greatly simplified action of the layers. In Section~\ref{sec:compact_domains} we show that, given a half-space $\mathbf{U}$ and a vector $\xi$, such a layer can realize a map that is the identity on $\mathbf{U}$ while $\mathbf{U}^c$ is projected along $\xi$ onto the boundary hyperplane, see Figure~\ref{fig:projection-bounded}. Hence, we can identify each layer by a half-space and a projection direction.

\item In Section~\ref{sec: projecting the graph} we describe how to choose each half-space and projection direction and how $\Gamma_{\phi}$ evolves when passing through the network. Each half-space will be chosen orthogonal to $\IR^d$ and in this setting the part of the graph projected by one layer will be contained in a neighborhood around the intersection of the graph and the corresponding hyperplane, see Figure~\ref{fig:Projectedfirst}. 

We will group the layers in the network in such a way that the half-spaces associated to each group define a convex polytope $\mathcal{P}$ and the projected part of $\Gamma_{\phi}$ will be contained in a neighborhood over the boundary $\partial \mathcal{P}$ as depicted in Figure~\ref{fig:Projection-first-poly}. In this way, we will get a nested sequence of polytopes and the number of them (i.e., the number of groups of layers) is chosen such that the last one is contained in the closed ball $\bar{B}^d_{\delta}$ for a predefined parameter $\delta>0$, see Figure~\ref{fig:Sequence of Polytopes}. We will estimate the size $\varepsilon=\varepsilon(\delta)$ of the final neighborhood we obtain.

\item In Section~\ref{sec:Decision Boundary} we start by showing how to define the last affine function in our network such that its kernel approximates $\Gamma_{\phi}$ over the last polytope, see Figure~\ref{fig:Hyperplane}. Then, we describe the preimages of the layers in our network. We show that the resulting decision boundary will be the graph of a continuous piecewise linear function, defined on $B^d_R$, that $2\varepsilon$-approximates $\phi$, see Figure~\ref{fig:decision boundary-approx}.
\end{itemize} 
 
\subsection{Application to Binary Classification} 
\label{sub-section:Binary Classification}
A commonly encountered problem in machine learning is that you have a data set as a finite set of points where each point comes with a corresponding class label. When there are two possible labels we call it a binary classification problem and the task is to build a model that correctly classifies labels of unseen data.
We here describe how our approximation result applies to that setting.

\paragraph{Binary Classification.}
We assume that two bounded disjoint sets $X_1,X_2 \subset\IR^d$ are associated with two respective classes. These two sets represent the ground truth in the sense that all possible data points from the first class are assumed to be elements in $X_1$ and points from the second class are always elements in $X_2$. Since they are bounded, there is a point $p\in \IR^d$ and a $R>0$ such that $X_1\cup X_2 \subseteq B^d_R(p)$ where $B^d_R(p)$ is the $d$-dimensional closed ball of radius $R$ centered at $p$. Without loss of generality, we may assume that the ball is centered at the origin and simply denote it by~$B^d_R$.

If there exists a function $\phi:B^d_R \to \IR$ such that
\begin{align}
  \begin{cases}
  \phi(x)>0 \quad\text{if $x\in X_1$}\\ \phi(x)<0 \quad\text{if $x \in X_2$}
  \end{cases}
  \label{eq:levelsetphi}
\end{align}
then the sign of $\phi(x)$ is a classifier on $B^d_R$ and the hypersurface 
\begin{align}
  \gamma_\phi = \{ x\in B^d_R \, : \, \phi(x)=0 \}
\end{align}
is a decision boundary separating the two datasets.
Note that $\gamma_\phi$ does not need to generate connected (or simply connected) decision regions, see the example in Figure~\ref{fig:Datapoints}. Also note that $\gamma_\phi$ is not unique to $\phi$, since, for instance, $\phi$ multiplied with any strictly positive function has a sign that is a classifier with the same decision boundary.

\paragraph{Embedment in Higher Dimension.}
To each datapoint we append a $0$, embedding the data in $B^d_R \times \IR$.
This embedment in a higher dimension is required for applying our approximation result to the classification problem, and it has the well-known benefit of simplifying the topology of the decision boundary, see for instance \cite{nguyen2018neural, beise2021decision}, where it is shown that the decision regions of networks of width less than or equal to the input dimension are always connected and unbounded (with some additional assumption on the activation functions and parameter matrices).

An alternative interpretation is that one may add a linear embedding layer in the network, mapping $\IR^{d}$ to $\IR^{d+1}$, and then apply our result. This setting also occures when the data lies in a hyperplane in $\IR^{d+1}$ so that the points can be described using a $d$-dimensional coordinate system in that plane.

Given a function $\phi$ fulfilling \eqref{eq:levelsetphi}, its graph $\Gamma_\phi$ will be a decision boundary in the embedding space $B^d_R \times \IR$. Note that the resulting decision regions will always be simply connected as $\phi$ is bounded, where the datapoints of the two classes will lie above respectively below $\Gamma_\phi$, see the example in Figure~\ref{fig:Gammaphi}.

\paragraph{The Network as a Classifier.}
Given a sufficiently smooth $\phi$ and a tolerance $\varepsilon$, our approximation result gives the existence of a network $F$ guaranteed to correctly classify points outside an $\varepsilon$-band to $\Gamma_\phi$. Hence, its zero contour $\Gamma$, i.e., the network's decision boundary, is contained within that $\varepsilon$-band.
 
The error bound in our approximation result directly relates to the tolerance in the additional embedding dimension, whereas a more relevant tolerance for the binary classification problem is the distance to $\gamma_\phi$ within $B^d_R$. For $\phi$ with a very small gradient over $\gamma_\phi$, a very small tolerance in the embedding direction is required to achieve the desired tolerance in $B^d_R$.
However, given a sufficiently regular decision boundary $\gamma_\phi \subset B^d_R$, there exists a corresponding smooth $\phi$ that is a distance function in a neighborhood to $\gamma_\phi$, and hence the gradient over $\gamma_\phi$ is unit length which means that the tolerances in both the embedding dimension and in $B^d_R$ are equivalent.

\begin{rem}[Local Accuracy] \label{rem:local accuracy}
Our approximation result gives the existence of a network whose sign acts as a classifier with a tolerance $2\varepsilon$ everywhere throughout the domain. Most practical situations are however less strict, where the needed tolerance may vary greatly over the domain with high precision required only in a small part. This is illustrated in the example presented in Figure~\ref{fig:region}.
\end{rem}

\begin{figure}
  \centering
  \includegraphics[width=0.7\linewidth]{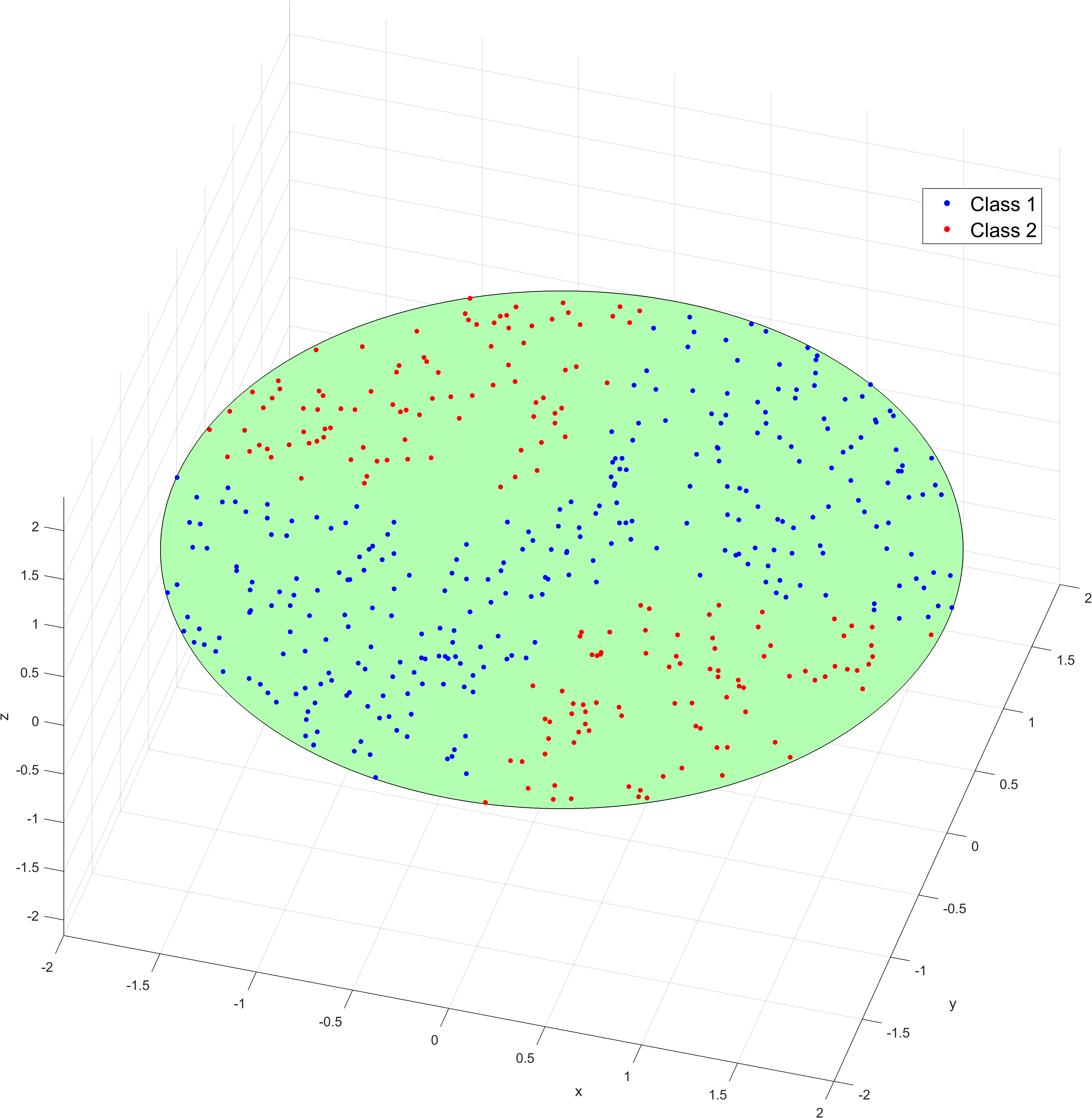}
  \caption{\textit{Binary Classification.} Samples from a data set consisting of two distinct classes of points. The data is contained in a ball $B^d_R$ and separated by a hypersurface (the union of the black curves).}
  \label{fig:Datapoints}
\end{figure}

\begin{figure}
\centering
\includegraphics[scale=0.4]{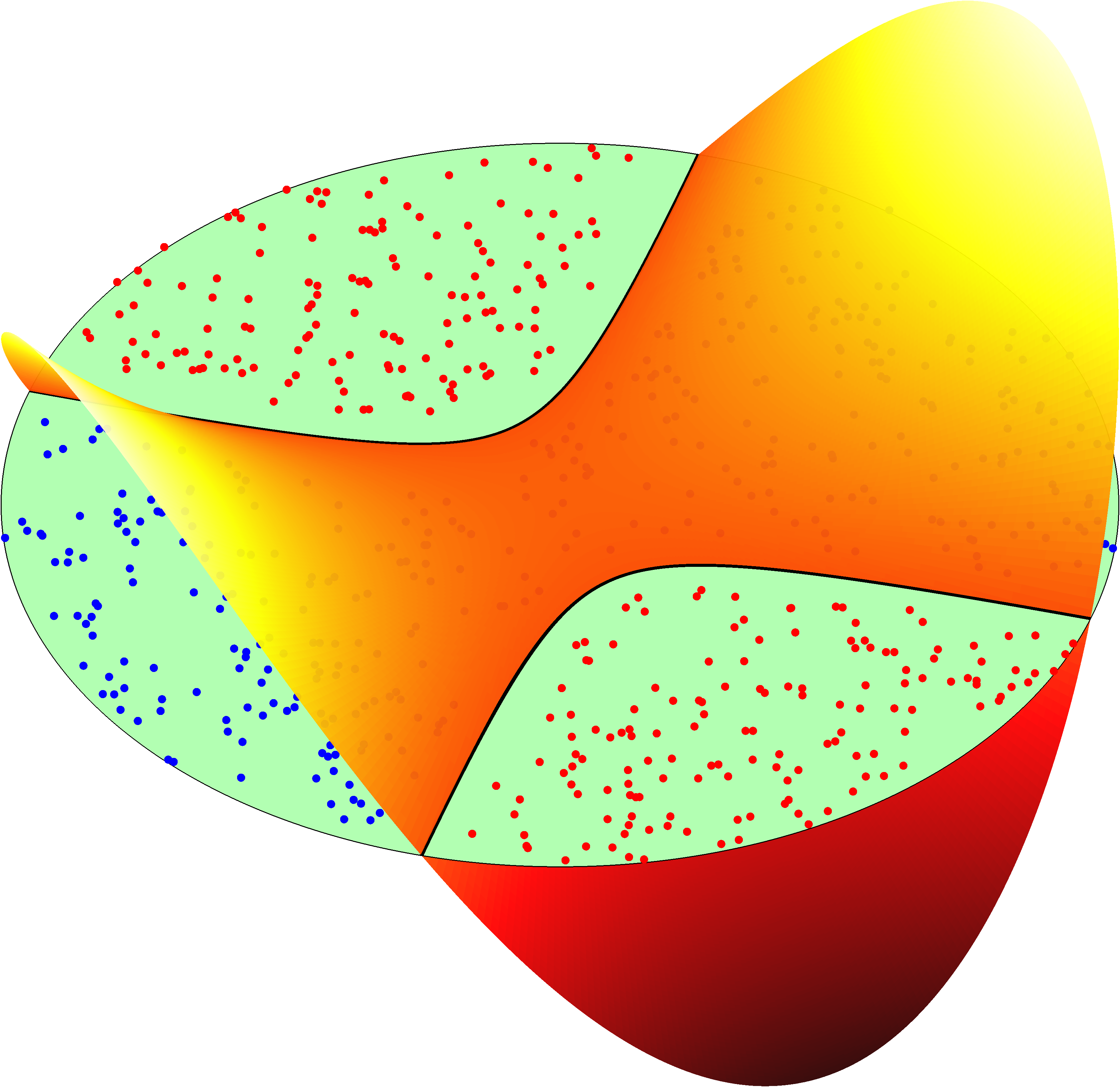}
\caption{\textit{Level-Set Function.} The separating hypersurface is represented as the zero-contour of a level-set function $\phi$. The graph of $\phi$ partitions $B^d_R\times \IR$ into two simply connected sets, each containing the embedded points of one class label.}
\label{fig:Gammaphi}
\end{figure}

\begin{figure}
  \centering
  \includegraphics[scale=0.95]{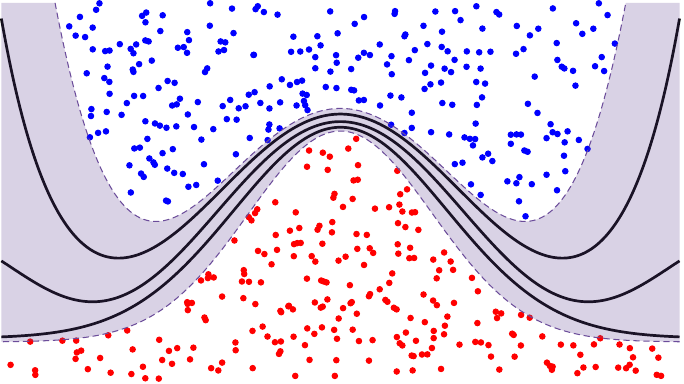}
  \caption{\textit{Separating Hypersurfaces.} Samples from a data set with two class labels and three possible separating hypersurfaces are depicted. These are contained in the region between the two disjoint sets of points. On parts of the domain where the two sets are close, the separating hypersurfaces are forced to approach each other in order to avoid intersecting the sets of points.}
  \label{fig:region}
\end{figure}

\section{Geometrical Structure of ReLU Layers}
\label{sec:structure}
By Definition~\ref{def:approx-surface} the network $F$, which implicitly approximates $\Gamma_{\phi}$, is constructed as a sequence of ReLU layers $T_k:\IR^{d+1} \rightarrow \IR^{d+1}_+$ of the form
\begin{align}
T_k(x)=\text{ReLU}(A_kx+b_k)
\label{eq:map_T}
\end{align}
where $A_k\in \IR^{(d+1)\times (d+1)}$ and $b_k\in \IR^{d+1}$, and the entire network $F:B^d_R\times \IR \to \IR$ can be written as
\begin{align}
F(x)=L\circ T_N\circ \hdots \circ T_1(x)
\label{eq:original_network}
\end{align}
for some $N$, where $L:\IR^{d+1}\to \IR$ is an affine function. In our earlier contribution \cite{vallin2023geometry}, we showed that maps on the form \eqref{eq:map_T} can be described geometrically as a projection onto a $(d+1)$-dimensional polyhedral cone followed by an affine transformation mapping the cone onto the non-negative orthant $\IR^{d+1}_+$. We will now give a brief summary of the description we provided there. 

\paragraph{Dual Vectors.}For now, we drop the indices on the ReLU layer in \eqref{eq:map_T} and just consider one single map $T(x)=\text{ReLU}(Ax+b)$. We define the index set $I=\{1,2,\hdots, d+1\}$ and we will assume that the matrix $A$ has full rank. Let $a_i\in \IR^{d+1}$ be the $i$:th row vector in $A$ and $b_i\in \IR$ the $i$:th component of $b$. Now we can define a set of dual vectors $\{a_i^*:i\in I\}$ by the equation
\begin{align}
a_j\cdot a_i^*=\delta_{ij}, \quad \text{for } i,j\in I
\label{eq:deltaij}
\end{align}
It follows that these dual vectors span $\IR^{d+1}$ since $A$ has full rank by assumption. Further, we also know that there is a point $x_0\in \IR^{d+1}$ which is the unique solution to the linear system of equations
\begin{align}
Ax_0+b=0
\label{eq:x_0}
\end{align}
\paragraph{Partition.} To understand the map $T$ geometrically we introduce a partition of $\IR^{d+1}$ using the dual vectors. Given two disjoint index subsets $I_+,I_-\subseteq I$ we define the set
\begin{align}
S_{(I_+,I_-)}=\{x\in \IR^{d+1}: x=x_0 + \sum_{i\in I_+} \alpha_i a^*_i - \sum_{i\in I_-} \alpha_i a^*_i,\alpha_i > 0\}
\label{eq:partition}
\end{align}
Every point in $\IR^{d+1}$ belongs to precisely one set $S_{(I_+,I_-)}$ for a suitable pair $(I_+,I_-)$. 
Thus, the family of sets of the form in \eqref{eq:partition} constitutes a partition of $\IR^{d+1}$. Note that the closure of the special set $S_{(I,\emptyset)}$ defines a $(d+1)$-dimensional polyhedral cone with apex at $x_0$. We will denote this cone simply by $S$
\begin{align}
S=\{x\in \IR^{d+1}: x=x_0 + \sum_{i\in I} \alpha_i a^*_i,\alpha_i\geq 0\}
\label{eq:cone}
\end{align}
\paragraph{Projection on Cones.} We showed in \cite{vallin2023geometry} that this polyhedral cone plays a central role in understanding the action of the map $T$ geometrically. More precisely, we can decompose the map as
\begin{align}
\boxed{
T=A_b \circ \pi
}
\label{eq:conical_decomposition}
\end{align}
where 
\begin{itemize}
\item $\pi:\IR^{d+1}\rightarrow S$ is a surjective map projecting $\IR^{d+1}$ onto the cone $S$,
\item $A_b:S\rightarrow \IR^{d+1}_+$ is a bijective affine transformation given by $A_b(x)=Ax+b$ mapping $S$ to $\IR^{d+1}_+$.
\end{itemize}
We call this decomposition of $T$ its \textit{conical decomposition}. Thus, apart from a translation, the nonlinearity of $T$ is fully captured by the projection $\pi$ which is piecewise defined on the sets $S_{(I_+,I_-)}$ in the following way. For a point $x\in S_{(I_+,I_-)}$ with expansion
\begin{align}
x=x_0+\sum_{i\in I_+}\lambda_i a_i^*-\sum_{i\in I_-}\lambda_i a_i^*
\end{align}
for some positive scalars $\lambda_i$, the projection is defined as 
\begin{align}
\pi(x)=\pi\bigg(x_0+\sum_{i\in I_+}\lambda_i a_i^*-\sum_{i\in I_-}\lambda_i a_i^* \bigg)=x_0+\sum_{i\in I_+}\lambda_i a_i^*\in S
\label{eq:projection_pi}
\end{align}
and by the equations \eqref{eq:deltaij}-\eqref{eq:x_0} we get
\begin{align}
T(x)=A_b\circ \pi(x)=\sum_{i\in I_+}\lambda_i e_i\in \IR^{d+1}_+ 
\end{align}
where $e_i$ is the $i$:th Euclidean basis vector. Thus, $\pi$ projects $S_{(I_+,I_-)}$ onto the set $S_{(I_+,\emptyset)} \subset S$ which is a $|I_+|$-dimensional face of the polyhedral cone and the projection is parallel to the subspace spanned by $\{a_i^*:i\in I_-\}$. Hence, all points $x\in \IR^{d+1}\setminus S$ outside the cone will be projected onto some part of its boundary, and points $x\in S$ in the cone will be left unaffected by $\pi$. Thus, $T$ reduces to an affine map on $S$. This gives a complete geometrical description of the action of the ReLU layer $T$. A polyhedral cone and the corresponding projection is depicted in Figure~\ref{fig:Cone-projection}.
\begin{figure} 
\centering
\includegraphics[scale=0.35]{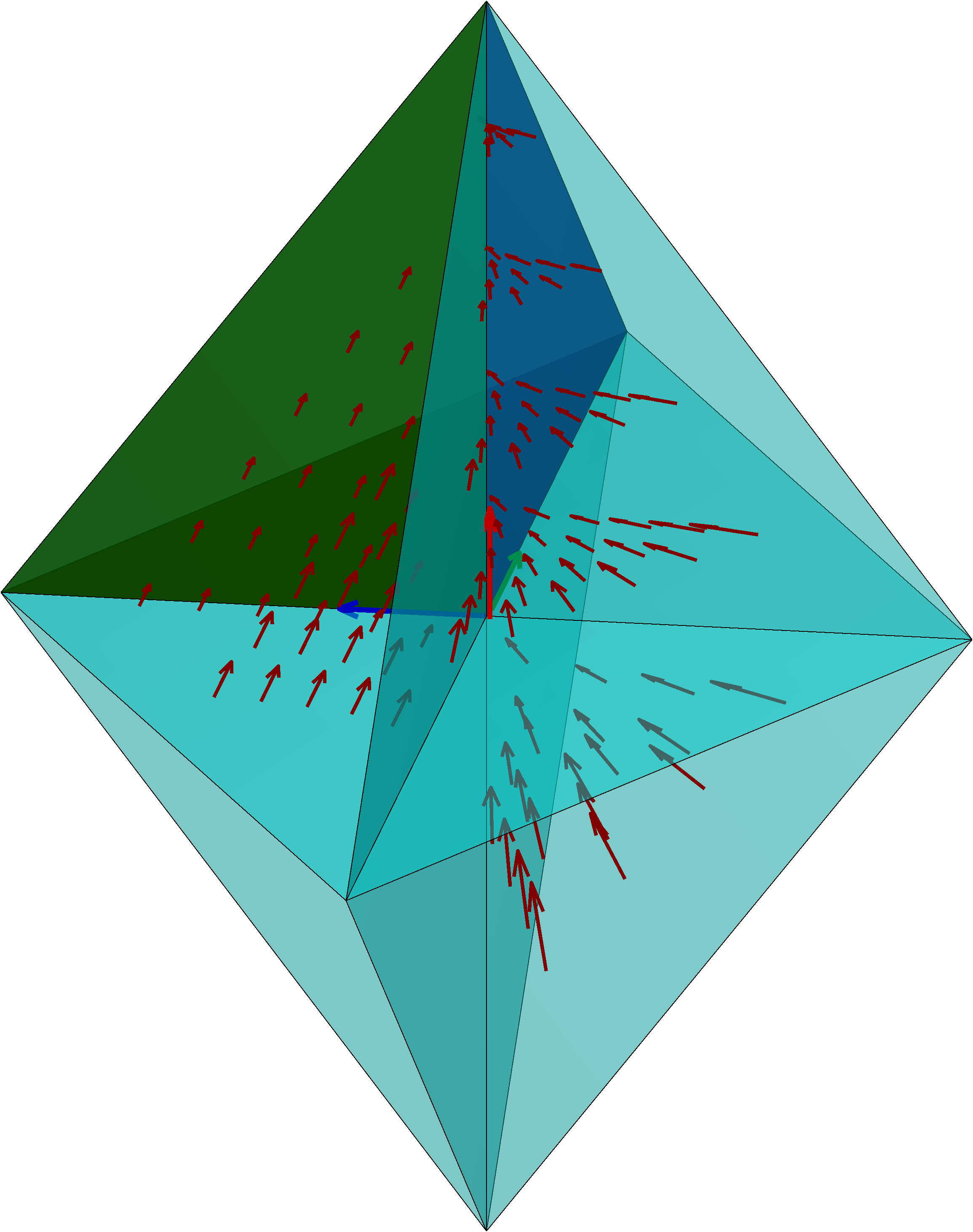}
\caption{\textit{Projection on a Polyhedral Cone.} Points are projected onto the polyhedral cone $S$ spanned by the dual vectors and with apex $x_0$. The dual vectors are parallel to the one dimensional edges of the cone, here depicted in blue, red and green. The projection is piecewise defined on the induced partition and the projection direction in each sector is parallel to a subset of the dual vectors. Projection directions in three different sets in the partition are illustrated.}
\label{fig:Cone-projection}
\end{figure}

\paragraph{Geometrical Interpretation.} The equations \eqref{eq:deltaij} and \eqref{eq:cone} give algebraic definitions of the dual vectors and the cone respectively. To see how these objects can be interpreted geometrically we can define the open half-spaces 
\begin{align}
\mathbf{U}_i=\{x\in \IR^{d+1}: a_i\cdot x+b_i \geq 0\}, \quad i\in I
\label{eq:halfspace}
\end{align}
with the corresponding hyperplanes 
\begin{align}
\mathbf{P}_i=\{x\in \IR^{d+1}: a_i\cdot x+b_i = 0\}, \quad i\in I
\label{eq:hyperplane}
\end{align}
as boundaries. Note that the row vector $a_i$ is a normal to $\mathbf{P}_i$. We can now express the cone as the intersection of these half-spaces
\begin{align}
S=\bigcap_{i\in I} \mathbf{U}_i
\end{align}
Thus, each hyperplane is tangent to one facet of the polyhedral cone. Moreover, if we define the lines
\begin{align}
L_j=\bigcap_{i\in I\setminus \{j\}} \mathbf{P}_i
\end{align}
it follows that the dual vector $a^*_j$ is parallel to the line $L_j$ and scaled such that $a_j\cdot a_j^*=1$, see Figure~\ref{fig:dual-basis}. Thus, the directions of the projection $\pi$ on each set $S_{(I_+,I_-)}$ is determined by the different lines of intersections (the 1-dimensional faces of $S$) in the hyperplane arrangement. 
\begin{figure} 
\centering
\includegraphics[scale=0.4]{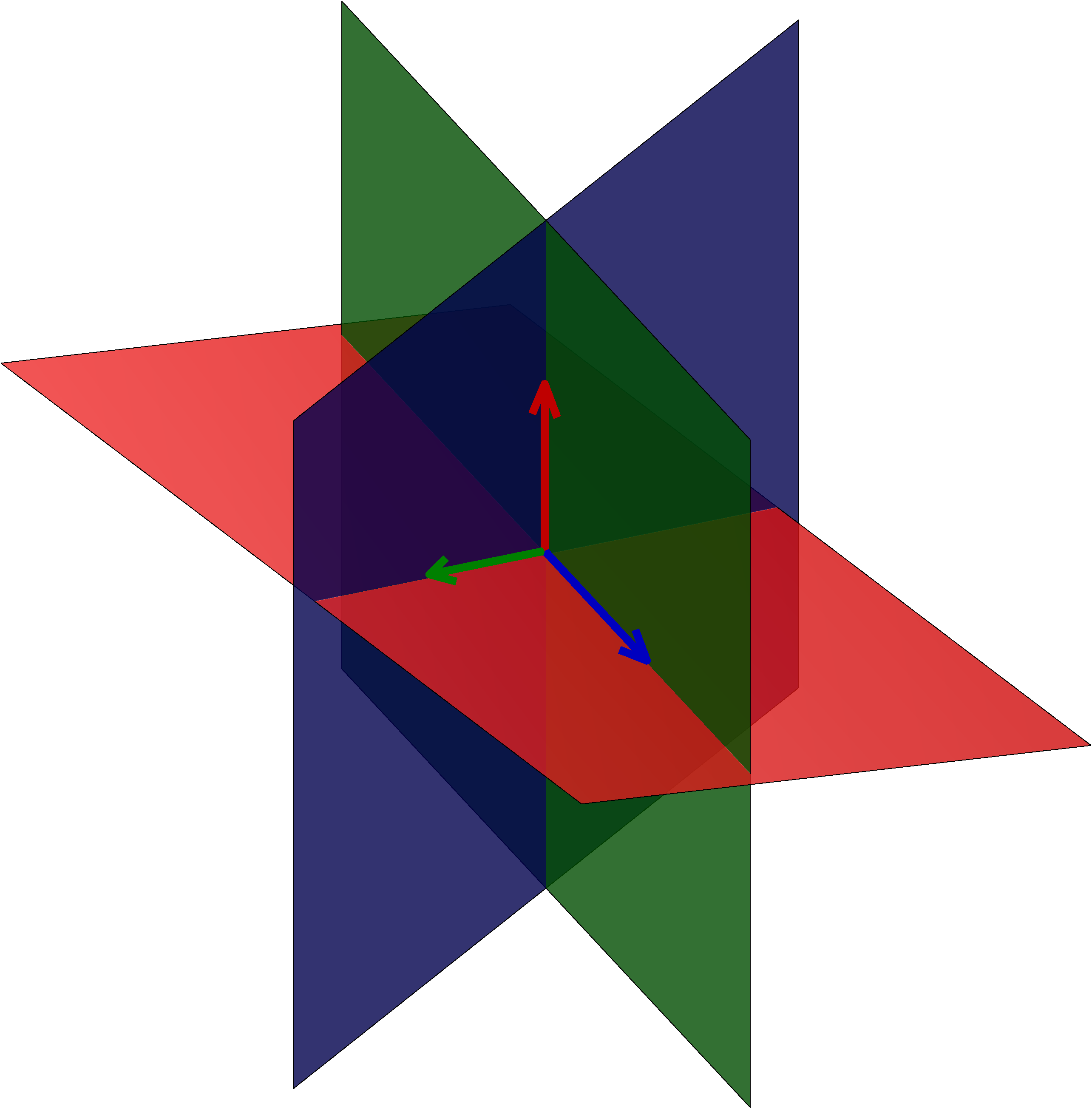}
\caption{\textit{Geometrical Construction.} Geometrically, the dual vectors are parallel to the lines of intersection of the hyperplanes defined by the parameters in the layer.}
\label{fig:dual-basis}
\end{figure}

\subsection{A Modified Network Architecture} 
\label{sec:modified-structure}

%\todo[inline]{Kanske skriv detta som ett lemma?}

With the conical decomposition \eqref{eq:conical_decomposition} in mind we will now introduce a modified network architecture where we retain the projection but disregard the affine transformation in each ReLU layer. Thus, we consider networks $\tilde{F}$ of the form
\begin{align}
\boxed{
\tilde{F}(x)=\tilde{L} \circ \pi_N\circ \pi_{N-1} \circ \hdots \circ \pi_1(x)
}
\label{eq:modified_network}
\end{align}
where $\pi_k:\IR^{d+1}\rightarrow S_k$ for $k\in \{1,2,\hdots, N\}$ are projections and $\tilde{L}:\IR^{d+1}\to \IR$ is an affine function.

 Here, each $S_k$ is a polyhedral cone  
\begin{align}
S_k=\{x\in \IR^{d+1}:x=x_{k,0}+\sum_{i\in I}\alpha_ia^*_{k,i}, \alpha_i\geq 0\}
\end{align}
defined by an apex $x_{k,0}\in \IR^{d+1}$ and a set of $d+1$ linearly independent dual vectors $a^*_{k,i}\in \IR^{d+1}$ for $i\in I$. Recall that the definition of such a projection is given by equation \eqref{eq:projection_pi} and it is piecewise defined on the corresponding partition \eqref{eq:partition} generated by the dual vectors in each layer. Since we have disregarded the affine maps in this network, each layer is completely determined by $x_{k,0}$ and $\{a_{k,i}^*:i\in I\}$.

\begin{rem} By construction, it is clear that scaling the dual vectors by a positive constant or permuting them does not change the cone. Hence, there are many different affine maps generating the same polyhedral cone.
\end{rem}

We introduce this network architecture since it will simplify the proof of the approximation result later. In fact, for every network $\tilde{F}$ of the form in \eqref{eq:modified_network} there is a fully-connected ReLU network $F$ (of the form defined in \eqref{eq:original_network}) computing the same function.
\begin{lem}[Equivalence of Architectures]
\label{lem:Equivalence of Architectures}
Given a network $\tilde{F}:\IR^{d+1}\to\IR$ as in \eqref{eq:modified_network}, then there is a fully-connected ReLU network $F:\IR^{d+1}\to\IR$ with the same number of layers such that
\begin{align}
\tilde{F}(x)=F(x), \quad \text{for all } x\in \IR^{d+1}
\end{align} 
\end{lem}
\begin{proof}
To see this, suppose $\tilde{F}$ is a network of the form in \eqref{eq:modified_network}. We want to find a fully-connected ReLU network $F$ of the form \eqref{eq:original_network}
such that $F(x)=\tilde{F}(x)$ for all $x\in \IR^{d+1}$. We start by letting $S_k$ denote the polyhedral cone corresponding to layer $\pi_k$ in \eqref{eq:modified_network}. For each $k\in \{1,2,\hdots,N\}$ let $A_{b,k}(x)$ be one of the possible affine maps generating $S_k$. Each such affine map is invertible since the vectors generating the cone are linearly independent (the cone is non-degenerate). Then, as inverses and compositions of affine maps are themselves affine maps we can define the fully-connected ReLU layer $T_k$ as
\begin{align}
T_k(x)=\text{ReLU}\big(A_{b,k} \circ A_{b,k-1}^{-1}(x)\big), \quad k=1,2,\hdots,N
\end{align}
with $A_{b,0}(x)=x$ (the identity map). Similar to the conical decomposition described in Section~\ref{sec:structure} we can decompose $T_k$ as
\begin{align}
T_k = A_{b,k}\circ \pi_k \circ A_{b,k-1}^{-1} 
\end{align}
We can interpret this decomposition as follows:
\begin{itemize}
\item First, $A_{b,k-1}^{-1}$ maps $\IR^{d+1}_+$ back onto the previous cone, namely $S_{k-1}$.
\item Then, $\pi_k$ projects $\IR^{d+1}$ onto the cone $S_k$ associated to the affine map $A_{b,k}$.
\item At last, $A_{b,k}$ maps $S_k$ onto $\IR^{d+1}_+$.
\end{itemize}
By construction, we then get that the composition of all $N$ ReLU layers reduces to
\begin{align}
T_N\circ T_{N-1}\circ \hdots \circ T_1(x)=A_{b,N}\circ \pi_N \circ \pi_{N-1}\circ \hdots \circ \pi_1(x)
\end{align}
Hence, if let $L(x)=\tilde{L}\circ A_{b,N}^{-1}(x)$ we can conclude that $F(x)=\tilde{F}(x)$ for all $x\in \IR^{d+1}$.
\end{proof}
Thus, if we can show that a network of the modified architecture can approximate $\Gamma_{\phi}$ by its decision boundary then the same result holds for a fully-connected ReLU network as well.

\subsection{Restrictions to Bounded Sets} 
\label{sec:compact_domains}
We will now show how the modified layers can be reduced to projections on hyperplanes when restricted to bounded sets. First, consider a projection $\pi$ and the set $S_{(I\setminus \{j\},\{j\})}$ for some $j\in I$. This set is an element in the partition of $\IR^{d+1}$ defined by \eqref{eq:partition}. It is the set satisfying $S_{(I\setminus \{j\},\{j\})}\subset \mathbf{U}_i$ for $i\neq j$ and $S_{(I\setminus \{j\},\{j\})} \subset \mathbf{U}_j^c$. Thus, the action of $\pi$ on this set is precisely
\begin{align}
\pi\bigg(S_{(I\setminus\{j\},\{j\})}\bigg)=S_{(I\setminus\{j\},\emptyset)}\subset \mathbf{P}_j 
\end{align}
as $\pi$ projects every point in $S_{(I\setminus\{j\},\{j\})}$ along $a_j^*$ onto the facet $S_{(I\setminus\{j\},\emptyset)}$ of the cone and this facet is also contained in the hyperplane $\mathbf{P}_j$ defined in \eqref{eq:hyperplane}.
Recall that the projection direction $a_j^*$ is parallel to the line $L_j$ given by the intersection of the set of the remaining hyperplanes when $\mathbf{P}_j$ is removed.
\begin{lem}[Projections on Hyperplanes]
Let $K\subset \IR^{d+1}$ be a bounded set. Given a closed half-space $\mathbf{U}$ with the hyperplane $\mathbf{P}$ as its boundary and a vector $\xi \in \IR^{d+1}$ not parallel to $\mathbf{P}$, we can construct a projection $\pi:\IR^{d+1} \to S$ defined as in \eqref{eq:projection_pi} such that $\pi$ projects $  \mathbf{U}^c \cap K$ on $\mathbf{P}$ along $\xi$ while acting as the identity on $ \mathbf{U} \cap K$.
\label{lemma:ReLU_on_compact}
\end{lem}
The proof of Lemma~\ref{lemma:ReLU_on_compact} can be found in Appendix~\ref{Appendix A}. Figure~\ref{fig:projection-bounded}(a) illustrates such a hyperplane projection and the idea is that we can always orient and shape the cone such that the two maps coincide on a bounded set, as in Figure~\ref{fig:projection-bounded}(b).
\begin{figure}
\centering
\begin{subfigure}[b]{0.3\textwidth}
\centering
\includegraphics[width=\textwidth]{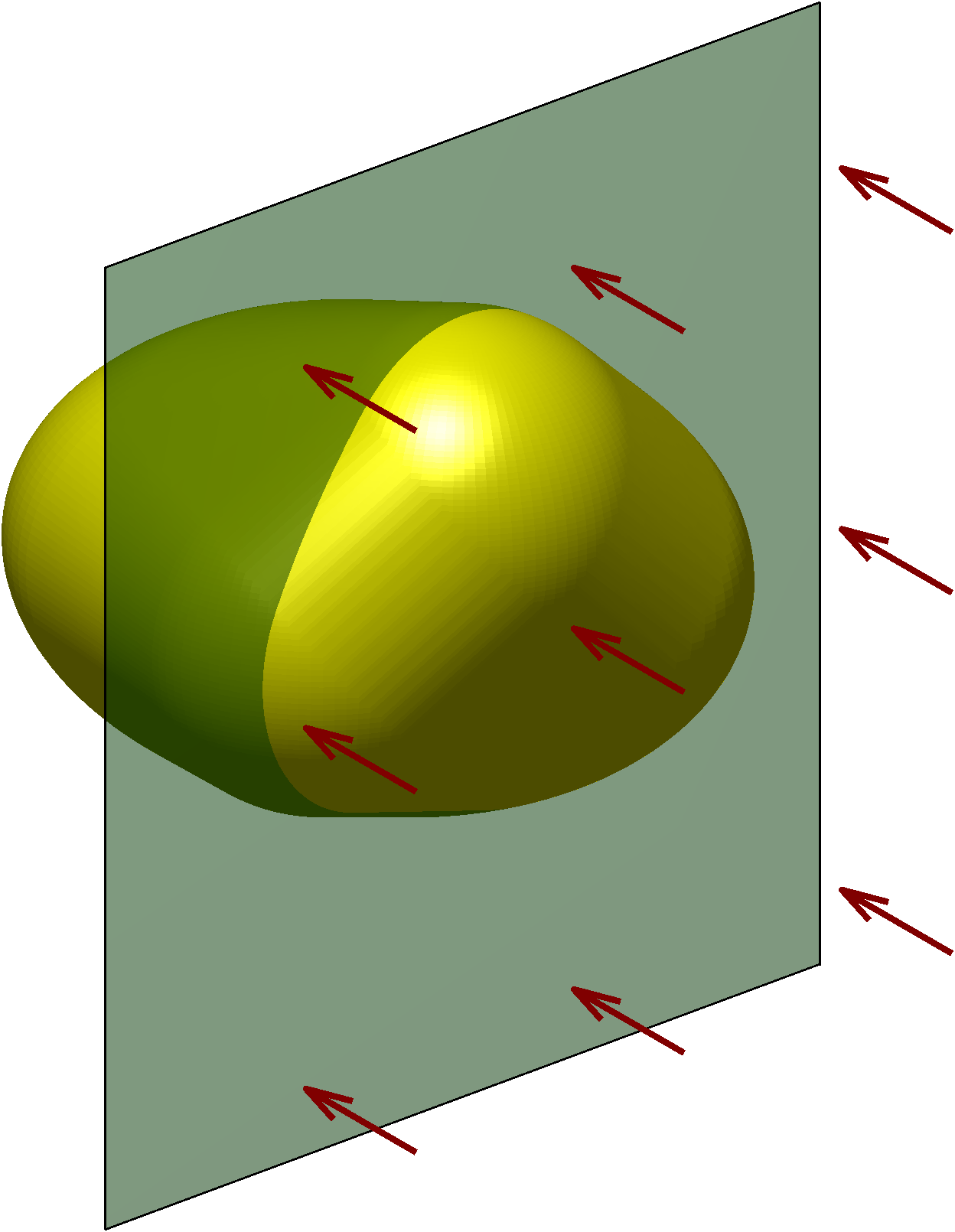}
\caption{}
\end{subfigure}
\hfill
\begin{subfigure}[b]{0.63\textwidth}
\centering
\includegraphics[width=\textwidth]{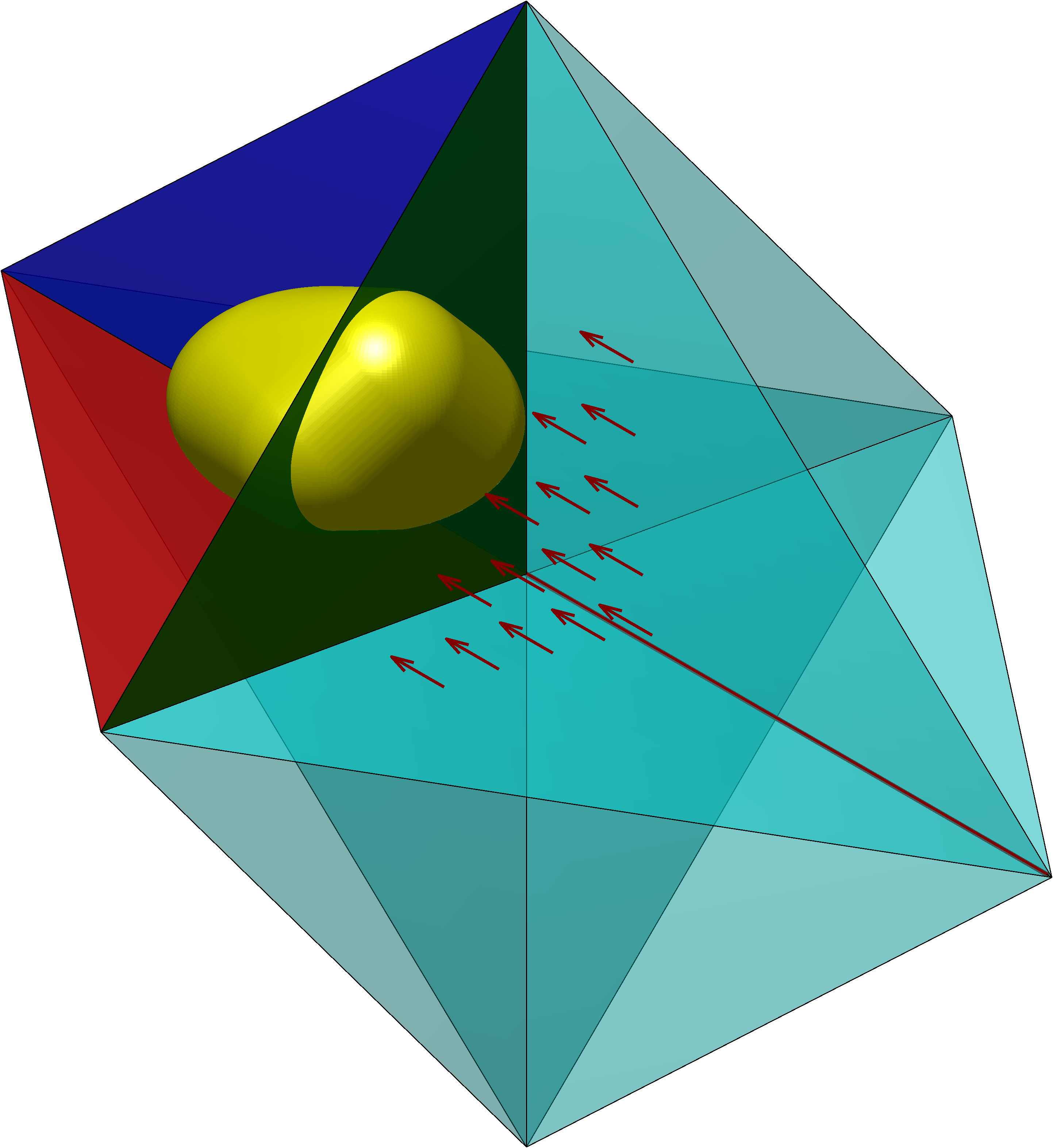}
\caption{ }
\end{subfigure}
\hfill
\caption{\textit{Projection on a Hyperplane.} \textbf{(a)} A map that projects one half-space onto a hyperplane along a given direction, whereas points in the complementary half-space are left unchanged. \textbf{(b)} A realization of the map depicted in \textbf{(a)} using a map $\pi$ in the conical decomposition of a ReLU layer. The depicted projection direction is parallel to the line of intersection shown in red.}
\label{fig:projection-bounded}
\end{figure}
As $B^d_R\times [-c,c]$ is bounded for any positive real number $c$ we can construct layers such that their actions, restricted to this set, can be described by a projection direction and a halfspace in accordance with Lemma~\ref{lemma:ReLU_on_compact}. Hence, we will assume $c$ is large enough such that all sets we will be considering is contained in $B^d_R\times [-c,c]$.

\section{Projecting the Graph}
\label{sec: projecting the graph}
\subsection{A Single Projection}
\label{sec: single projection}
The procedure will be to repeatedly project parts of $\Gamma_{\phi}$ by cutting off small pieces of the domain by the half-spaces associated to the layers. We will always choose these half-spaces orthogonal to $\IR^d$. Such an half-space $\mathbf{U}$ in $\IR^{d+1}$ can be decomposed as $\mathbf{U}=U\times \IR$ where $U$ is a half-space in $\IR^d$. Similarly, the boundary hyperplane $\mathbf{P}$ of $\mathbf{U}$ can then be written as $\mathbf{P}=P\times \IR$ where $P$ is the boundary hyperplane of $U$. We will continue to use bold symbols to denote the associated half-space/hyperplane in $\IR^{d+1}$ while the same symbols without the bold typesetting will refer to the corresponding half-space/hyperplane in $\IR^d$.

%\paragraph{\boldmath{$\mathbf{\varepsilon}$}-bands.}
We will now consider the image of $\Gamma_{\phi}$ by a single projection $\pi$. In this setting, $\mathbf{U}^c \cap \Gamma_{\phi}$ will be projected while $\mathbf{U} \cap \Gamma_{\phi}$ will be left unchanged when applying $\pi$. After the projection, the set $\mathbf{U}^c \cap \Gamma_{\phi}$ will lie in a neighborhood in $\mathbf{P}$ around the intersection $\mathbf{P}\cap \Gamma_{\phi}$ as depicted in Figure~\ref{fig:Projectedfirst}. These neighborhoods are formally described using the $\varepsilon$-band $\Sigma_\phi^\varepsilon(\omega)$, see Definition~\ref{def:eps_band}, which pads $\Gamma_\phi$ over a set $\omega$ with all points within a tolerance $\varepsilon$ in the $(d+1)$-direction.
Figure~\ref{proj_in_plane} illustrates the resulting $\varepsilon$-band in the projection-plane after the mapping depicted in Figure~\ref{fig:Projectedfirst}.

\begin{figure}
  \centering
  \begin{subfigure}[b]{0.45\textwidth}
  \centering
  \includegraphics[width=\textwidth]{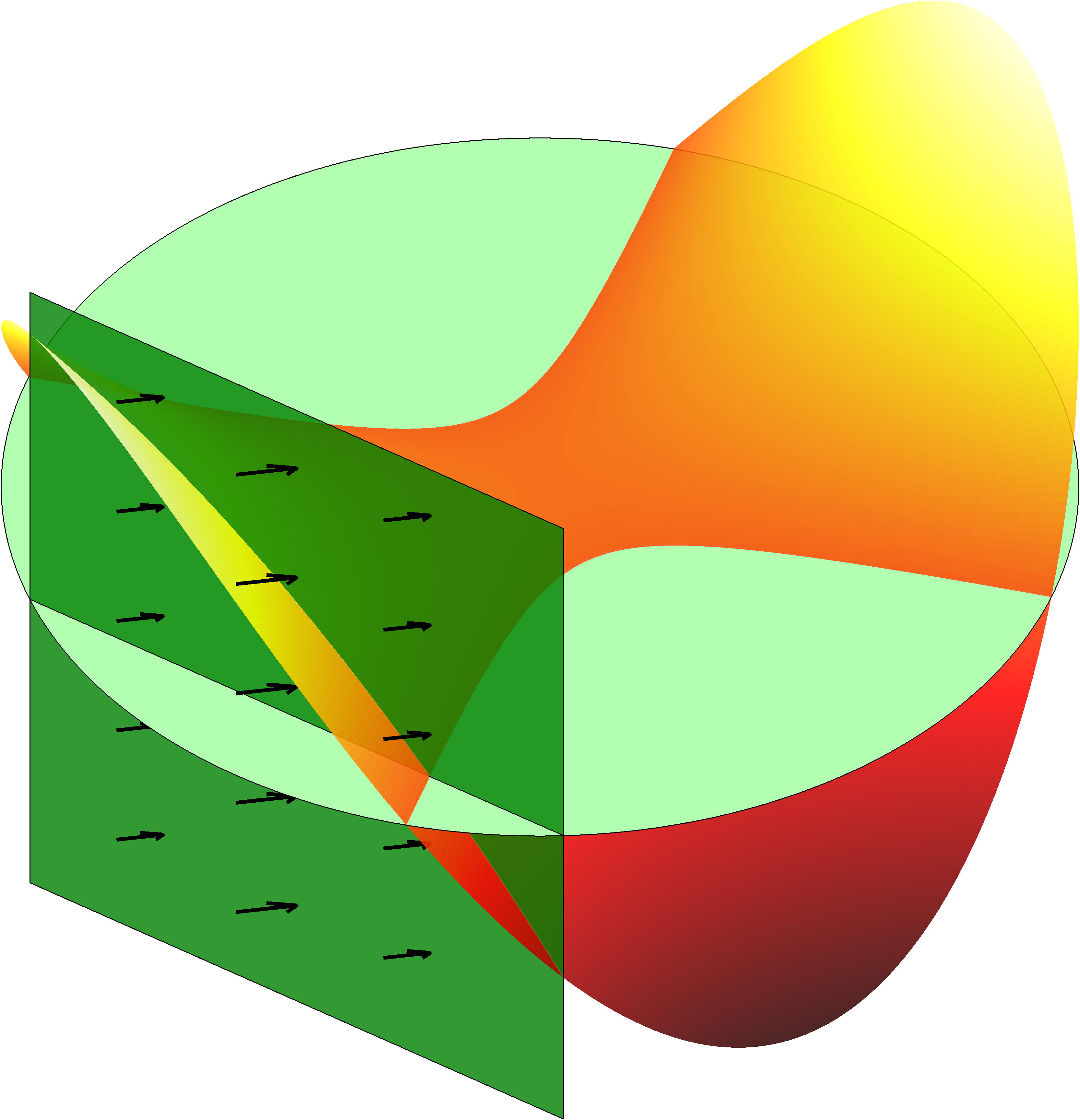}
  \caption{}
  \end{subfigure}
  \hfill
  \begin{subfigure}[b]{0.45\textwidth}
  \centering
  \includegraphics[width=\textwidth]{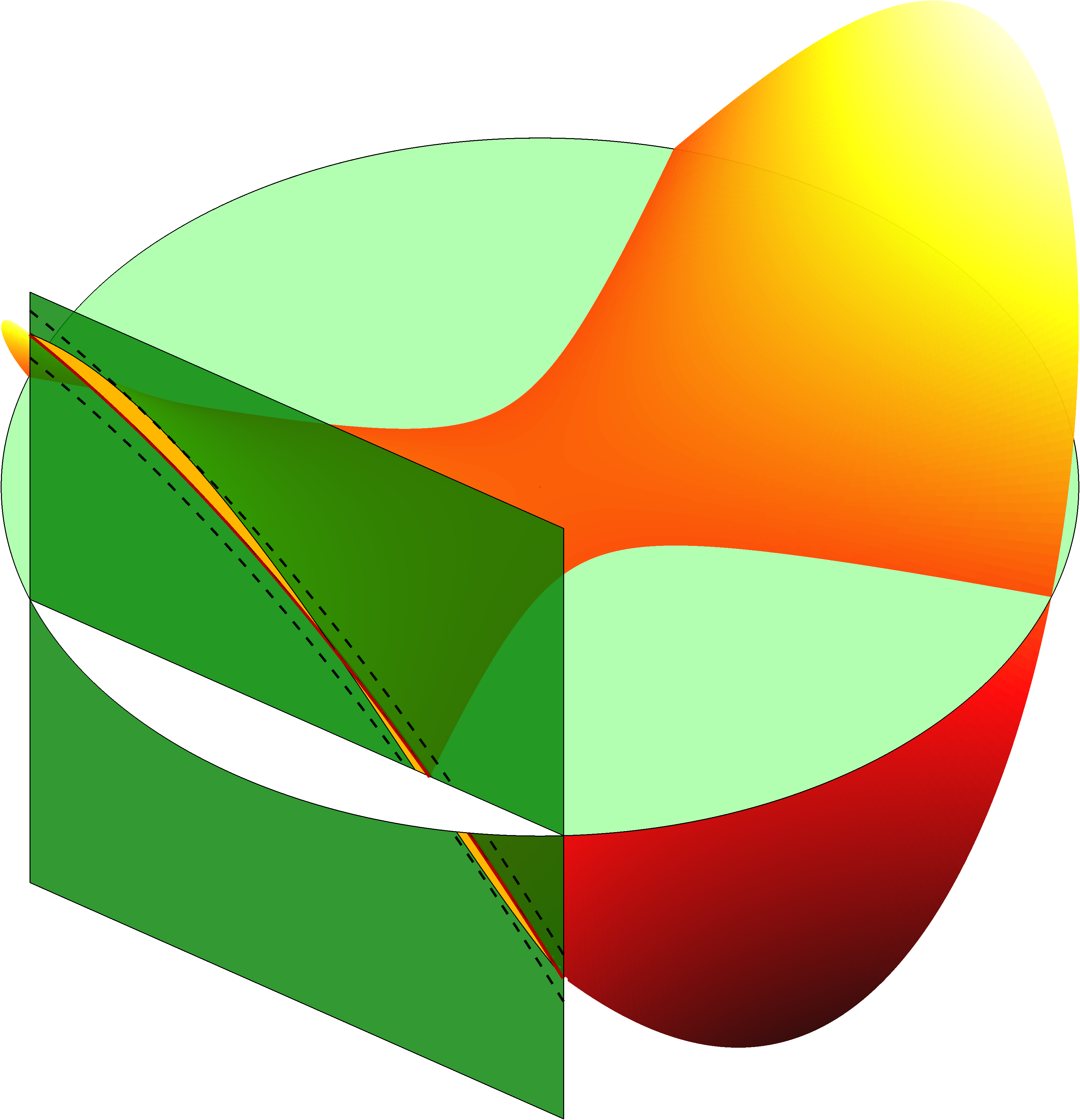}
  \caption{ }
  \end{subfigure}
  \hfill
  \caption{\textit{Projecting the Graph.} \textbf{(a)} A part of $\Gamma_{\phi}$ is projected on a hyperplane $\mathbf{P}$ along a direction $\xi$. \textbf{(b)} The projected part will lie in a neighborhood around the intersection of $\Gamma_{\phi}$ and $\mathbf{P}$.}
  \label{fig:Projectedfirst}
  \end{figure}

\begin{figure} 
\centering
\includegraphics[scale=0.45]{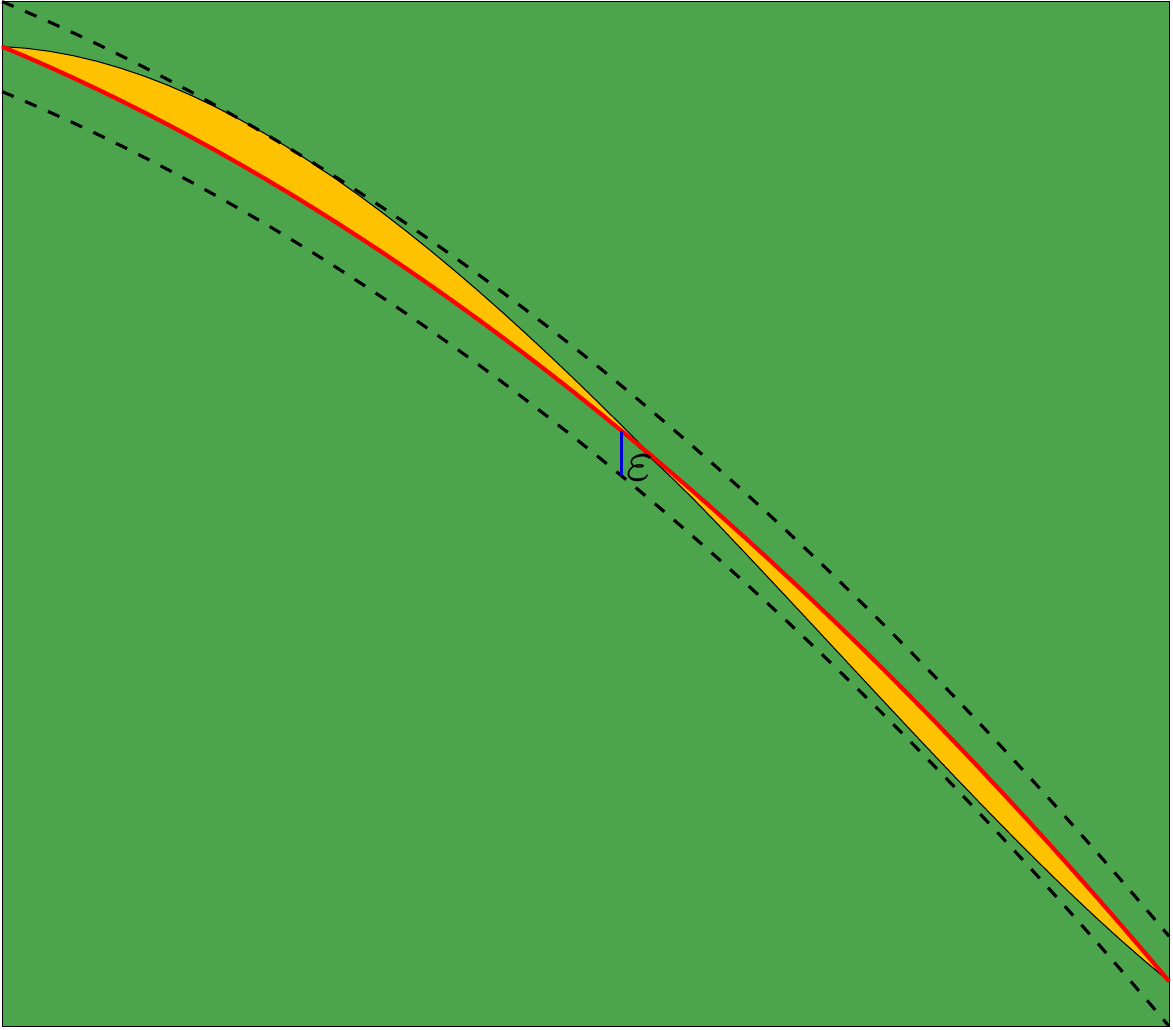}
\caption{\textit{$\varepsilon$-band.} An illustration of the $\varepsilon$-band seen in the hyperplane from Figure~\ref{fig:Projectedfirst}.}
\label{proj_in_plane}
\end{figure}
With this notation, we will have
\begin{align}
\begin{split}
&\pi(\mathbf{U}^c \cap \Gamma_{\phi})\subseteq \Sigma_{\phi}^{\varepsilon}(P)\\
&\pi(\mathbf{U} \cap \Gamma_{\phi}) = \mathbf{U} \cap \Gamma_{\phi}
\end{split}
\label{eq:im}
\end{align}
for some $\varepsilon \geq 0$. As we will repeatedly cut off the domain $B^d_R$ by half-spaces the image of $\Gamma_{\phi}$ after several projections will consist of one unaffected part as the graph of the restriction of $\phi$ to some subset $V\subseteq B^d_R$ and a projected part contained in an $\varepsilon$-band of $\phi$ over $\partial V$. The set $V$ is the set in $B^d_R$ contained in every half-space associated to the projections. Figure~\ref{fig:multi_projected} shows an example of the image of $\Gamma_{\phi}$ under a composition of a number of projections. 
\begin{figure} 
\centering
\includegraphics[scale=0.35]{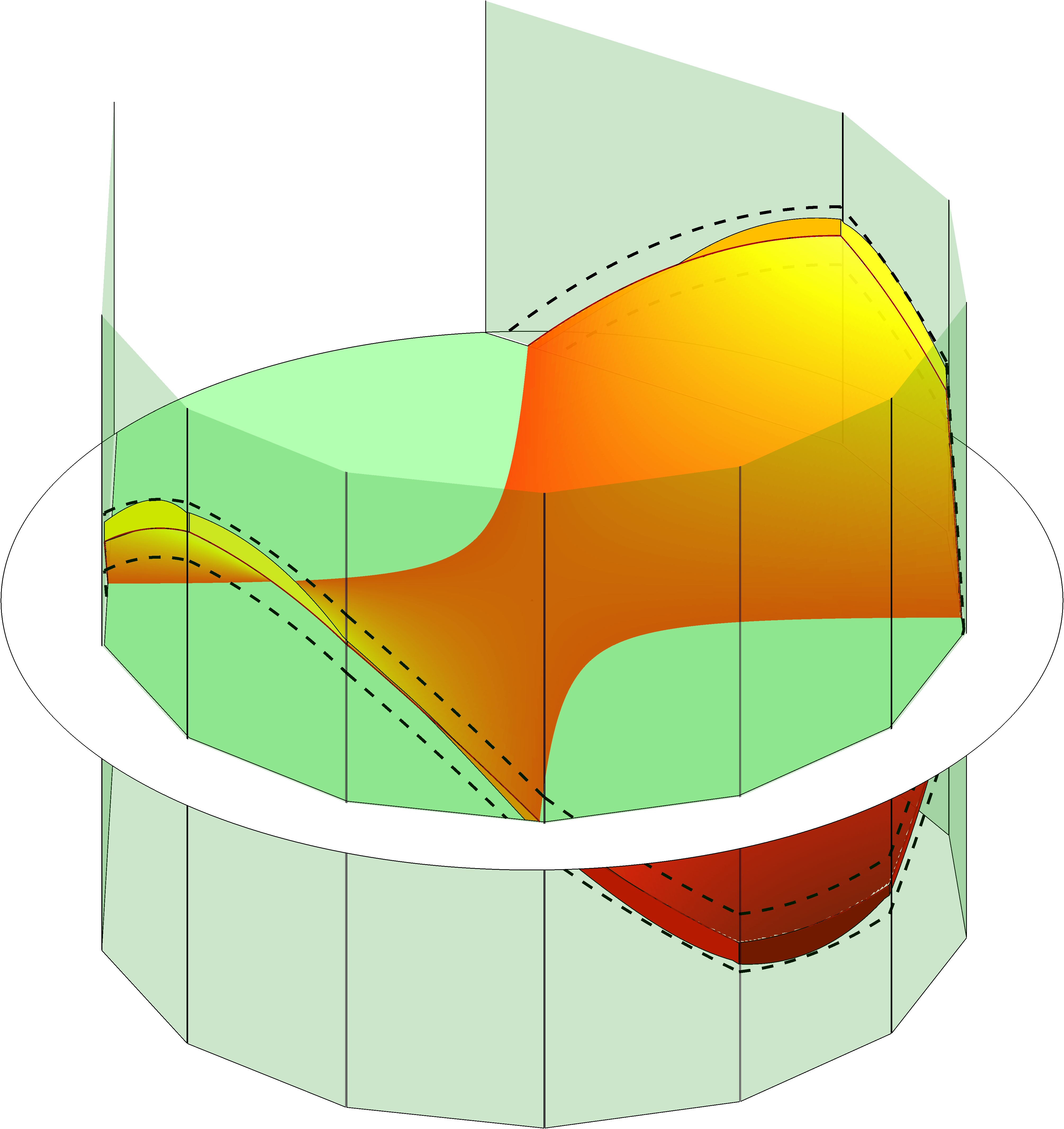}
\caption{\textit{Composition of Projections.} Illustration of the image of the graph after applying several projections. The part of $\Gamma_{\phi}$ inside every half-space will be unaffected. Meanwhile, the projected part has been mapped into an $\varepsilon$-band of $\phi$.}
\label{fig:multi_projected}
\end{figure}

To suppress the notation, given a set $V$ we let $\Gamma_{\phi}(V)$ denote the part of the graph of $\phi$ defined by
\begin{align}
\Gamma_{\phi}(V)=\{(x,\phi(x))\in \IR^{d+1}:x\in B^d_R \cap V\}
\end{align}
Consider the graph $\Gamma_{\phi}(\bar{B}^d_r)$ for some $0<r< R$. We would like to estimate the size of the resulting $\varepsilon$-band after one projection of $\Gamma_{\phi}(\bar{B}^d_r)$. Therefore, we will now take a closer look at how a point on $\Gamma_{\phi}(\bar{B}^d_r)$ is mapped for a special choice of projection direction $\xi$.

\paragraph{Mapping a Point.} Let $U,P \subset \IR^d $ be the half-space and boundary hyperplane associated to a layer $\pi$ respectively and let $\beta\in \IR^{d}$ be the inward pointing unit normal to $U$. Assume that $P$ is tangent to the ball $\bar{B}^d_{r-\delta}$ of radius $r-\delta$, for some $0<\delta<r$, concentric with $\bar{B}^d_r$. Then, $U^c \cap \bar{B}^d_r$ is a spherical cap of height $\delta$ and the base $P\cap \bar{B}^d_r$ is a $d-1$ dimensional ball of radius $\sqrt{2r\delta-\delta^2}$ with a center point denoted by $p$. The geometrical situations for $d=2$ and $d=3$ are depicted in Figure \ref{fig:P1_geom}.
\begin{figure} 
\centering
\begin{subfigure}[b]{0.38\textwidth}
\centering
\includegraphics[width=\textwidth]{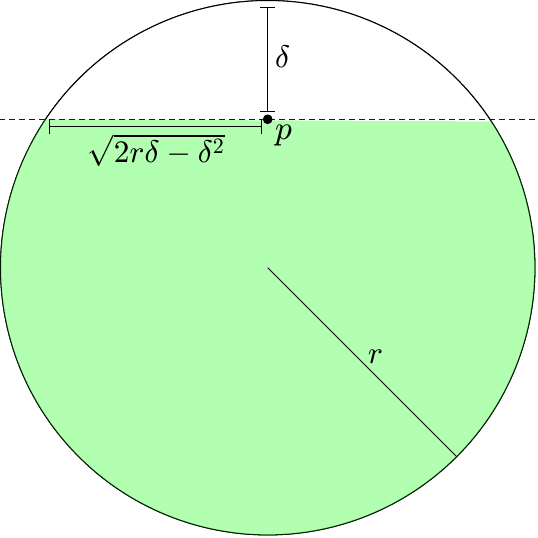}
\caption{ }
\end{subfigure}
\hfill
\begin{subfigure}[b]{0.61\textwidth}
\centering
\includegraphics[width=\textwidth]{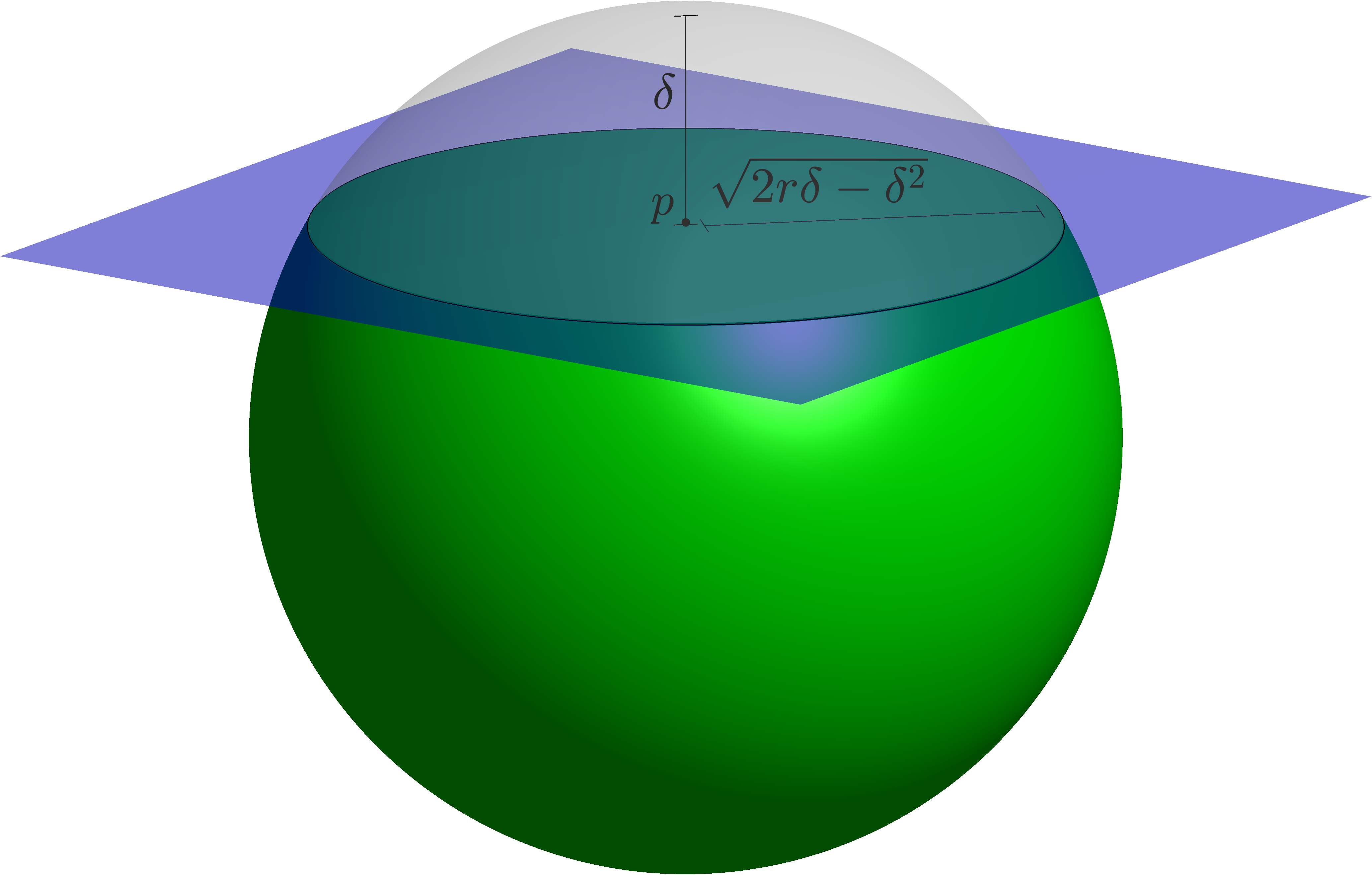}
\caption{ }
\end{subfigure}
\hfill
\caption{\textit{Spherical Caps.} The geometry of the spherical caps cut out as the intersection of the half-space $U^c$ and the ball $\bar{B}^d_r$ \textbf{(a)} for $d=2$ and \textbf{(b)} for $d=3$.}
\label{fig:P1_geom}
\end{figure}

Now, assume $x \in U^c \cap \bar{B}^d_r$ and let $y\in \IR$. Then, the point $(x,y)$ will be projected on $\mathbf{P}$ by $\pi$ along a direction $\xi$. Suppose now that we choose the projection direction as 
\begin{align}
\boxed{
\xi=(\beta,\nabla_{\beta}\phi(p))
}
\end{align}
where $\nabla_{\beta}\phi(p)=\beta \cdot \nabla\phi(p)$ is the directional derivative of $\phi$ at the center point $p$ with respect to the unit normal $\beta$. Then, if $t=\text{dist}(x,P)$ is the distance between $x$ and $P$ we can express the action of $\pi$ on this point as
\begin{align}
(x,y) \stackrel{\pi} \longmapsto (x+t\beta,y+t\nabla_{\beta}\phi(p))\in \mathbf{P}
\label{eq:map-point}
\end{align}
Since $\delta$ is the height of the spherical cap, we know that $t\leq \delta$. Figure \ref{geom} shows the geometrical setup in the 2-dimensional plane spanned by $\beta$ and the vector pointing to $x$.
\begin{figure} 
\centering
\includegraphics[scale=0.75]{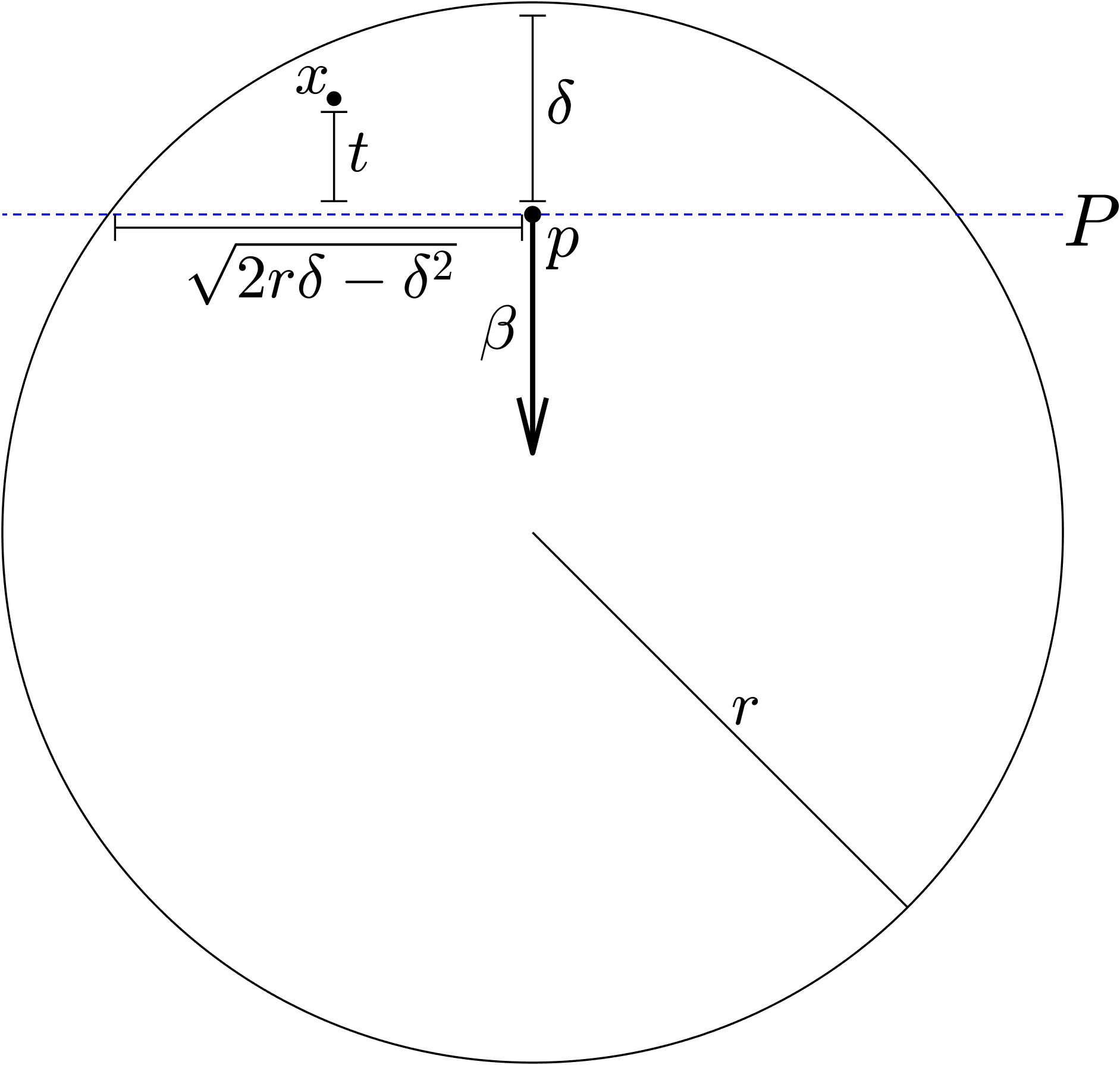}
\caption{\textit{Geometrical Setup.} A 2-dimensional slice of the sphere $\bar{B}^d_r$ illustrating how a point $x$, at a distance $t$ from the hyperplane $P$, is mapped along the unit normal $\beta$ to $P$.}
\label{geom}
\end{figure}
In particular, the point $(x,\phi(x))\in \Gamma_{\phi}$ will mapped according to
\begin{align}
(x,\phi(x)) \stackrel{\pi} \longmapsto (x+t\beta,\phi(x)+t\nabla_{\beta}\phi(p))\in \mathbf{P}
\label{eq:map-graph}
\end{align}
\paragraph{The Size of the \boldmath{$\mathbf{\varepsilon}$}-band.} Note that the first $d$ coordinates of the image in \eqref{eq:map-graph} is $x+t\beta\in P$ and the value of $\phi$ at this point is precisely $\phi(x+t\beta)$. Therefore we introduce the quantity
\begin{align}
|\phi(x)+t\nabla_{\beta}\phi(p)-\phi(x+t\beta)|
\label{eq:quantity}
\end{align}
measuring the offset between the projection of $(x,\phi(x))$ and the corresponding point $(x+t\beta,\phi(x+t\beta))$ on the graph $\Gamma_{\phi}$. Note that $t$ depends on $x$. If we would set
\begin{align}
\varepsilon=\sup_{x\in U^c\cap \bar{B}^d_r} |\phi(x)+t\nabla_{\beta}\phi(p)-\phi(x+t\beta)|
\label{eq:epsilon}
\end{align}
then $\pi(\mathbf{U}^c \cap \Gamma(\bar{B}^d_r))\subseteq \Sigma_{\phi}^\varepsilon(P\cap \bar{B}^d_r)$ as $\varepsilon$ is an upper bound on the offset between the graph over $P$ and the projected points. Therefore, we will now continue by estimating the quantity in \eqref{eq:quantity}. We have the following proposition.
\begin{prop}
Suppose $x\in U^c \cap \bar{B}_r^d$ and $t=\text{dist}(x,P)$. If $d>1$, then
\begin{align}
\label{eq:one_projection}
\boxed{
|\phi(x)+t\nabla_{\beta}\phi(p)-\phi(x+t\beta)|\leq C_1(d-1)\sqrt{r}\delta^{1/2}t
}
\end{align}
where $C_1$ is a constant independent of $\delta$, $d$ and $r$.
\label{proposition:one_projection}
\end{prop}
The proof of Proposition~\ref{proposition:one_projection} can be found in Appendix~\ref{Appendix B}. The idea is to use Taylor's theorem and the fact that the second order derivatives of $\phi$ are bounded by assumption. The following corollary follows directly from Proposition~\ref{proposition:one_projection}.
\begin{cor}
Let $\delta$ be the height of the spherical cap $U^c \cap \bar{B}_r^d$ whose base has center $p$ and let $\beta$ be the inward pointing unit normal to $U$. If $d>1$ and we choose the projection direction $\xi=(\beta,\nabla_{\beta}\phi(p))$ then
\begin{align}
\pi\big(\mathbf{U}^c\cap \Gamma_{\phi}(\bar{B}^d_r)\big)\subseteq \Sigma_{\phi}^\varepsilon(P)
\end{align}
where $\varepsilon=C_1(d-1)\sqrt{r}\delta^{3/2}$ for a constant $C_1$.
\label{cor:one_projection}
\end{cor}
\begin{proof}
From Proposition~\ref{proposition:one_projection} we know that the offset between the projection of a point $(x,\phi(x))$ where $x\in U^c \cap B^d_r$ and the corresponding point in the intersection $\mathbf{P} \cap \Gamma_{\phi}$ satisfies the inequality
\begin{align}
|\phi(x)+t\nabla_{\beta}\phi(p)-\phi(x+t\beta)|\leq C_1(d-1)\sqrt{r}\delta^{1/2}t
\end{align}
where $t$ is the distance between $x$ and $P$. Taking the supremum on both sides of the inequality over all $x$ in $U^c\cap \bar{B}^d_r$ we get an upper bound on the maximal offset of the projected points and the graph in the hyperplane. By noting that $t=\text{dist}(x,P)$ is the only factor dependent on $x$ on the right hand side of the inequality and $t\leq \delta$ for all $x\in U^c\cap \bar{B}^d_r$ we get 
\begin{align}
\sup_{x\in U^c\cap \bar{B}^d_r}| \phi(x)+t\nabla_{\beta}\phi(p)-\phi(x+t\beta)|\leq  C_1(d-1)\sqrt{r}\delta^{3/2}
\end{align}
and the corollary follows.
\end{proof}
Both Proposition~\ref{proposition:one_projection} and Corollary~\ref{cor:one_projection} hold as long as $d>1$. If $d=1$ we get a more favorable scaling of $\varepsilon$, see Remark~\ref{rem:d=1} in Appendix~\ref{Appendix B}. We will assume that $d>1$ henceforth.
\begin{rem}
Both Proposition~\ref{proposition:one_projection} and Corollary~\ref{cor:one_projection} concern the graph of $\phi$ restricted to closed balls of radius $r<R$. However, the same estimates hold (which is apparent in the proofs) when replacing $\bar{B}^d_r$ with $\BR$, i.e., the domain of $\phi$. This fact will also be needed later.
\end{rem}
\subsection{Projection on a Polytope}
\label{Sec:Projection on a Polytope}
In the previous section we saw the effect on $\Gamma_{\phi}(\bar{B}^d_r)$ after a single projection. Now, we will turn to the more general case when we apply a group of layers. To that end, let $\Pi:B^d_R\times \IR \rightarrow \IR^{d+1}$ be defined by the composition
\begin{align}
\boxed{
\Pi(x)=\pi_m\circ \hdots \circ \pi_1(x)
}
\label{eq:Pi}
\end{align}
for some $m$. As before, we denote by $U_i$, $P_i$ the associated half-space and hyperplane to layer $\pi_i$ respectively and we let $\beta_i$ be the inward pointing unit normal to $U_i$.

\paragraph{Construction.} We will define the layers such that all the corresponding hyperplanes are tangents to $\bar{B}^d_{r-\delta}$ for some $0<\delta <r$ and each half-space contains the origin. Thus, we can express them explicitly as
\begin{align}
\begin{split}
U_i&=\{x\in \IR^d:\beta_i\cdot x + (r-\delta)\geq 0\}\\
P_i&=\{x\in \IR^d:\beta_i\cdot x + (r-\delta)=0\}
\end{split}
\label{eq:hyperplanes_halfspaces}
\end{align}
Further, let $p_i$ be the center point of the base of the spherical cap $U_i^c \cap \bar{B}^d_r$ so that the projection direction $\xi_i$ of layer $\pi_i$ is given by $\xi_i=(\beta_i,\nabla_{\beta_i}\phi(p_i))$. Note that $p_i$ is precisely the point where $P_i$ touches $\bar{B}^d_{r-\delta}$. The intersection of these half-spaces will be a convex polytope $\mathcal{P}$ and we will assume it satisfies
\begin{align}
\boxed{
\bar{B}^d_{r-\delta}\subset \mathcal{P}\subset \bar{B}^d_r
}
\label{eq:inclusion}
\end{align}
The first inclusion always holds as each half-space contains $\bar{B}^d_{r-\delta}$ by construction, but for the second inclusion to hold we assume we have enough half-spaces and that they are distributed properly. Later, we will give a more detailed construction assuring the validity of the second inclusion as well, see Figure~\ref{fig:polytope}.
\begin{figure} 
\centering
\includegraphics[scale=0.75]{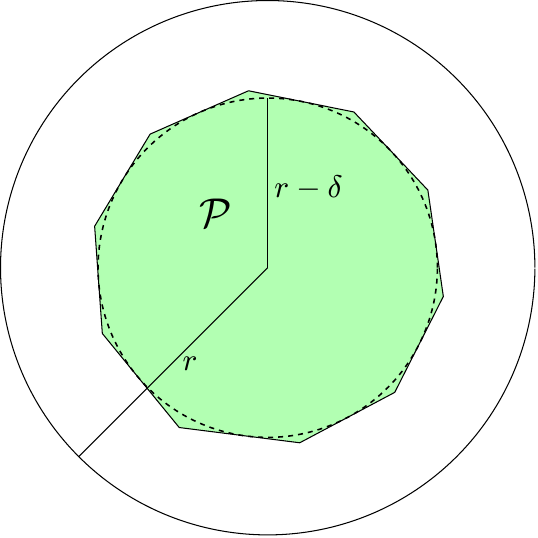}
\caption{\textit{Polytope.} The polytope $\mathcal{P}$ contains the ball $\bar{B}^d_{r-\delta}$ by construction and is assumed to be contained in $\bar{B}^d_r$.}
\label{fig:polytope}
\end{figure}

\paragraph{Mapping a Point.} In this setting a point $(x_0,\phi(x_0))\in \Gamma_{\phi}(\bar{B}^d_r)$, with $x_0\in \bar{B}^d_r \setminus \mathcal{P}$, may be projected several times. Therefore, we introduce the notation
\begin{align}
\begin{split}
(x_i,y_i)&=\pi_i(x_{i-1},y_{i-1}), \qquad \text{for} \quad i=1,2,\hdots,m\\
y_0&=\phi(x_0)
\end{split}
\label{eq:map_sequence}
\end{align}
so that $(x_m,y_m)=\Pi\big(x_0,\phi(x_0)\big)$. Then, the sequence $(x_i,y_i)_{i=0}^m$ describes the trajectory of the point $(x_0,\phi(x_0))$ in $\IR^{d+1}$ obtained by applying the layers in $\Pi$ successively. Note that this trajectory is in general dependent on the order of the layers in \eqref{eq:Pi}. If we now let 
\begin{align}
t_i=
\left\{
\begin{aligned}
&\mathrm{dist}(x_{i-1},P_i), \quad &&\text{if } x_{i-1}\in U^c_i
\\
&0, \quad &&\text{if } x_{i-1}\in U_i
\end{aligned}
\right.
\label{eq:def_ti}
\end{align}
then by \eqref{eq:map-point} we have
\begin{align} 
\boxed{
x_i=x_{i-1}+t_i\beta_i
}
\label{eq:map-point_2}
\end{align}
Thus, if $x_{i-1}\in U_i$ then $t_i=0$ so $x_i=x_{i-1}$. Otherwise, $t_i> 0$ and then $x_i=x_{i-1}+t_i\beta_i \in P_i$. The sequence $(x_i)_{i=0}^m$ defines a piecewise linear path in $\IR^d$ and its total length is given by
\begin{align}
\label{eq:S_n}
S_m=\sum_{i=1}^m t_i
\end{align}
An example of such a path is illustrated in Figure~\ref{fig:point-sequence}.
\begin{figure} 
\centering
\includegraphics[scale=0.75]{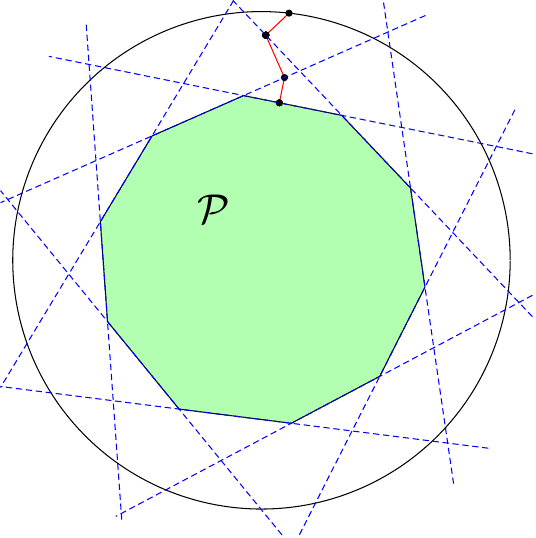}
\caption{\textit{Trajectory of a Point.} When we apply several layers, a single point may be projected several times. The figure illustrates a trajectory obtained by the dynamics governed by equation \eqref{eq:map-point_2} for a point starting at the boundary of $\bar{B}^d_r$. The path generally depends on the order in which the layers are applied.}
\label{fig:point-sequence}
\end{figure}

By \eqref{eq:inclusion} we can write  $\Gamma_{\phi}(\bar{B}^d_r)=\Gamma_{\phi}(\bar{B}^d_r \setminus \mathcal{P})\cup \Gamma_{\phi}(\mathcal{P})$ and as $\Pi$ is the identity on $\Gamma_{\phi}(\mathcal{P})$, by the definition of $\mathcal{P}$, we have $\Pi\big( \Gamma_{\phi}(\mathcal{P})\big)=\Gamma_{\phi}(\mathcal{P})$. Now, we would like to show that the projected part of the graph lies within an $\varepsilon$-band of $\phi$ over $\partial \mathcal{P}$ (recall Definition~\ref{def:eps_band}), i.e.,
\begin{align}
\Pi\big( \Gamma_{\phi}(\bar{B}^d_r\setminus \mathcal{P})\big)\subseteq \Sigma_{\phi}^\varepsilon(\partial \mathcal{P})
\label{eq:inclusion_polytope}
\end{align}
and also estimate the size of $\varepsilon$. We do this in two steps. 
\begin{enumerate}
\item We derive a condition on $\delta$ guaranteeing that $x_m\in \partial \mathcal{P}$. This will assure that a point $(x_0,\phi(x_0))$ with $x_0\in \bar{B}^d_r\setminus \mathcal{P}$ is mapped to a point in the set $\Sigma_{\phi}^\varepsilon(\partial \mathcal{P}))$ for some $\varepsilon\geq 0$.
\item We continue by deriving a uniform bound on $\varepsilon$, independent of the starting point $x_0$, such that the inclusion in \eqref{eq:inclusion_polytope} holds. We will do this by estimating $|y_m-\phi(x_m)|$ as this quantity measures the offset between the graph over $\partial \mathcal{P}$ and the image of the point $(x_0,\phi(x_0))$ under $\Pi$.
\end{enumerate}

%\paragraph{Condition on \boldmath{$\mathbf{\delta}$}.}
We will start by deriving a constraint on $\delta$ in terms of $r$ guaranteeing that $x_m\in \partial \mathcal{P}$. In doing so, we will need the following lemma.
\begin{lem}
\label{lem:PjPi_intersect}
If two hyperplanes $P_j$ and $P_i$ in \eqref{eq:hyperplanes_halfspaces} intersect in $\bar{B}^d_r$ then
\begin{align}
\beta_j\cdot \beta_i\geq \frac{r^2-4r\delta+2\delta^2}{r}
\label{eq:betajbetai_2}
\end{align}
\end{lem}
The proof can be found in Appendix~\ref{Appendix B}. We are now ready to prove the following lemma giving a constraint on $\delta$.
\begin{lem}[\boldmath{$\mathbf{\delta}$}-Condition]
\label{lemma:condition_delta}
If $\delta$ satisfies the inequality
\begin{align}
\boxed{
0<\delta\leq r\left(1-\frac{1}{\sqrt{2}}\right)
}
\label{eq:condition_delta}
\end{align}
then $x_m\in \partial \mathcal{P}$.
\end{lem}
\begin{proof}
Consider the sequence $(x_i)_{i=0}^m$ defined by \eqref{eq:map-point_2} where $x_0\in \bar{B}^d_r\setminus \mathcal{P}$. Since $x_0\notin \mathcal{P}$ it is clear that $x_i$ cannot be contained in the interior $\mathcal{P}^\circ$ for any $1\leq i\leq m$ because every time some point in the sequence is projected, it will be projected onto a supporting hyperplane to $\mathcal{P}$. If we can show that
\begin{align}
x_i\in \bigcap_{j=1}^i U_j
\label{eq:inclusion_subpoly}
\end{align}
for all $1\leq i \leq m$, then in particular $x_m\in \mathcal{P}$ and hence $x_m\in \partial \mathcal{P}$. 

We will now show, by induction on $i$, that if $\delta$ fulfills \eqref{eq:condition_delta} then \eqref{eq:inclusion_subpoly} holds. Clearly, $x_1\in U_1$ by \eqref{eq:map-point_2}. Suppose now that $x_{i-1}\in \bigcap_{j=1}^{i-1} U_j$. Again, by \eqref{eq:map-point_2} we have that $x_{i}\in  U_{i}$, but we also need $x_{i}\in U_j$ for every $1\leq j \leq i-1$. 
\begin{itemize}
\item If $t_{i}=0$ then $x_{i}=x_{i-1}$, and in this case we can directly conclude that \eqref{eq:inclusion_subpoly} holds by the hypothesis.
\item If instead $t_{i}>0$, then $x_{i}\in P_{i}\cap \bar{B}^d_r$. Now, let  $1\leq j\leq i-1$ and consider the case when $(P_i\cap \bar{B}^d_r) \subset U_j$. Then, we can directly conclude that $x_{i}\in U_j$. However, we will have to investigate the case when $(P_{i}\cap \bar{B}^d_r) \not \subset U_j$. 

This situation arises when the two hyperplanes $P_{j}$ and $P_i$ intersect inside $\bar{B}^d_r$. Thus, we need to show that $x_i\in U_j$, i.e., $\beta_j\cdot x_{i}+(r-\delta)\geq 0$ whenever $P_j$ and $P_i$ intersect inside $\bar{B}^d_r$. Using \eqref{eq:map-point_2} we can rewrite this condition as
\begin{align}
\beta_j\cdot x_{i-1}+(r-\delta) + t_{i}\beta_j\cdot \beta_{i}\ \geq 0
\end{align}
but since $x_{i-1}\in U_j$ by assumption we have that $\beta_j\cdot x_{i-1}+(r-\delta)\geq 0$. As $t_i>0$ in the case we are considering, a sufficient condition is to ensure that 
\begin{align}
\beta_j\cdot \beta_i\geq 0
\label{eq:betajbetai} 
\end{align}
By Lemma~\ref{lem:PjPi_intersect}, we know that $\beta_j\cdot \beta_i\geq \frac{r^2-4r\delta+2\delta^2}{r}$. Now, if $\delta$ satisfies \eqref{eq:condition_delta} then it follows that $r^2-4\delta r + 2\delta^2\geq 0$. Hence, this condition on $\delta$ guarantees the validity of \eqref{eq:betajbetai} which in turn assures the relation in \eqref{eq:inclusion_subpoly}. 
\end{itemize}
Then it follows by induction that $x_m\in\partial \mathcal{P}$ as desired.
\end{proof}
Under the restriction provided in Lemma~\ref{lemma:condition_delta} all points $(x_0,\phi(x_0))\in \Gamma_{\phi}(\bar{B}^d_r\setminus \mathcal{P})$ will be mapped to $\Sigma_{\phi}^\varepsilon(\partial \mathcal{P}))$ as long as $\varepsilon$ is sufficiently large.

\paragraph{The Size of the \boldmath{$\mathbf{\varepsilon}$}-band.}
We proceed by deriving a bound on the quantity $|y_m-\phi(x_m)|$ describing how much the last value $y_m$ deviates from the function value at the endpoint $x_m\in \partial \mathcal{P}$. This will help us estimate the size of the resulting $\varepsilon$-band. Even though we cannot directly apply Proposition~\ref{proposition:one_projection}, as it is only valid for a single projection, it will be useful in proving the following lemma.
\begin{lem} The following inequality holds
\label{lemma:mulit_proj}
\begin{align} 
|y_m-\phi(x_m)|\leq C_1(d-1)\sqrt{r}\delta^{1/2}S_m
\label{eq:bound_path}
\end{align}
where $S_m$ is the length of the path defined by the sequence $(x_i)_{i=0}^m$.
\end{lem}
Lemma~\ref{lemma:mulit_proj} shows that the deviation between the graph and the image of a point under the projections depends on the total path length. In the proof, found in Appendix~\ref{Appendix B}, Proposition~\ref{proposition:one_projection} is repeatedly applied. We will now continue by estimating the path length. First, we have the following result.
\begin{lem} Let $t_i$ be defined as in \eqref{eq:def_ti}, then
\begin{align}
\label{eq:bound_ti}
t_i\leq \frac{\|x_{i-1}\|^2-\|x_i\|^2}{2(r-\delta)}
\end{align}
\label{lemma:bound_ti}
\end{lem}
The proof is based on a geometrical argument and can be found in Appendix~\ref{Appendix B}. Using Lemma \ref{lemma:bound_ti}, we can easily bound the sum $S_m$ in \eqref{eq:S_n} and thus give an upper bound on $|y_m-\phi(x_m)|$.
\begin{lem} If $0<\delta\leq r\left(1-\frac{1}{\sqrt{2}}\right)$ then
\label{lemma:ym}
\begin{align}
|y_m-\phi(x_m)|\leq C_3(d-1)\sqrt{r}\delta^{3/2}
\label{eq:ym}
\end{align}
\label{lemam:pathlength}
for a constant $C_3$ independent of $r$, $\delta$ and $d$.
\end{lem}
\begin{proof}
From Lemma \ref{lemma:bound_ti} we get
\begin{align}
S_m=\sum_{i=1}^m t_i\leq \frac{1}{2(r-\delta)}\sum_{i=1}^m \bigg(\|x_{i-1}\|^2-\|x_i\|^2\bigg)=\frac{\|x_{0}\|^2-\|x_m\|^2}{2(r-\delta)}
\label{eq:bound_Sm}
\end{align}
but we know that $x_0\in \bar{B}^d_r$ so $\|x_0\|\leq r$. Moreover, $x_m\in \partial \mathcal{P}$ and since $\bar{B}_{r-\delta}\subset \mathcal{P}$, we have that $\|x_m\|\geq r-\delta$. Hence, we arrive at
\begin{align}
S_m=\sum_{i=1}^m t_i\leq \frac{r^2-(r-\delta)^2}{2(r-\delta)}
\label{eq:S_m-bound}
\end{align}
For values of $\delta$ close to $r$ the right hand side of the inequality above gets very large. On the other hand, we know that $\delta$ must be chosen such that it satisfies the constraint in \eqref{eq:condition_delta} to ensure $x_m\in \partial\mathcal{P}$. It is easily seen that for values of $\delta$ in this range, we have
\begin{align}
\frac{r^2-(r-\delta)^2}{2(r-\delta)}\leq \frac{1+\sqrt{2}}{2} \delta
\label{eq:ineq_C_2}
\end{align}
Together with \eqref{eq:S_m-bound} we get bound
\begin{align}
S_m=\sum_{i=1}^mt_i\leq C_2 \delta
\label{eq:bound_pathlength}
\end{align}
where $C_2=\frac{1+\sqrt{2}}{2}$. Then, by Lemma~\ref{lemma:mulit_proj} we arrive at
\begin{align}
|y_m-\phi(x_m)|\leq C_3(d-1)\sqrt{r}\delta^{3/2}
\end{align}
with $C_3=C_2C_1$.
\end{proof}
We summarize the findings of this section in the following proposition characterizing the image of $\Gamma_{\phi}(\bar{B}^d_r)$ under the map $\Pi$, also illustrated in Figure~\ref{fig:Projection-first-poly}.

\begin{prop}[Image of the Graph] Let $\Pi:B^d_R\times \IR\rightarrow \IR^{d+1}$ be defined by $\Pi=\pi_m \circ \hdots \circ \pi_1$ where the corresponding hyperplanes to the layers are all tangent to $\bar{B}^d_{r-\delta}$ for some $\delta$ satisfying $0<\delta\leq r\left(1-\frac{1}{\sqrt{2}}\right)$. Suppose the convex polytope $\mathcal{P}$, obtained as the intersection of the half-spaces associated to the layers, satisfies $\bar{B}^d_{r-\delta}\subset \mathcal{P}\subset \bar{B}^d_r$. Then, our choice of the projection directions for each layer in $\Pi$ yields
\label{proposition:multi_projections}
\begin{align}
\begin{split}
\boxed{
\begin{array}{c}
\Pi\big( \Gamma_{\phi}(\bar{B}^d_r\setminus \mathcal{P})\big)\subseteq \Sigma_{\phi}^\varepsilon(\partial \mathcal{P}) \\
\Pi\big(\Gamma_{\phi}(\mathcal{P}) \big)=\Gamma_{\phi}(\mathcal{P})
\end{array}
}
\end{split}
\end{align}
where $\varepsilon= C_3(d-1)\sqrt{r}\delta^{3/2}$ for a constant $C_3$ independent of $r$, $\delta$ and $d$. 
\end{prop}
\begin{proof}
By the definition of $\mathcal{P}$ it is an immediate consequence that
\begin{align}
\Pi\big(\Gamma_{\phi}(\mathcal{P}) \big)=\Gamma_{\phi}(\mathcal{P})
\end{align}
Further, it follows directly from Lemma~\ref{lemma:condition_delta} that $\Pi\big( \Gamma_{\phi}(\bar{B}^d_r\setminus \mathcal{P})\big)\subseteq \partial \mathcal{P}\times \IR$ and as the bound on $|y_m-\phi(x_m)|$ given in Lemma~\ref{lemma:ym} holds independently of the starting point $x_0\in \bar{B}^d_r\setminus \mathcal{P}$ we get
\begin{align}
\Pi\big( \Gamma_{\phi}(\bar{B}^d_r\setminus \mathcal{P})\big)\subseteq \Sigma_{\phi}^\varepsilon(\partial \mathcal{P}) 
\end{align}
for $\varepsilon=C_3(d-1)\sqrt{r}\delta^{3/2}$.
\end{proof}
\begin{figure} 
\centering
\includegraphics[scale=0.5]{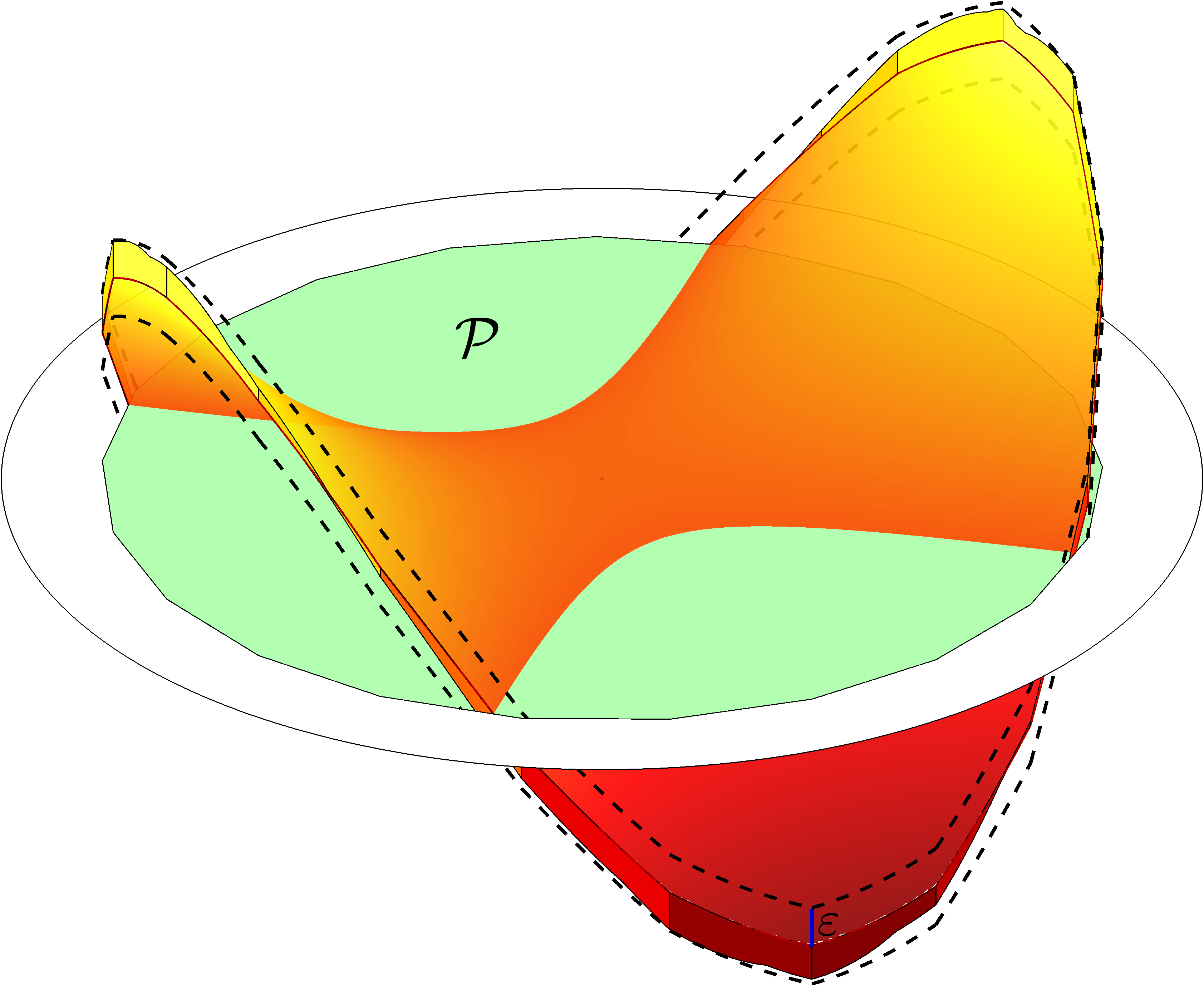}
\caption{\textit{Image of the Graph.} After applying the map $\Pi$, the image of $\Gamma_{\phi}(\bar{B}^d_r)$ will consist of an unaffected part over the polytope $\mathcal{P}$ (defined by the layers in $\Pi$) and a projected part contained in an $\varepsilon$-band of $\phi$ over the boundary $\partial \mathcal{P}$. }
\label{fig:Projection-first-poly}
\end{figure}
\subsection{Projections on a Sequence of Polytopes}
\label{sec:projection on sequence}
In the preceding section we considered the case when the graph of $\phi$ restricted to the ball $\bar{B}^d_r$ of a general radius $r$ is projected by a chain of layers where the corresponding half-spaces defined a polytope tangent to the concentric ball $\bar{B}^d_{r-\delta}$. We showed that the projected part will be contained in the $\varepsilon$-band of $\phi$ over the boundary of the polytope if $\varepsilon = C_3(d-1)\sqrt{r}\delta^{3/2}$ as long as $0<\delta\leq r\left(1-\frac{1}{\sqrt{2}}\right)$. We will now continue by describing a procedure to project the graph repeatedly in a similar way.
\paragraph{Sequence of Polytopes.} Note that we can express a network of the form in \eqref{eq:modified_network} as
\begin{align}
\boxed{
\tilde{F}=\tilde{L} \circ \Pi_M\circ \hdots \circ \Pi_2\circ \Pi_1
}
\label{eq:Pi_composition}
\end{align}
where each $\Pi_{k}:\IR^{d+1}\rightarrow \IR^{d+1}$ is a composition of $m_k$ layers
\begin{align}
\Pi_k=\pi^{k}_{m_k}\circ \hdots \circ \pi^{k}_2 \circ \pi^{k}_1 , \quad 1\leq k\leq M
\label{Pi_k-composition}
\end{align}
The total number of layers $N$ in the network $\tilde{F}$ is then 
\begin{align}
N=\sum_{k=1}^Mm_k
\label{eq:tot layers}
\end{align}
We denote the $d$-dimensional half-space and hyperplane associated to layer $\pi^{k}_i$ by $U^k_i$ and $P^k_i$ respectively, where $1\leq k \leq M$ and $1\leq i \leq m_k$. For every $\Pi_k$, we can now define the convex polytope
\begin{align}
\boxed{
\mathcal{P}_k=\bigcap_{i=1}^{m_k} U^k_i
}
\label{eq:def poly}
\end{align}
assumed to be bounded and to each such polytope we let
\begin{align}
r_k=\max(\{\|x\|:x\in \mathcal{P}_k\})
\label{eq:def_r_k}
\end{align}
Thus, $r_k$ is the radius of the smallest closed ball containing $\mathcal{P}_k$. To simplify the notation we will let $\mathcal{P}_0=\bar{B}^d_R$ (recall that $\BR$ is the domain of $\phi$) and $r_0=R$. The goal is to construct each $\Pi_k$ such that its image under the composition of all $M$ maps satisfies
\begin{align}
\Pi_M\circ \hdots \circ\Pi_2 \circ\Pi_1\big( \Gamma_{\phi}\big)\subseteq \Sigma_{\phi}^{\varepsilon}(\partial \mathcal{P}_M) \cup\Gamma_{\phi}(\mathcal{P}_M)
\label{eq:image_under_network}
\end{align}
for some $\varepsilon\geq 0$ we want to estimate. Moreover, we would like
\begin{align}
\lim_{M\to \infty}r_M=0
\label{eq:lim-r_M}
\end{align}
so that $\mathcal{P}_M$ can be made arbitrarily small by choosing $M$, and thus the number of layers $N$, large enough. If $\mathcal{P}_M$ is sufficiently small, we can approximate the remaining part of the graph, namely $\Gamma_{\phi}(\mathcal{P}_M)$, by the hyperplane $\ker(\tilde{L})$ where $\tilde{L}$ is the last affine map in \eqref{eq:Pi_composition}. Then, as the decision boundary $\Gamma$ is the preimage of $\ker(\tilde{L})$ we will see that it will approximate $\Gamma_{\phi}$ on $B^d_R$ to an accuracy related to the value of $\varepsilon$ in \eqref{eq:image_under_network}. 

\paragraph{Construction.} We will now describe how we construct each polytope by specifying each half-space in their definition \eqref{eq:def poly}. The main idea is similar to the construction of the polytope $\mathcal{P}$ in Section~\ref{Sec:Projection on a Polytope}. We will define them recursively. Given $\mathcal{P}_{k}\subset \bar{B}^d_{r_{k}}$ we construct $\mathcal{P}_{k+1}$ by letting the half-spaces associated to the layers in $\Pi_{k+1}$ be tangent to a smaller ball of radius $r_{k}-\delta_{k}$ for some $\delta_{k}>0$. We will use the notion of an $\epsilon$-net.
\begin{definition}[\boldmath{$\mathbf{\epsilon}$}-net]
\label{def:eps-net}
Given a bounded set $V\subseteq \IR^{d}$ we say that a finite subset $N_{\epsilon}\subseteq V$ is an $\epsilon$-net of $V$ if for all points $p\in V$ there is a point $q\in N_{\epsilon}$ such that $\|p-q\|\leq \epsilon$. 
\end{definition} 
\begin{rem}
The variable $\epsilon$ in Definition~\ref{def:eps-net} should not be confused with $\varepsilon$ used in Definition~\ref{def:eps_band} to indicate the size of the $\varepsilon$-bands.
\end{rem}
Given $\mathcal{P}_k$ and $r_k$, let $N_{\epsilon_{k}}$ be an $\epsilon_{k}$-net of the sphere $S^{d-1}_{r_{k}}$ enclosing $\mathcal{P}_{k}$. Define $m_{k+1}=|N_{\epsilon_{k}}|$ (we will give an upper bound on this quantity later) and label the points as $N_{\epsilon_{k}}=\{q_1, q_2, \hdots, q_{m_{k+1}}\}$. To each point $q_i \in N_{\epsilon_{k}}$ we define the half-space $U^{k+1}_i$ in the following way. Let $p^{k+1}_i$ be the point on the sphere $S^{d-1}_{r_{k}-\delta_{k}}$ intersecting the ray from the origin passing through $q_i$. Define $U^{k+1}_i$ such that it is tangent to $S^{d-1}_{r_{k}-\delta_{k}}$ precisely at $p^{k+1}_i$ and oriented such that it contains the origin.

Thus, $(U^{k+1}_i)^c\cap \bar{B}^d_{r_{k}}$ is a spherical cap of height $\delta_{k}$ where $p^{k+1}_i$ is the center of its base. Figure~\ref{fig:p_l} illustrates how the half-spaces are defined given the $\epsilon_k$-net.
\begin{figure} 
\centering
\begin{subfigure}[b]{0.45\textwidth}
\centering
\includegraphics[width=\textwidth]{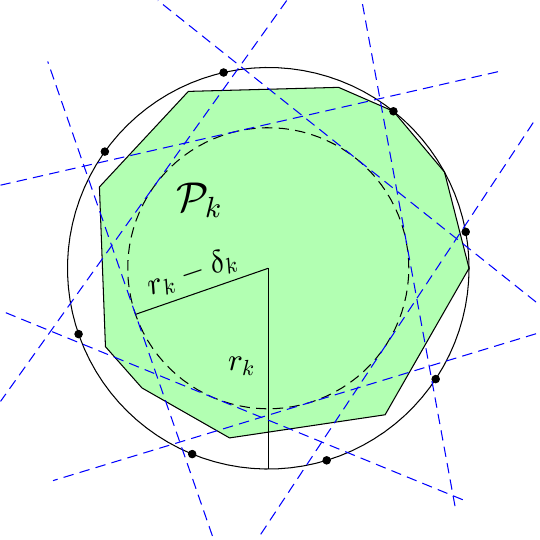}
\caption{ }
\end{subfigure}
\hfill
\begin{subfigure}[b]{0.45\textwidth}
\centering
\includegraphics[width=\textwidth]{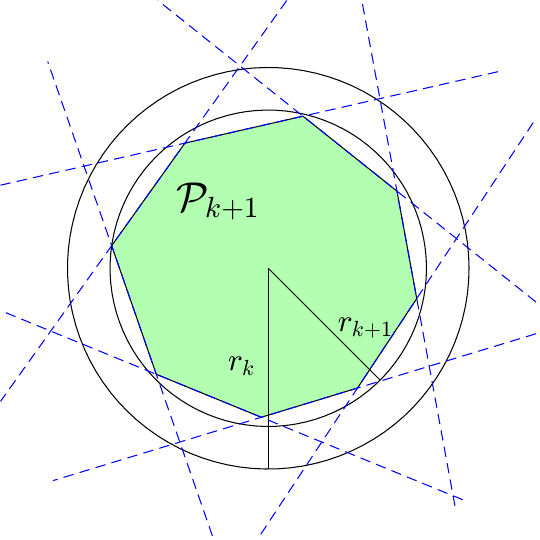}
\caption{ }
\end{subfigure}
\hfill
\caption{\textit{Construction of the Polytopes.} \textbf{(a)} First, we create an $\epsilon_k$-net of the sphere $S^{d-1}_{r_k}$ (the black points). We associate a half-space to each point in this set such that the corresponding hyperplanes are all tangent to the ball $\bar{B}^d_{r_{k}-\delta_k}$. \textbf{(b)} The polytope $\mathcal{P}_k$ is obtained as the intersection of these half-spaces.}
\label{fig:p_l}
\end{figure}

In this way, $\mathcal{P}_{k+1}$ is constructed by taking the intersection of $m_{k+1}$ tangential half-spaces to $\bar{B}^d_{r_{k}-\delta_k}$ and
\begin{align}
\bar{B}^d_{r_{k}-\delta_k}\subset \mathcal{P}_{k+1},\quad \text{for } 0\leq k \leq M-1
\label{eq:B_r-delta-inclusion}
\end{align}
The reason we introduce an index on the parameter $\delta_k$ is that we have to guarantee that the condition \eqref{eq:condition_delta} holds for all $r_k$ to get well-behaved projections. Given a predefined discretization parameter $0<\delta\leq R\left(1-\frac{1}{\sqrt{2}}\right)$ we then define $\delta_k$ according to
\begin{align}
\boxed{
\delta_k= 
\begin{cases}
\delta \quad &\text{if }\delta \leq r_k\left(1-\frac{1}{\sqrt{2}}\right)\\
r_k\left(1-\frac{1}{\sqrt{2}}\right) \quad &\text{if }\delta > r_k\left(1-\frac{1}{\sqrt{2}}\right)
\end{cases}
}
\label{eq:definition_delta_k}
\end{align}
Thus, when the radius is sufficiently small we will need to reduce $\delta_k$. 

\paragraph{Properties of the Construction.} Now, in order to prove \eqref{eq:image_under_network} and \eqref{eq:lim-r_M}, we will start by proving that $\mathcal{P}_{k+1}\subset \mathcal{P}_{k}$ and give a lower bound on $r_{k}-r_{k+1}$, i.e., how much the radii decrease at each step. Then we will proceed by proving the inclusion by utilizing Proposition~\ref{proposition:multi_projections} repeatedly. 

To guarantee that we get a nested sequence of polytopes we will need to choose $\epsilon_k$ small enough. For our purpose, choosing
\begin{align}
\boxed{
\epsilon_{k} = \sqrt{\frac{1}{2}\delta_{k} r_{k}}
}
\end{align}
will be sufficient. First, we have the following lemma.
\begin{lem}
The construction of the polytopes guarantees that the inclusions
\begin{align}
 \bar{B}^d_{r_{k}-\delta_k} \subset \mathcal{P}_{k+1} \subset \bar{B}^d_{r_{k}}
\label{eq:P_l_prime}
\end{align}
hold for all $0\leq k \leq M-1$.
\label{lemma:P_l_prime}
\end{lem}
Thus, each polytope lies inside the ball enclosing the previous polytope. The proof is presented in Appendix~\ref{Appendix B}. We can now prove the following lemma giving a lower bound on the reduction of the radii and guaranteeing that the polytopes are nested.
\begin{lem} The polytopes and the radii satisfy
\begin{align}
\begin{split}
\boxed{
\begin{array}{c}
\mathcal{P}_{k+1}\subset \mathcal{P}_{k}\\
r_{k+1}\leq r_{k}-\frac{3\delta_{k}}{4}
\end{array}
}
\end{split}
\end{align}
for all $0\leq k \leq M-1$.
\label{proposition:inclusion}
\end{lem}
\begin{proof}
We first show $r_{k+1}\leq r_{k}-\frac{3\delta_{k}}{4}$. Recall the definition of $r_{k+1}$ in \eqref{eq:def_r_k}. From Lemma~\ref{lemma:P_l_prime} we have $r_{k+1} \leq r_{k}$. Let $p^*\in \mathcal{P}_{k+1}$ be a point fulfilling $\|p^*\|=r_{k+1}$. Clearly, $p^*$ will be a vertex on the convex polytope $\mathcal{P}_{k+1}$ and will thus be contained in the intersection of a subset of the boundary hyperplanes associated to $\mathcal{P}_{k+1}$.

If $q \in S^{d-1}_{r_{k}}$ is the closest point on the sphere to the point $p^*$, then $\|p^*-q\|=r_{k}-r_{k+1}$. Next, let $q_j\in N_{\epsilon_k}$ be the closest point to $q$, then it follows that $p^*\in P^k_j$. Define $s=\|q-q_j\|$, then $s\leq \epsilon_k$ since $N_{\epsilon_k}$ is an $\epsilon_k$-net of $S^d_{r_{k}}$ and let $q'$ be the orthogonal projection of $q$ onto $P^k_j$. Figure~\ref{fig:p_l_prime} shows the geometrical setup.
\begin{figure} 
\centering
\begin{subfigure}[b]{0.45\textwidth}
\centering
\includegraphics[width=\textwidth]{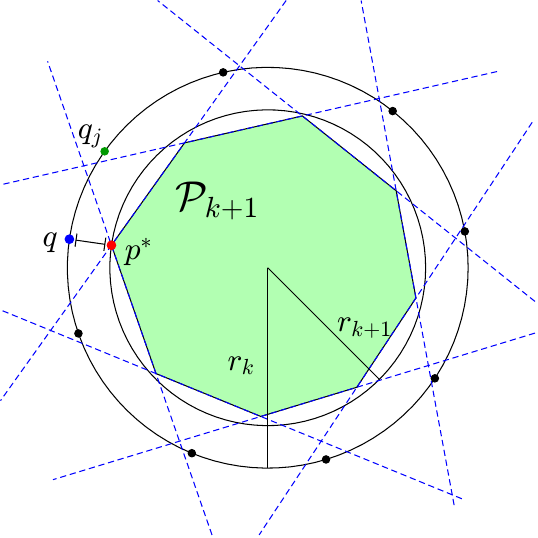}
\caption{ }
\end{subfigure}
\hfill
\begin{subfigure}[b]{0.45\textwidth}
\centering
\includegraphics[width=\textwidth]{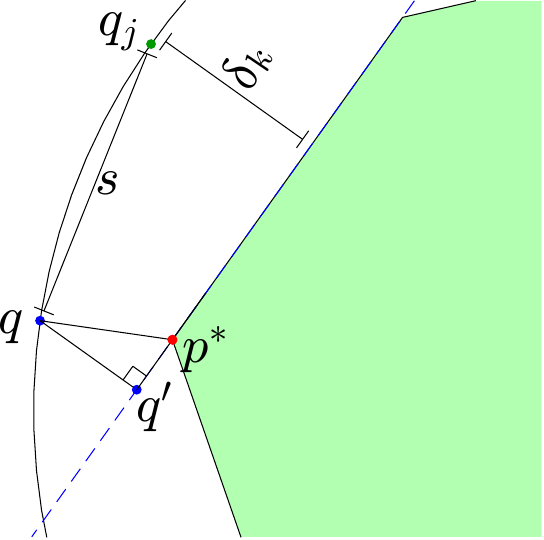}
\caption{ }
\end{subfigure}
\hfill
\caption{\textit{Estimating the Radius.} \textbf{(a)} The point $p^*$ is a vertex in $\mathcal{P}_{k+1}$ satisfying $r_{k+1}=\|p^*\|$. If we let $q$ be the closest point to $p^*$ on $S^{d-1}_{r_k}$, then the closest point $q_j$ in the $\epsilon_k$-net will be within a distance $\epsilon_k$ from $q$. \textbf{(b)} We can give a lower bound on $r_k-r_{k+1}$ by computing the length $\|q-q'\|$ where $q'$ is the orthogonal projection of $q$ onto the hyperplane $P^k_j$.}
\label{fig:p_l_prime}
\end{figure}

Consider the right triangle with vertices at $q$, $q'$ and $p^*$. As the line between $p^*$ and $q$ is the hypotenuse in this triangle we have that $\|p^*-q\|\geq \|q-q'\|$. Moreover, since $\|q-q_j\|=s$ we have that $\|q-q'\| = \delta_k-\frac{s^2}{2r_{k}} \geq \delta_k - \frac{\epsilon_k^2}{2r_{k}}$. This gives us the bound $r_{k}-r_{k+1}=\|p^*-q\|\geq  \delta_k - \frac{\epsilon_k^2}{2r_{k}}$. By inserting $\epsilon_k=\sqrt{\frac{1}{2}\delta_k r_{k}}$ we obtain
\begin{align}
r_{k+1} \leq r_{k}- \frac{3\delta_k}{4}
\label{eq:r_prime_less}
\end{align}
as desired.

We continue by proving the inclusion $\mathcal{P}_{k+1}\subset \mathcal{P}_k$. In case $k=0$ the inclusion $\mathcal{P}_1\subset \mathcal{P}_0$ follows from Lemma~\ref{lemma:P_l_prime} since we defined $\mathcal{P}_0=\bar{B}^d_{r_0}$ with $r_0=R$. To show the inclusion $\mathcal{P}_{k+1}\subset \mathcal{P}_k$ for $1\leq k\leq M-1$ we note that by construction $\bar{B}^d_{r_{k-1}-\delta_{k-1}}\subset \mathcal{P}_k$. Hence, if we can show that $\mathcal{P}_{k+1}\subset \bar{B}^d_{r_{k-1}-\delta_{k-1}}$ the inclusion follows. Thus, it is enough to show that
\begin{align}
r_{k+1}\leq r_{k-1}-\delta_{k-1}
\label{eq:inclusion-condition-r_k+1}
\end{align} 
By \eqref{eq:r_prime_less} we have $r_{k+1}\leq r_k-\frac{3\delta_k}{4}$ and by applying \eqref{eq:r_prime_less} again on $r_k$ we find that
\begin{align}
r_{k+1}\leq  r_{k-1}-\frac{3\delta_{k-1}}{4}-\frac{3\delta_k}{4} 
\label{eq:ineq-r_k+1-r_k-1}
\end{align}
We now consider two cases.
If $\delta_k=\delta_{k-1}$, then \eqref{eq:inclusion-condition-r_k+1} follows directly by \eqref{eq:ineq-r_k+1-r_k-1}. Otherwise, we have that $\delta_{k}\neq \delta_{k-1}$. In this situation, we have by \eqref{eq:definition_delta_k} that
\begin{align}
\delta_{k-1}\leq r_{k-1}\left(1-\frac{1}{\sqrt{2}}\right)
\label{eq:delta_k-1-upperbound}
\end{align}
and 
\begin{align}
\delta_k=r_k\left(1-\frac{1}{\sqrt{2}}\right)
\label{eq:delta_k_upperbound}
\end{align}
From \eqref{eq:B_r-delta-inclusion} we know that $r_k\geq r_{k-1}-\delta_{k-1}$. Using this in \eqref{eq:delta_k_upperbound} we obtain
\begin{align}
\begin{split}
\delta_k&\geq (r_{k-1}-\delta_{k-1})\left(1-\frac{1}{\sqrt{2}}\right)=r_{k-1}\left(1-\frac{1}{\sqrt{2}}\right)-\delta_{k-1}\left(1-\frac{1}{\sqrt{2}}\right)\\
&\geq\delta_{k-1} -\delta_{k-1}\left(1-\frac{1}{\sqrt{2}}\right)=\frac{\delta_{k-1}}{\sqrt{2}}
\end{split}
\end{align}
where we used \eqref{eq:delta_k-1-upperbound} in the second inequality. This lower bound together with \eqref{eq:ineq-r_k+1-r_k-1} implies \eqref{eq:inclusion-condition-r_k+1}. In both cases we can conclude that $\mathcal{P}_{k+1}\subset \mathcal{P}_k$.
\end{proof}
By equation \eqref{eq:B_r-delta-inclusion} and the definition of $\delta_k$, it is clear that each $\mathcal{P}_k$ will have a non-empty interior, so in particular $r_k>0$. Moreover, for $r_k\leq \frac{\delta}{1-\frac{1}{\sqrt{2}}}$ we have by \eqref{eq:definition_delta_k} and Lemma~\ref{proposition:inclusion} that
\begin{align}
r_{k+1}\leq \bigg(\frac{3+\sqrt{3}}{4\sqrt{2}}\bigg)r_k, \quad \text{for all $k$}
\end{align}
As $\bigg(\frac{3+\sqrt{3}}{4\sqrt{2}}\bigg)<1$ the limit in \eqref{eq:lim-r_M} follows.

\paragraph{Image of the Graph.}  Similarly, as in the preceding sections, we will choose the projection direction $\xi^{k}_i$ for layer $\pi^{k}_j$ as 
\begin{align}
\boxed{
\xi^k_i=(\beta^{k}_i,\nabla_{\beta^{k}_i}\phi (p^k_i))
}
\end{align}
where $\beta^{k}_i$ is the inward pointing unit normal to $U^k_i$ and $p^i_k$ is the base of the spherical cap $(U^{k}_i)^c\cap \bar{B}^d_{r_{k-1}}$. We are now ready to prove the following lemma.
\begin{lem}
\label{lemma:projection_multiple_polytopes}
Suppose that $0<\delta_k\leq r_k\left(1-\frac{1}{\sqrt{2}}\right)$ for all $0\leq k \leq M-1$, then our construction yields 
\begin{align}
\Pi_M\circ \hdots \circ\Pi_2 \circ\Pi_1\big( \Gamma_{\phi}\big)\subseteq \Sigma_{\phi}^{\varepsilon_M}(\partial \mathcal{P}_M) \cup\Gamma_{\phi}(\mathcal{P}_M)
\label{eq:image_under_network_lemma}
\end{align}
for
\begin{align}
\varepsilon_M= C_3(d-1)\sum_{k=0}^{M-1}\sqrt{r_{k}}\delta_k^{3/2}
\label{eq:epsilon_M-bound}
\end{align}
\end{lem}
\begin{proof}
Clearly, we have that
\begin{align}
\Pi_1(\Gamma_{\phi})\subseteq \Sigma_\phi^{\varepsilon_1}(\partial\mathcal{P}_1)\cup \Gamma(\mathcal{P}_1)
\end{align}
with $\varepsilon_1= C_3(d-1)\sqrt{r_0}\delta_0^{3/2}$ (recall that we defined $r_0=R$). This is an immediate consequence of Proposition~\ref{proposition:multi_projections}. Suppose now that
\begin{align}
\Pi_{k}\circ \hdots \circ \Pi_1(\Gamma_\phi)\subseteq \Sigma_\phi^{\varepsilon_{k}}(\partial\mathcal{P}_{k})\cup \Gamma_{\phi}(\mathcal{P}_{k})
\qquad\text{for some $\varepsilon_{k}\geq 0$.}
\end{align}
Applying an additional polytopal projection $\Pi_{k+1}$ gives the inclusion
\begin{align}
\begin{split}
\Pi_{k+1}\circ\Pi_{k}\circ \hdots \circ \Pi_1(\Gamma_\phi) &\subseteq \Pi_{k+1}\big(\Sigma_\phi^{\varepsilon_{k}}(\partial\mathcal{P}_{k})\cup \Gamma_{\phi}(\mathcal{P}_{k})\big)\\
&=\Pi_{k+1}\big(\Sigma_\phi^{\varepsilon_{k}}(\partial\mathcal{P}_{k}) \big) \cup \Pi_{k+1}\big(\Gamma_{\phi}(\mathcal{P}_{k})\big)
\end{split}
\label{eq:Pi_k-inclusion}
\end{align}
We start by investigating the image $\Pi_{k+1}\big(\Gamma_\phi(\mathcal{P}_{k})\big)$. By Lemma~\ref{lemma:P_l_prime} we have 
\begin{align}
\bar{B}^d_{r_{k}-\delta_{k}}\subset \mathcal{P}_{k+1}\subset \bar{B}^d_{r_{k}}
\label{eq:P_k-inclusion}
\end{align}
Moreover, by Lemma~\ref{proposition:inclusion} we know that $\mathcal{P}_k=(\mathcal{P}_k\setminus \mathcal{P}_{k+1})\cup \mathcal{P}_{k+1}$, so in particular
\begin{align}
\Gamma_\phi(\mathcal{P}_k)=\Gamma_\phi(\mathcal{P}_k\setminus \mathcal{P}_{k+1})\cup \Gamma_\phi(\mathcal{P}_{k+1})
\end{align}
Now, as we have the inclusion $ \mathcal{P}_{k}\setminus \mathcal{P}_{k+1} \subset \bar{B}^d_{r_{k}}\setminus \mathcal{P}_{k+1}$ we can apply Proposition~\ref{proposition:multi_projections} to conclude
\begin{align}
\Pi_{k+1}\big(\Gamma_\phi(\mathcal{P}_{k})\big) \subseteq \Sigma_\phi^{\varepsilon}(\partial\mathcal{P}_{k+1})\cup \Gamma_{\phi}(\mathcal{P}_{k+1})
\label{eq:P_k-inclusion-map}
\end{align}
with $\varepsilon = C_3(d-1)\sqrt{r_{k}}\delta_{k}^{3/2}$. 

Next, we consider $\Pi_{k+1}\big(\Sigma_\phi^{\varepsilon_{k}}(\partial\mathcal{P}_{k}) \big)$. Note that a point $(x_0,y_0)\in \Sigma_\phi^{\varepsilon_{k}}(\partial\mathcal{P}_{k})$ can be expressed as $(x_0,\phi(x_0)+h)$ where $|h|\leq \varepsilon_{k}$. By \eqref{eq:map-point} it is clear that for any layer $\pi$ we are considering it always holds that $\pi(x,y+h)=\pi(x,y)+(0,h)$. Thus, if $(x_0,\phi(x_0)+h)\in \Sigma_\phi^{\varepsilon_{k}}(\partial\mathcal{P}_{k})$ then
\begin{align}
\Pi_{k+1}\big(x_0,\phi(x_0)+h\big)=(x_{m_{k+1}},y_{m_{k+1}})+(0,h)=(x_{m_{k+1}},y_{m_{k+1}}+h)
\end{align}
where we define $(x_{m_{k+1}},y_{m_{k+1}})=\Pi_{k+1}\big(x_0,\phi(x_0)\big)$ in accordance with \eqref{eq:map_sequence}. Since \eqref{eq:P_k-inclusion} holds and $\delta_{k}\leq r_{k}\left(1-\frac{1}{\sqrt{2}}\right)$ we have by Lemma~\ref{lemma:condition_delta} that $x_{m_{k+1}}\in \partial \mathcal{P}_{k+1}$. Moreover, by the triangle inequality we get
\begin{align}
|y_{m_{k+1}}+h-\phi(x_{m_{k+1}})|\leq |y_{m_{k+1}}-\phi(x_{m_{k+1}})|+|h|\leq C_3(d-1)\sqrt{r_{k}}\delta_{k}^{3/2}+|h|
\end{align}
where we used Lemma~\ref{lemma:ym} in the last inequality. As $|h|\leq \varepsilon_{k}$ we can conclude

\begin{align}
\Pi_{k+1}\big(\Sigma_\phi^{\varepsilon_{k}}(\partial\mathcal{P}_{k}) \big)\subseteq \Sigma_\phi^{\varepsilon+\varepsilon_{k}}(\partial\mathcal{P}_{k+1})
\label{eq:Sigma_inclusion}
\end{align}
with $\varepsilon= C_3(d-1)\sqrt{r_{k}}\delta_{k}^{3/2}$.

Combining \eqref{eq:P_k-inclusion-map} and \eqref{eq:Sigma_inclusion} in equation \eqref{eq:Pi_k-inclusion} we can deduce
\begin{align}
\Pi_{k+1}\circ\Pi_{k}\circ \hdots \circ \Pi_1(\Gamma_\phi) &\subseteq \Sigma_{\phi}^{\varepsilon_{k+1}}(\partial \mathcal{P}_{k+1})\cup \Gamma_{\phi}(\mathcal{P}_{k+1})
\end{align}
where we defined
\begin{align}
\varepsilon_{k+1}=  \varepsilon_{k}+C_3(d-1)\sqrt{r_{k}}\delta_{k}^{3/2}
\end{align}
Note, since $\varepsilon_1 = C_3(d-1)\sqrt{r_{0}}\delta_{0}^{3/2}$ the explicit formula for $\varepsilon_{k+1}$ simply reduces to
\begin{align}
\varepsilon_{k+1}= C_3(d-1)\sum_{l=0}^{k}\sqrt{r_{l}}\delta_{l}^{3/2}
\end{align}
Now the inclusion in \eqref{eq:image_under_network_lemma} and equation \eqref{eq:epsilon_M-bound} follow by induction on $k$.
\end{proof}
\begin{cor} The size of $\varepsilon_M$ can be bounded above by
\begin{align}
\varepsilon_M\leq  C_3(d-1)M\sqrt{R}\delta^{3/2}
\end{align}
where $M$ is the number of polytopes.
\label{cor:epsilon_M}
\end{cor}
\begin{proof}
The inequality follows directly from equation \eqref{eq:epsilon_M-bound} by noting that $r_k\leq R$ and $\delta_k\leq \delta$ for all $0\leq k \leq M-1$.
\end{proof}

\paragraph{Number of polytopes.} The size of the resulting $\varepsilon_M$-band depends on the number of polytopes in our construction which in turn depends on how small we require the last radius $r_M$ to be. We would like to choose $M$ large enough so that the $r_M$ is small enough to let us approximate $\Gamma_{\phi}(\mathcal{P}_M)$ by the hyperplane $P_{\tilde{L}}=\ker(\tilde{L})$ to a desired accuracy. For our purpose it will be sufficient to require $r_M\leq \delta$ so that $\mathcal{P}_M\subset \bar{B}^d_{\delta}$. 

The number of polytopes, $M$, needed in our construction to ensure $r_M\leq \delta$ will depend on the parameter $\delta$ so we write $M=M(\delta)$. We have the following bound on $M(\delta)$.
\begin{lem}[Number of Polytopes]
\label{lem:num_polytopes}
The number of polytopes $M(\delta)$ needed in our construction to ensure $r_M\leq \delta$ satisfies
\begin{align}
\boxed{
M(\delta)\leq \frac{7R}{3\delta}
}
\label{eq:num_polytopes}
\end{align}
\end{lem}
The proof relies on the lower bound on the reduction of the radii given in Lemma~\ref{proposition:inclusion}. The details can be found in Appendix~\ref{Appendix B}. We summarize our findings in the following proposition.
\begin{prop}[Image of the Graph] \label{proposition:multi-polytopes}
Our construction of the mappings $\Pi_k$, for $1\leq k\leq M$, guarantees that 
\begin{align}
\boxed{
\begin{array}{c}
\Pi_M\circ \hdots \circ\Pi_2 \circ\Pi_1\big( \Gamma_{\phi}\big)\subseteq \Sigma_{\phi}^{\varepsilon}(\partial \mathcal{P}_M) \cup\Gamma_{\phi}(\mathcal{P}_M)\\
\varepsilon= C_4(d-1)R^{3/2}\delta^{1/2}\\
\mathcal{P}_M\subset \bar{B}^d_{\delta}\\
M\leq \frac{7R}{3\delta}
\end{array}
}
\label{eq:result_summary}
\end{align}
for a constant $C_4$ independent of $R$, $d$ and $\delta$.
\end{prop}
\begin{proof}
Using Lemma~\ref{lem:num_polytopes} in the bound given in Corollary~\ref{cor:epsilon_M} yields
\begin{align}
\begin{split}
\varepsilon_M&\leq C_3(d-1)\frac{7R}{3\delta}\sqrt{R}\delta^{3/2}=C_4(d-1)R^{3/2}\delta^{1/2}
\end{split}
\end{align}
where $C_4=\frac{7}{3}C_3$. Now, the proposition follows by Lemma~\ref{lemma:projection_multiple_polytopes} and  \ref{lem:num_polytopes}.
\end{proof}

Thus, to ensure that $\mathcal{P}_M\subset \bar{B}^d_{\delta}$ we will need at least $\frac{7R}{3\delta}$ number of polytopes in our construction, and the size of the resulting $\varepsilon$-band of $\phi$ over $\partial \mathcal{P}_M$ scales as $\delta^{1/2}$. The sequence of polytopes for a given $\delta$ is depicted in Figure~\ref{fig:Sequence of Polytopes} together with the resulting $\varepsilon$-band. Proposition~\ref{proposition:multi-polytopes} shows that we can construct a network such that the graph $\Gamma_{\phi}$ is mapped in a controlled way and that the size of the $\varepsilon$-band can be made arbitrarily small by choosing $\delta$ sufficiently small. Before turning to what the implications are for the decision boundary with respect to our construction, we will first estimate the total number of layers needed in the realization of the network we have described.
\begin{figure} 
\centering
\begin{subfigure}[b]{0.40\textwidth}
\centering
\includegraphics[width=\textwidth]{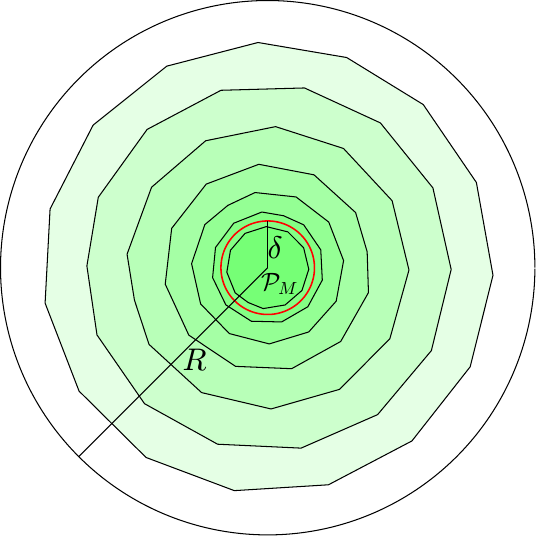}
\caption{ }
\end{subfigure}
\hfill
\begin{subfigure}[b]{0.59\textwidth}
\centering
\includegraphics[width=\textwidth]{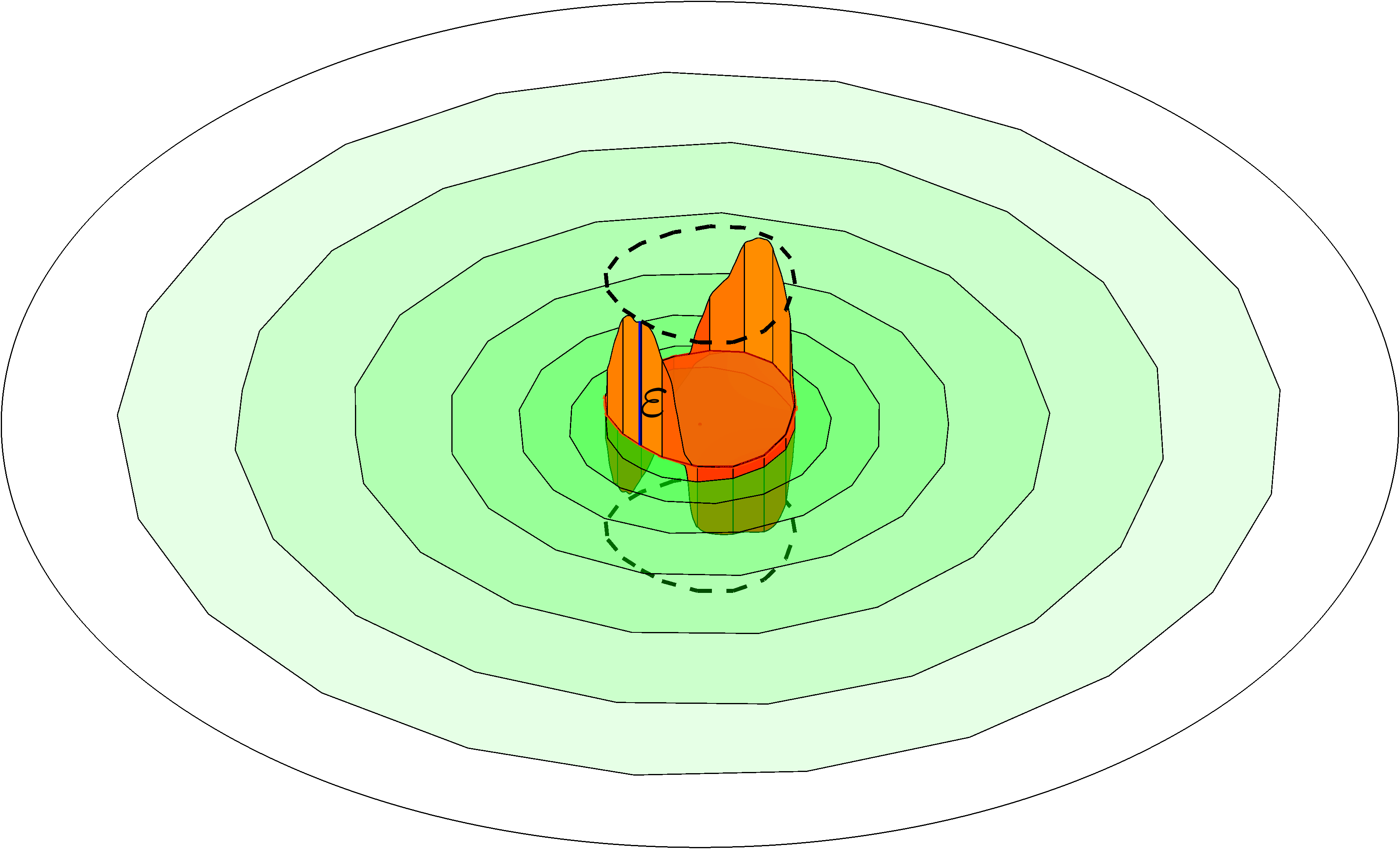}
\caption{ }
\end{subfigure}
\hfill
\caption{\textit{Sequence of Polytopes.} \textbf{(a)} The number of polytopes is chosen such that the last one, $\mathcal{P}_M$, is contained in a ball of radius $\delta$. \textbf{(b)} The image of $\Gamma_{\phi}$ under the $M$ maps consist of one unaffected part over the interior of $\mathcal{P}_M$ and a projected part inside an $\varepsilon$-band of $\phi$ over $\partial \mathcal{P}_M$.}
\label{fig:Sequence of Polytopes}
\end{figure}

\subsection{Total Number of Layers}
Recall that the total number of layers $N$ is equal to the sum in \eqref{eq:tot layers}. Previously, we derived the bound in Lemma~\ref{lem:num_polytopes} on the number of polytopes $M$ which equals the number of maps in \eqref{eq:Pi_composition}. In each map $\Pi_k$ we have $m_{k}$ layers and in our construction we defined $m_k$ as the cardinality of an $\epsilon_{k-1}$-net of the sphere $S^{d-1}_{r_{k-1}}$. 
\paragraph{Separations.} To prove Lemma~\ref{proposition:inclusion}, we defined $\epsilon_k=\sqrt{\frac{1}{2}\delta_k r_k}$. Thus, to give a bound on the number of layers needed in our construction we will need to estimate the cardinality of an $\epsilon$-net of a sphere. We begin by defining the related concept of an $\epsilon$-separation.
\begin{definition}[\boldmath{$\mathbf{\epsilon}$}-separation]
Given a set $V\subseteq \IR^{d}$ we say that a subset $N^{sep}_{\epsilon}\subseteq V$ is an $\epsilon$-separation of $V$ if for any distinct pair of points $p,q\in N^{sep}_{\epsilon}$ we have that $\|p-q\|> \epsilon$. If in addition there is no superset $M^{sep}_{\epsilon}\supset N^{sep}_{\epsilon}$ which is also an $\epsilon$-separation of $V$, we say that $N^{sep}_{\epsilon}$ is a maximal $\epsilon$-separation of $V$.
\end{definition}
A standard way of estimating the cardinality of $\epsilon$-nets is to give a bound on maximal $\epsilon$-separations and then apply the following Lemma.
\begin{lem}
A maximal $\epsilon$-separation of a set $V$ is also an $\epsilon$-net of $V$.
\label{lemma:sep-net}
\end{lem}
\begin{proof}
Let $N^{sep}_{\epsilon}$ be a maximal $\epsilon$-separation of $V$. Suppose there is a point $p\in V$ such that $\|p-q\|>\epsilon$ for all $q \in N^{sep}_{\epsilon}$. Then $N^{sep}_{\epsilon}\cup \{p\}$ would still be an $\epsilon$-separation of $V$ contradicting the maximality of $N^{sep}_{\epsilon}$. Thus, there can't exist such a point $p\in V$, and hence $N^{sep}_{\epsilon}$ is an $\epsilon$-net of $V$. 
\end{proof}
We will now give a bound on $|N_{\epsilon}|$ in case $N_{\epsilon}$ is an $\epsilon$-net of a sphere $S^{d-1}_r$ of radius $r$.
\begin{lem}
\label{lem:cardinality of net}
There is an $\epsilon$-net $N_{\epsilon}$ of the sphere $S^{d-1}_r$ such that $|N_{\epsilon}|\leq 2d\big(1+\frac{2r}{\epsilon}\big)^{d-1}$. 
\end{lem}
In the proof, outlined in Appendix~\ref{Appendix B}, we use the relation between $\epsilon$-nets and $\epsilon$-separations presented in Lemma~\ref{lemma:sep-net}. Using Lemma~\ref{lem:cardinality of net} we can now give a bound on the number of layers needed in our construction.
\begin{prop}[Number of Layers]
\label{cor:tot layers}
The total number of layers $N$ in our construction satisfies
\begin{align}
\boxed{
N\leq Cd\bigg(\frac{32R}{\delta}\bigg)^{(d+1)/2}
}
\end{align}
for a constant $C$ independent of $\delta$, $R$, $d$ and $\phi$.
\end{prop}
\begin{proof}
From Lemma~\ref{lem:cardinality of net} we have that 
\begin{align}
m_k\leq 2d\bigg(1+\frac{2r_{k-1}}{\epsilon_{k-1}}\bigg)^{d-1}
\end{align}
Plugging in $\epsilon_{k-1}=\sqrt{\frac{1}{2}\delta_{k-1}r_{k-1}}$ gives
\begin{align}
\begin{split}
m_k&\leq 2d\bigg(1+\frac{2r_{k-1}}{\sqrt{\frac{1}{2}\delta_{k-1}r_{k-1}}}\bigg)^{d-1}=2d\bigg(1+2\sqrt{2}\sqrt{\frac{r_{k-1}}{\delta_{k-1}}}\bigg)^{d-1}\\
&\leq 2d\bigg(1+2\sqrt{2}\sqrt{\frac{R}{\delta}}\bigg)^{d-1} \leq 2d\bigg(4\sqrt{2}\sqrt{\frac{R}{\delta}}\bigg)^{d-1}=2d\bigg(\frac{32R}{\delta}\bigg)^{(d-1)/2}
\end{split}
\end{align}
Now, by \eqref{eq:tot layers} we get
\begin{align}
N=\sum_{k=1}^Mm_k\leq M 2d\bigg(\frac{32R}{\delta}\bigg)^{(d-1)/2}
\end{align}
and from Lemma~\ref{lem:num_polytopes} we then arrive at
\begin{align}
N\leq \frac{14}{3}d\bigg(\frac{32R}{\delta}\bigg)^{(d+1)/2}
\end{align}
\end{proof}
\section{Decision Boundary}
\label{sec:Decision Boundary}
We can express the decision boundary $\Gamma$ in terms of our canonical network architecture \eqref{eq:modified_network} as
\begin{align}
\Gamma=\big(\Pi_M\circ \hdots \circ  \Pi_2 \circ \Pi_1\big)^{-1}[\ker(\tilde{L})]=\Pi_1^{-1}\circ \Pi_2^{-1}\circ \hdots \circ \Pi_M^{-1}[\ker(\tilde{L})]
\label{eq:Gamma_F preimage of P_L}
\end{align}
where each
\begin{align}
\Pi^{-1}_k=(\pi^k_1)^{-1}\circ (\pi^k_2)^{-1} \circ \hdots \circ (\pi^k_{m_k})^{-1}
\label{eq:Pi_k preimage pi^k_i}
\end{align} 
and $(\pi^k_{i})^{-1}[V]$ denotes the preimage of a set $V$ under layer $\pi^k_{i}$. 
\paragraph{Defining the Affine Function.} Up to now we have described the construction of each layer in our network but the last affine function $\tilde{L}$. We will now show that we can choose the hyperplane 
\begin{align}
\boxed{
P_{\tilde{L}}=\ker(\tilde{L})
}
\end{align}
so that its restriction to $\mathcal{P}_M$, denoted by $P_{\tilde{L}}\big(\mathcal{P}_M\big)$, is contained in the $\varepsilon$-band of $\phi$ over $\mathcal{P}_M$ for the same value of $\varepsilon$ as in Proposition~\ref{proposition:multi-polytopes}, see Figure~\ref{fig:Hyperplane}. Since the graph of an affine function in $d$-variables is a hyperplane in $\IR ^{d+1}$ we can choose
\begin{align}
P_{\tilde{L}}=\{(x,y)\in\IR^{d+1} : y=\tilde{l}(x)\}
\label{eq:defiition P_L}
\end{align}
for the affine function $\tilde{l}:B^d_R\to \IR$ given by 
\begin{align}
\tilde{l}(x)=\phi(0)+\sum_{i=1}^d\frac{\partial \phi(0)}{\partial x_i}x_i
\end{align}
As $\phi\in C^2\big(B^d_R\big)$ we have by Taylor's theorem that  
\begin{align}
\phi(x)=\phi(0)+\sum_{i=1}^d\frac{\partial \phi(0)}{\partial x_i}x_i+\frac{1}{2}\sum_{i,j=1}^dR_{i,j}(x)x_ix_j
\end{align}
for any $x\in \mathcal{P}_M\subseteq \bar{B}^d_{\delta}$ and where $|R_{i,j}(x)|\leq D$. Hence,
\begin{align}
|\phi(x)-\tilde{l}(x)|\leq\frac{D}{2}\bigg|\sum_{i,j=1}^dx_ix_j\bigg|=\frac{D}{2}\bigg(\sum_{i=1}^dx_i\bigg)^2
\end{align}
for all $x\in \mathcal{P}_M$. Since $x\in \mathcal{P}_M\subset \bar{B}^d_{\delta}$ it follows that $\big(\sum_{i=1}^dx_i\big)^2\leq d\delta^2$ so we get the upper bound
\begin{align}
|\phi(x)-\tilde{l}(x)|\leq \frac{D}{2}d\delta^2, \quad \text{for all $x\in \mathcal{P}_M$}
\end{align}
Hence, we can write
\begin{align}
\|\phi-\tilde{l}\|_{C(\mathcal{P}_M)}\leq \frac{D}{2}d\delta^2
\end{align}
Recalling the value of $\varepsilon$ in Proposition~\ref{proposition:multi-polytopes} and since $C_4=\frac{7}{3}C_3=\frac{7}{3}C_2C_1=\frac{7}{3}\frac{(1+\sqrt{2})}{2}C_1=\frac{7(1+\sqrt{2})}{2\sqrt{2}}D$ (see the proof of Proposition~\ref{proposition:one_projection} in Appendix~\ref{Appendix B}) it follows that
\begin{align}
\|\phi-\tilde{l}\|_{C(\mathcal{P}_M)}\leq \frac{D}{2}d\delta^2 \leq \varepsilon
\end{align}
Thus, we have 
\begin{align}
P_{\tilde{L}}\big(\mathcal{P}_M \big)\subseteq  \Sigma_{\phi}^{\varepsilon}( \mathcal{P}_M)
\label{eq:P_L-inclusion}
\end{align}
\begin{figure} 
\centering
\includegraphics[width=0.25\linewidth]{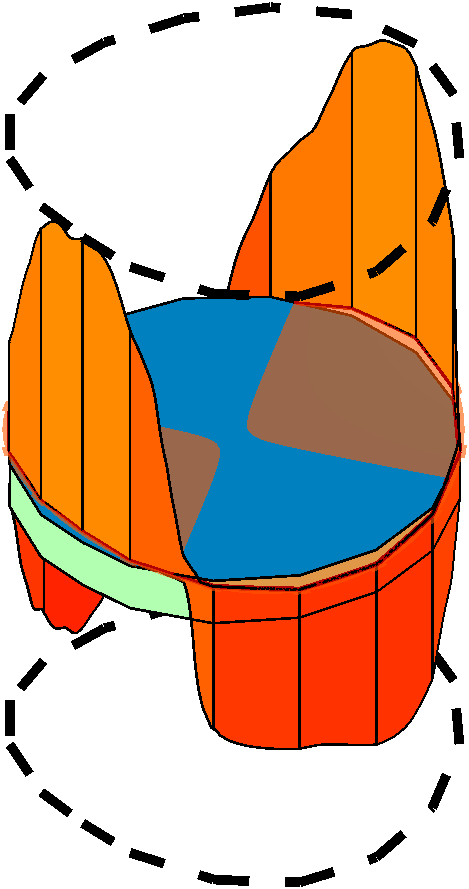}
\caption{\textit{Approximation by Hyperplane.} The hyperplane $P_{\tilde{L}}=\ker(\tilde{L})$ approximates the graph of $\phi$ over the last polytope $\mathcal{P}_M$. Its restriction $P_{\tilde{L}}(\mathcal{P}_M)$ (in blue) is contained in $\Sigma^{\varepsilon}_{\phi}(\mathcal{P}_M)$ for the value of $\varepsilon$ given in Proposition~\ref{proposition:multi-polytopes}.}
\label{fig:Hyperplane}
\end{figure}

\paragraph{Preimages.} By construction, a point $(x,y)$ in the domain of the network $\tilde{F}$ with $x\in B^d_R\setminus \mathcal{P}_M$ will be mapped into the set $\partial\mathcal{P} \times \IR$ by the composition $\Pi_M\circ \hdots \circ  \Pi_2 \circ \Pi_1$. Thus, any point $(x',y')$ with $x'\notin \mathcal{P}_M$ will have an empty preimage under this composition. Therefore, we can express the decision boundary as the preimage of just $P_{\tilde{L}}\big(\mathcal{P}_M \big)$ as this is exactly the part of the hyperplane $P_{\tilde{L}}$ with a nonempty preimage. Then, by the inclusion in \eqref{eq:P_L-inclusion} we have that
\begin{align}
\Gamma\subseteq \big(\Pi_1^{-1}\circ \Pi_2^{-1}\circ \hdots \circ \Pi_M^{-1}\big)[\Sigma_{\phi}^{\varepsilon}( \mathcal{P}_M)]
\label{eq:Gamma_F inclusion}
\end{align}
for the value of $\varepsilon$ given in Proposition~\ref{proposition:multi-polytopes}. As each $\Pi_k$ is a composition of several layers $\pi^k_i$ we will need to understand their preimages as these are the building blocks in our network.
\begin{lem}[Preimages]
\label{lem:preimage pi}
Let $g:B^d_R\to \IR$ be a continuous function and denote its graph by $\Gamma_g$. Consider a layer $\pi^k_i:B^d_R \times \IR \to \IR^{d+1}$ in our network $\tilde{F}$. Then, the preimage of $\Gamma_g$ under $\pi^k_i$ fulfills
\begin{align}
(\pi^k_i)^{-1}[\Gamma_g]=\Gamma_{\tilde{g}}
\label{eq: preimage of Gamma_g}
\end{align}
where $\Gamma_{\tilde{g}}$ is the graph of a continuous function $\tilde{g}:B^d_R\to \IR$. Moreover, given any $\tau\geq 0$ we have
\begin{align}
(\pi^k_i)^{-1}[\Sigma^{\tau}_{g}(B^d_R)]=\Sigma^{\tau}_{\tilde{g}}(B^d_R)
\label{eq: preimage of Sigma}
\end{align}
\end{lem}
The details of the proof can be found in Appendix~\ref{Appendix C}. The idea is that the preimage of the intersection of the graph and the hyperplane associated to a layer is spanned by a constant vector, the negative projection direction for that layer. The part of the graph contained in the halfspace will just be its own preimage as the layer is the identity when restricted to this set. This is illustrated in Figure~\ref{fig:Preimagefirst}. 
\begin{figure} 
\centering
\begin{subfigure}[b]{0.45\textwidth}
\centering
\includegraphics[width=\textwidth]{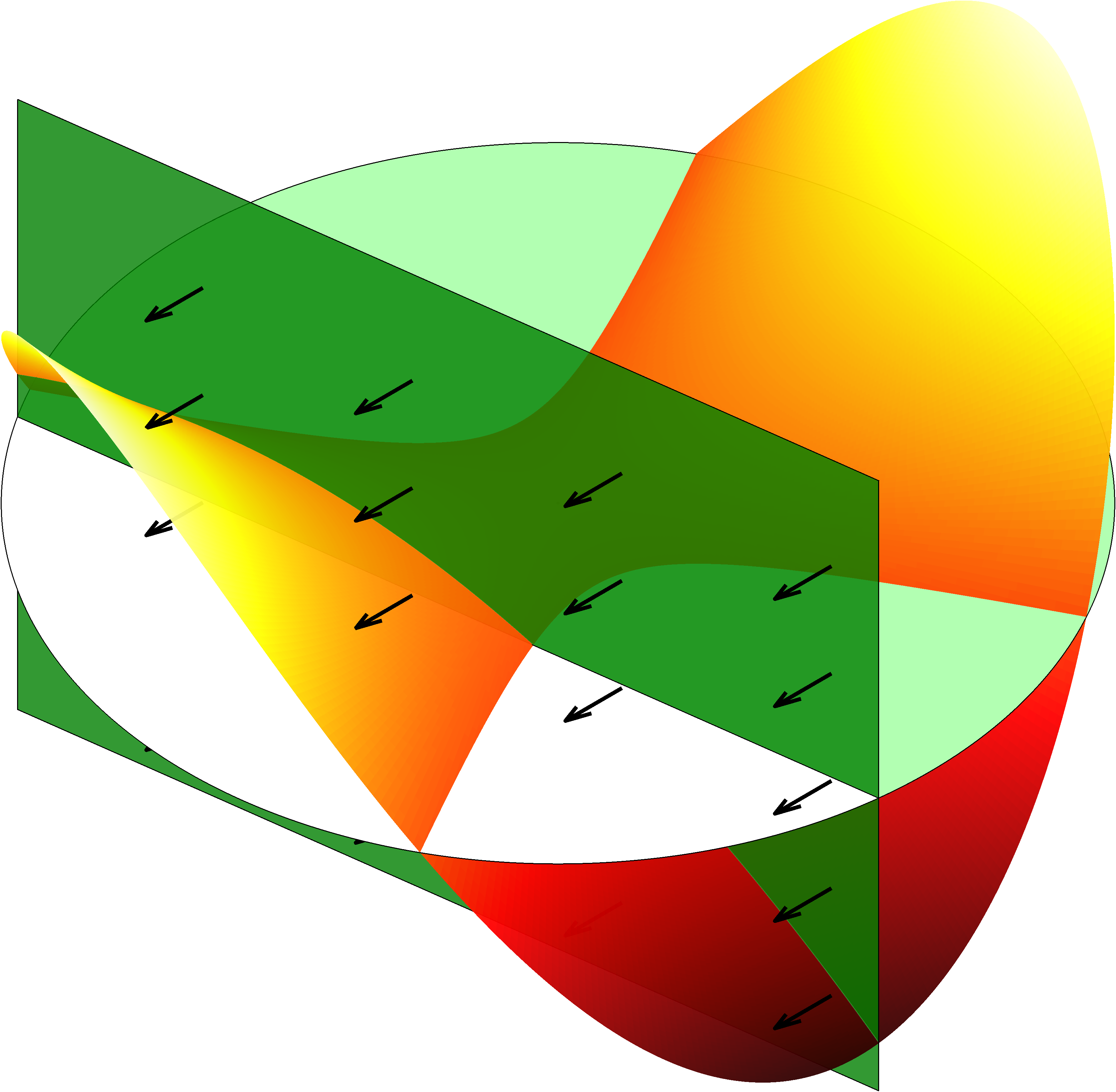}
\caption{ }
\end{subfigure}
\hfill
\begin{subfigure}[b]{0.45\textwidth}
\centering
\includegraphics[width=\textwidth]{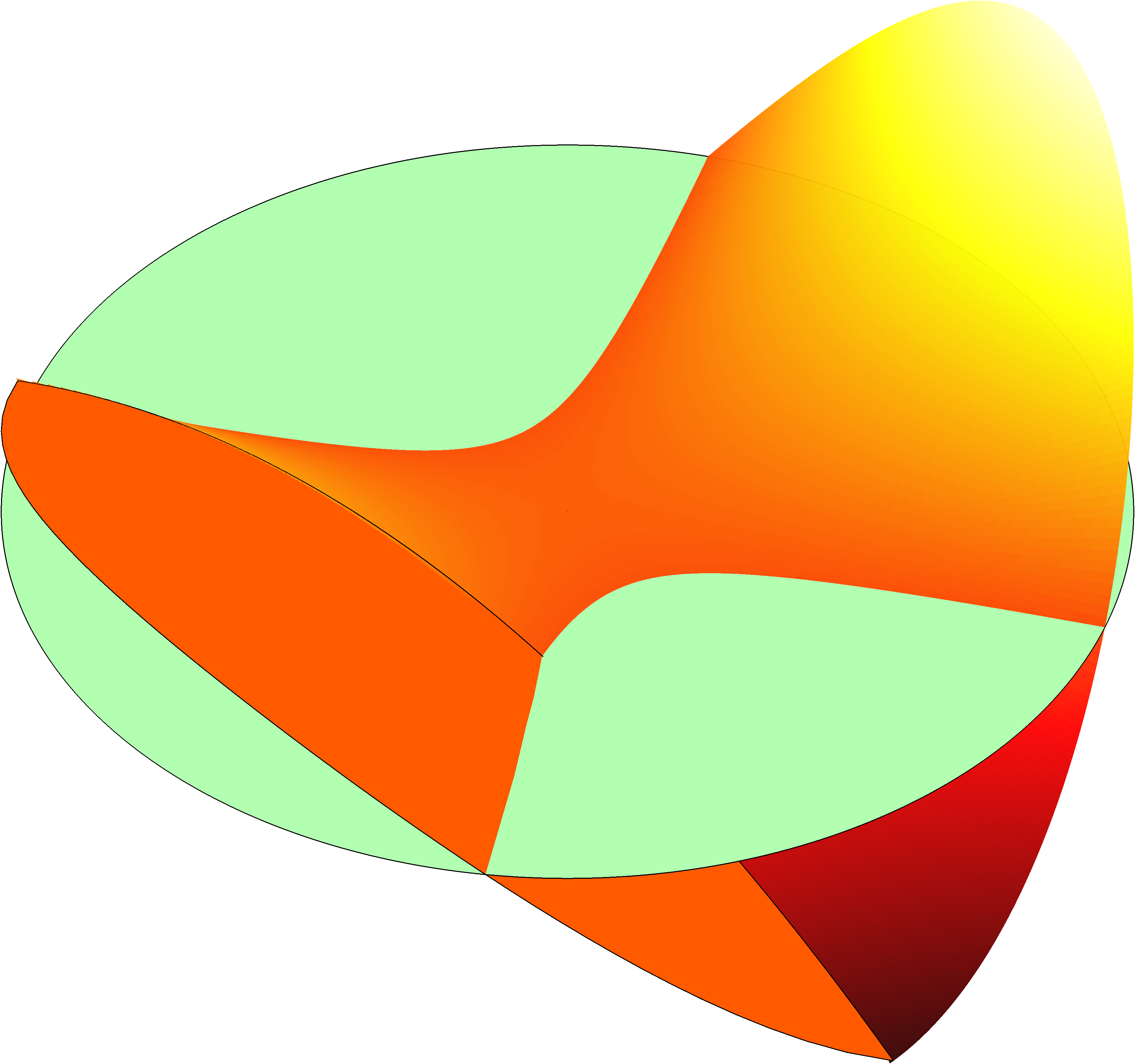}
\caption{ }
\end{subfigure}
\hfill
\caption{\textit{Preimage of Graphs.} \textbf{(a)} The preimage of a graph under a layer $\pi^k_i$ in the network. The part of the graph over the interior of the half-space $U^k_i$ will have a trivial preimage whereas the part over the complement $(U^k_i)^c$ will have an empty preimage. However, the preimage of the intersection of the graph with the hyperplane $P^k_i$ will be spanned by $-\xi^k_i$ as depicted. \textbf{(b)} The entire preimage will define a graph of a new continuous function which coincide with the original graph on the halfspace $U^k_i$.}
\label{fig:Preimagefirst}
\end{figure}

\paragraph{Decision Boundary.} Using Lemma~\eqref{lem:preimage pi} we can prove that the decision boundary $\Gamma$ is the graph of a continuous piecewise linear function.
\begin{cor}
There is a continuous piecewise linear function $\hat{\phi}:B^d_R\to \IR$ with graph $\Gamma_{\hat{\phi}}$ such that $\Gamma=\Gamma_{\hat{\phi}}$.
\end{cor}
\begin{proof}
From \eqref{eq:Gamma_F preimage of P_L} and \eqref{eq:Pi_k preimage pi^k_i} we see that $\Gamma$ is the preimage of the hyperplane $P_{\tilde{L}}$ under a sequence of layers $\pi^k_i$. Recall equation \eqref{eq:defiition P_L} where we defined $P_{\tilde{L}}$ as the graph of the affine function $\tilde{l}:B^d_R\to \IR$. Thus, taking the preimage of $P_{\tilde{L}}$ by a layer will generate a graph of a new continuous function by Lemma~\ref{lem:preimage pi}. In particular, it follows from the proof of Lemma~\ref{lem:preimage pi} that the preimage of the graph of a piecewise linear function under a layer will be the graph to a piecewise linear function. Hence, by repeatedly applying this lemma, for all layers in the network, we will get a piecewise linear continuous function $\hat{\phi}:B^d_R \to \IR$ such that its graph $\Gamma_{\hat{\phi}}$ coincides with the decision boundary $\Gamma$.
\end{proof}
This corollary guarantees that the decision boundary of our network can be represented as the graph of a piecewise linear continuous function defined on $B^d_R$. Therefore, we will hereafter denote the decision boundary $\Gamma$ by $\Gamma_{\hat{\phi}}$ where $\hat{\phi}$ is the function which existence is established by the corollary above. We choose to denote this function by $\hat{\phi} $ as it will be a piecewise linear approximation of $\phi$. To prove the main result of this paper the following proposition will be useful.
\begin{prop}
\label{lem:Gamma Gamma_phi inclusion}
There is a continuous function $\tilde{\phi}:B^d_R\to \IR$ such that
\begin{align}
\begin{array}{c}
\Gamma_{\hat{\phi}} \subseteq \Sigma^{\varepsilon}_{\tilde{\phi}}(B^d_R)\\
\Gamma_{\phi} \subseteq \Sigma^{\varepsilon}_{\tilde{\phi}}(B^d_R)
\end{array}
\end{align}
where $\varepsilon=C_4(d-1)R^{3/2}\delta^{1/2}$.
\end{prop}  
\begin{proof}
First, we make the observation that
\begin{align}
\big(\Pi_1^{-1}\circ \Pi_2^{-1}\circ \hdots \circ \Pi_M^{-1}\big)[\Sigma^{\varepsilon}_{\phi}(\mathcal{P}_M)] =\big(\Pi_1^{-1}\circ \Pi_2^{-1}\circ \hdots \circ \Pi_M^{-1}\big)[\Sigma^{\varepsilon}_{\phi}(B^d_R)] 
\end{align}
as the preimage of the set $\Sigma^{\varepsilon}_{\phi}(B^d_R) \setminus \Sigma^{\varepsilon}_{\phi}(\mathcal{P}_M)$ is empty since the image of the domain under this composition is contained in $\partial \mathcal{P}_M\times \IR$. Thus, from equation \eqref{eq:Gamma_F inclusion} we have the inclusion
\begin{align}
\Gamma_{\hat{\phi}}=\Gamma\subseteq \big(\Pi_1^{-1}\circ \Pi_2^{-1}\circ \hdots \circ \Pi_M^{-1}\big)[\Sigma^{\varepsilon}_{\phi}(B^d_R)]
\label{eq:Gamma_f inclusion}
\end{align}
with $\varepsilon=C_4(d-1)R^{3/2}\delta^{1/2}$. Now, we define $\tilde{\phi}$ to be the function whose graph $\Gamma_{\tilde{\phi}}$ is given by
\begin{align}
\Gamma_{\tilde{\phi}}=\big(\Pi_1^{-1}\circ \Pi_2^{-1}\circ \hdots \circ \Pi_M^{-1}\big)[\Gamma_{\phi}]
\end{align}
Note that the preimage on the right-hand side above is guaranteed to generate a graph of a continuous function on $B^d_R$ by repeatedly applying the first part of Lemma~\ref{lem:preimage pi} to each layer in the composition. Similarly, applying the second part of Lemma~\ref{lem:preimage pi} repeatedly we get
\begin{align}
\big(\Pi_1^{-1}\circ \Pi_2^{-1}\circ \hdots \circ \Pi_M^{-1}\big)[\Sigma^{\varepsilon}_{\phi}(B^d_R)]=\Sigma^{\varepsilon}_{\tilde{\phi}}(B^d_R)
\label{eq:Sigma phi tilde}
\end{align}
Combining the inclusion in \eqref{eq:Gamma_f inclusion} and equation \eqref{eq:Sigma phi tilde} gives
\begin{align}
\Gamma_{\hat{\phi}}\subseteq \Sigma^{\varepsilon}_{\tilde{\phi}}(B^d_R)
\end{align}
which we wanted. To prove the second inclusion we first note that by Proposition~\ref{proposition:multi-polytopes} we know
\begin{align}
\Pi_M\circ\hdots \circ \Pi_2\circ \Pi_1\big(\Gamma_{\phi}\big)\subseteq \Sigma^\varepsilon_{\phi}(\partial \mathcal{P}_M)\cup \Gamma_{\phi}(\mathcal{P}_M)
\end{align}
for the given $\varepsilon$. But as $\Sigma^\varepsilon_{\phi}(\partial \mathcal{P}_M)\cup \Gamma_{\phi}(\mathcal{P}_M)\subset \Sigma^\varepsilon_{\phi}(\mathcal{P}_M)\subset \Sigma^\varepsilon_{\phi}(B^d_R)$ it is also true that
\begin{align}
\Pi_M\circ\hdots \circ \Pi_2\circ \Pi_1\big(\Gamma_{\phi}\big)\subseteq \Sigma^\varepsilon_{\phi}(B^d_R)
\end{align}
Thus, in particular
\begin{align}
\Gamma_{\phi}\subseteq \big(\Pi_1^{-1}\circ \Pi_2^{-1}\circ \hdots \circ \Pi_M^{-1}\big)[\Sigma^\varepsilon_{\phi}(B^d_R)]
\end{align}
and hence
\begin{align}
\Gamma_{\phi}\subseteq \Sigma^{\varepsilon}_{\tilde{\phi}}(B^d_R)
\end{align}
by \eqref{eq:Sigma phi tilde}.
\end{proof}
We are now ready to state and prove the main result of this paper.
\begin{thm}[Error Estimate]
\label{thm:main-result}
Let $\phi:B^d_R \to \IR$ be a $C^2$-function with bounded second-order partial derivatives. For each positive $\delta\leq R\left(1-\frac{1}{\sqrt{2}}\right)$, we can construct a fully-connected ReLU network $F:B^d_R \times \IR \to \IR$ of width $d+1$ and depth at most $Cd\big(\frac{32R}{\delta}\big)^{\frac{d+1}{2}}$, for a constant $C$, such that the decision boundary $\Gamma$ of $F$ can be represented as the graph $\Gamma_{\hat{\phi}}$ of a continuous piecewise linear function $\hat{\phi}:B^d_R \to \IR$ satisfying 
\begin{align}
\label{eq:main-result}
 %\recallLabel{eq:main-result}
 \boxed{
\sup_{x\in B^d_R}|\phi(x)-\hat{\phi}(x)|\leq C_5(d-1)R^{3/2}\delta^{1/2}
}
\end{align}
where $C_5$ is a constant independent on $\delta, R$ and $d$.
%\addtocounter{thm}{-1}
\end{thm}

\begin{proof}[Proof of Theorem~\ref{thm:main-result}]
Let $\tilde{\phi}:B^d_R\to \IR$ be the function in Proposition~\ref{lem:Gamma Gamma_phi inclusion}. As noted in Remark~\ref{rem:band}, it follows that
\begin{align}
\begin{split}
\sup_{x\in B^d_R}|\hat{\phi}(x)-\tilde{\phi}(x)|&\leq \varepsilon\\
\sup_{x\in B^d_R} |\phi(x)-\tilde{\phi}(x)| &\leq \varepsilon
\end{split}
\end{align}
for $\varepsilon=C_4(d-1)R^{3/2}\delta^{1/2}$. By the triangle inequality we get
\begin{align}
\sup_{x\in B^d_R} |\phi(x)-\hat{\phi}(x)|\leq 2\varepsilon 
\label{eq:2epsilon}
\end{align}
Plugging in the value of $\varepsilon$ in \eqref{eq:2epsilon} gives
\begin{align}
\sup_{x\in B^d_R} |\phi(x)-\hat{\phi}(x)| \leq C_5(d-1)R^{3/2}\delta^{1/2}
\end{align}
where $C_5=2C_4$. The bound on the number of layers follows from Proposition~\ref{cor:tot layers}. As our modified network architecture can be rewritten as a standard fully-connected ReLU network, by Lemma~\ref{lem:Equivalence of Architectures}, the theorem follows.
\end{proof}
An example of the resulting decision boundary generated by the sequence of polytopes in Figure~\ref{fig:Sequence of Polytopes} is shown in Figure~\ref{fig:decision boundary-approx}. By choosing $\delta$ small enough we can approximate $\Gamma_{\phi}$ to any accuracy.
\begin{figure} 
\centering
\includegraphics[scale=0.45]{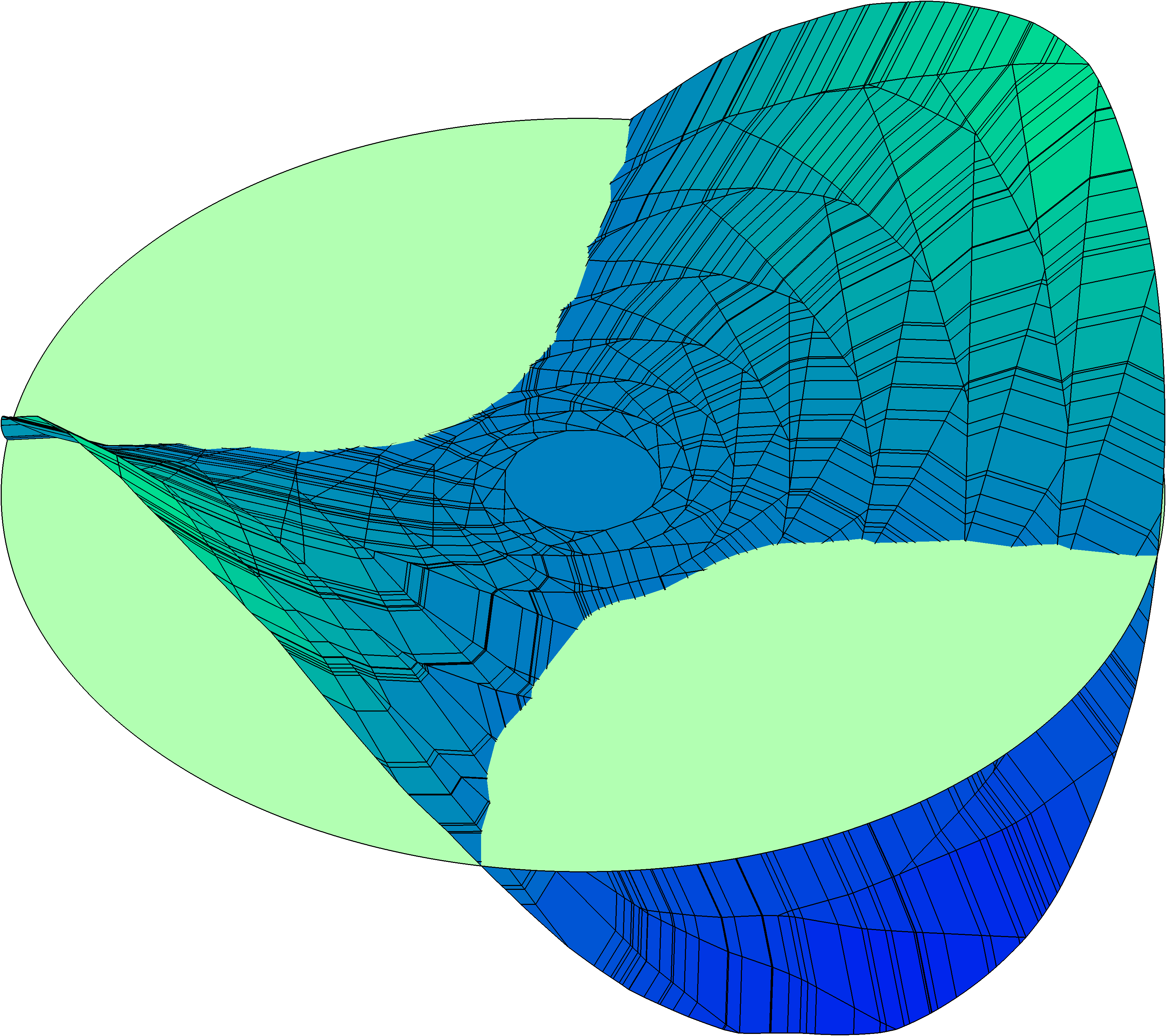}
\caption{\textit{Decision Boundary.} The decision boundary $\Gamma$ of the network, as the preimage of $\ker(\tilde{L})$, will be a piecewise linear approximation of the original graph $\Gamma_{\phi}$. It is the graph of a continuous piecewise linear function $\hat{\phi}$ which approximates $\phi$ on $B^d_R$. The level of approximation depends on the discretization parameter $\delta$.}
\label{fig:decision boundary-approx}
\end{figure}

\begin{cor}
Given $\varepsilon >0$ there is a fully-connected ReLU network $F:B^d \times \IR \to \IR$ of width $d+1$ and depth at most $C\big(\frac{R^2}{\varepsilon}\big)^{d+1}$, for a constant $C=C(\phi,d)$, such that the corresponding function $\hat{\phi}:B^d_R \to \IR$ $\varepsilon$-approximates $\phi$
\begin{align}
\sup_{x\in B^d_R} |\phi(x)-\hat{\phi}(x)| \leq \varepsilon
\end{align}
\end{cor}
\begin{proof}
The result follows immediately from Theorem~\ref{thm:main-result} and the proof of Proposition~\ref{cor:tot layers} by choosing $\delta=\big(\frac{\varepsilon}{C_5(d-1)R^{3/2}}\big)^2$.
\end{proof}
\begin{rem}[Proof of Lemma~\ref{lem:implicit-approximation-capacity}]
Modulo a trivial assumption on the sign of $F$, Lemma~\ref{lem:implicit-approximation-capacity} directly follows from this corollary. The main difference to this corollary is that the formulation of Lemma~\ref{lem:implicit-approximation-capacity} avoids introducing the auxiliary continuous piecewise linear function $\hat{\phi}$.
\end{rem}
Figure~\ref{fig:Gammaphi_finer_approximation} shows a finer approximation $\Gamma$ to $\Gamma_{\phi}$ (using a smaller value of $\delta$) and the corresponding decision boundary when restricting $\Gamma$ to $B^d_R$, i.e., the network's approximation to $\gamma_\phi$.
\begin{figure} 
\centering
\begin{subfigure}[b]{0.49\textwidth}
\centering
\includegraphics[width=\textwidth]{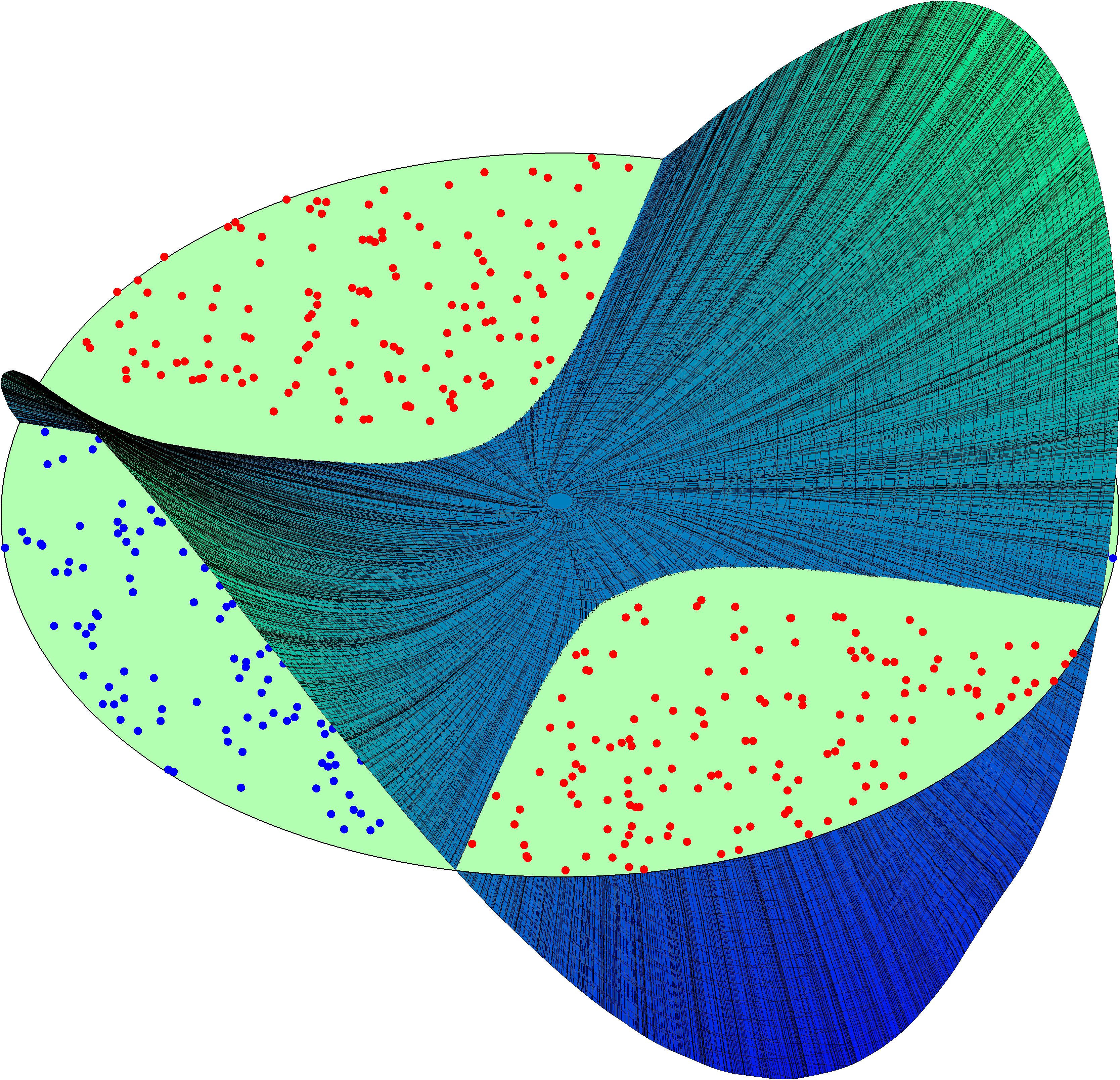}
\caption{ }
\end{subfigure}
\hfill
\begin{subfigure}[b]{0.45\textwidth}
\centering
\includegraphics[width=\textwidth]{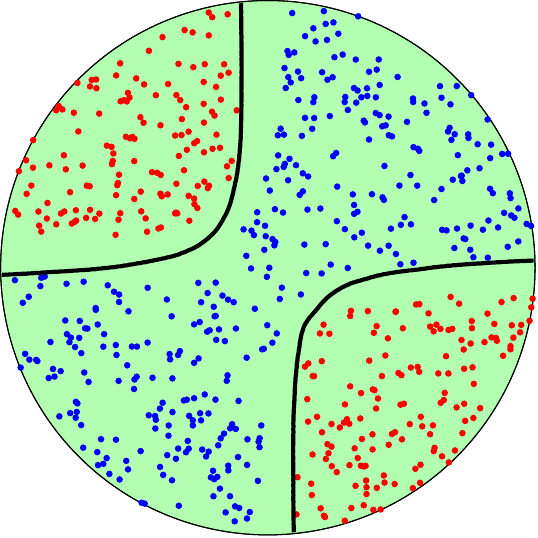}
\caption{ }
\end{subfigure}
\hfill
\caption{\textit{Finer Approximation.} \textbf{(a)} A smaller value of $\delta$ generates a network whose zero contour $\Gamma$ is a higher resolution approximation to $\Gamma_{\phi}$. \textbf{(b)} The restriction of $\Gamma$ to $B^d_R$ approximates the decision boundary $\gamma_\phi \subset B^d_R$ separating $X_1$ and $X_2$ in $\IR^d$, see Figure~\ref{fig:Datapoints}.} 
\label{fig:Gammaphi_finer_approximation}
\end{figure}

\section{Conclusions} \label{sec:conclusions}
We have developed an approximation theory for deep fully-connected ReLU networks. Based on the geometrical description of ReLU layers we proposed a modified network architecture. Each such network can explicitly be rewritten as a standard fully-connected ReLU network. A key observation was how these layers could be reduced to projections on hyperplanes along given directions.
 
Our result shows that if there exists a hypersurface separating two classes of points in a binary classification problem, then the decision boundary of a fully-connected ReLU network can approximate this surface to any given accuracy as long as it can be represented as a level-set of a $C^2$-function $\phi:B^d_R\to \IR$. Our proof is constructive and we have derived an explicit upper bound on the approximation error in terms of the discretization parameter $\delta$, input dimension $d$, level-set function $\phi$ and radius of the domain $R$. We have described in detail how the graph $\Gamma_{\phi}$ is mapped when propagating through the layers in the network and how the decision boundary emerges as the preimage of a hyperplane identified as the kernel of the final affine layer.

To achieve a given approximation error $\varepsilon$ our construction requires $C\big(\frac{R^2}{\varepsilon}\big)^{d+1}$ layers. This estimate is in line with the related work\cite{hanin2017approximating}, and the exponent $d$ seems unavoidable without any further assumption on the level-set function $\phi$. 

Apart from the rich expressiveness of the class of functions defined by a network architecture, the success of deep learning also stems from the fact that these networks can be trained properly in practice. This is often done by using some variant of the stochastic gradient descent (SGD) algorithm \cite{duchi2011adaptive, kingma2014adam} to solve an optimization problem. One key aspect that might partly explain why these learning algorithms are able to find feasible solutions could be that there are many different values of the network parameters that are sufficiently good for the problem at hand. 

In line with this, we discussed in Section~\ref{sec:modified-structure} that many different affine maps generate the same cone and many cones can be reduced to the same projections in our construction. Further, the order of the layers in each map $\Pi_k$ does not matter and the exact placement of the hyperplanes in the layers is not very important as long as they cover the ball in the $\epsilon$-net sense. In our proof, we explicitly defined the projection directions as the directional derivative of $\phi$ at a given point to obtain a good approximation accuracy. But as noted in Remark~\ref{rem:local accuracy}, in classification problems high accuracy is only essential in regions where the two sets of data are close to each other. Therefore, one may conclude that the choice of projection directions can be less restrictive on other parts of the domain. Similarly, if there are many possible separating hypersurfaces (recall Figure~\ref{fig:region}) it is enough to find a set of network parameters realizing an approximation of one of them.

On the other hand, from the geometrical description provided in Section~\ref{sec:2} it is clear that the layers cannot be chosen without any caution. If points from different classes are mapped to the same point it will not be possible to obtain a separating decision boundary. For instance, imagine if the cone of the first layer is positioned in the center of a point cloud of the data. Then there would be a high risk that the data gets mixed when it is projected onto the cone boundary. Especially in parts of the induced partition which are mapped to low-dimensional faces. Therefore, it seems reasonable to start projecting small parts from the boundary of the domain inwards towards the center as in the proposed construction in this paper.

\bigskip
\paragraph{Acknowledgement.} This research was supported in part by the Wallenberg AI, Autonomous Systems and
Software Program (WASP) funded by the Knut and Alice Wallenberg Foundation; the Swedish Research
Council Grants Nos.\  2017-03911,  2021-04925;  and the Swedish
Research Programme Essence. 

\bibliographystyle{habbrv}
\footnotesize{
\bibliography{ref_approx}

\begin{thebibliography}{10}
\expandafter\ifx\csname url\endcsname\relax
  \def\url#1{\texttt{#1}}\fi
\expandafter\ifx\csname doi\endcsname\relax
  \def\doi#1{\burlalt{doi:#1}{http://dx.doi.org/#1}}\fi
\expandafter\ifx\csname urlprefix\endcsname\relax\def\urlprefix{}\fi
\expandafter\ifx\csname href\endcsname\relax
  \def\href#1#2{#2}\fi
\expandafter\ifx\csname burlalt\endcsname\relax
  \def\burlalt#1#2{\href{#2}{#1}}\fi

\bibitem{alfarra2022decision}
M.~Alfarra, A.~Bibi, H.~Hammoud, M.~Gaafar, and B.~Ghanem.
\newblock On the decision boundaries of neural networks: A tropical geometry perspective.
\newblock {\em IEEE Trans. Pattern Anal. Mach. Intell.}, 45(4):5027--5037, 2022.
\newblock \doi{10.1109/TPAMI.2022.3201490}.

\bibitem{ba2013deep}
J.~Ba and R.~Caruana.
\newblock Do deep nets really need to be deep?
\newblock In {\em Proceedings of the 28th International Conference on Neural Information Processing Systems (NIPS'14)}, volume~27 of {\em Adv. Neural Inf. Process. Syst.}, 2014.
\newblock \doi{10.48550/arXiv.1312.6184}.

\bibitem{beise2021decision}
H.-P. Beise, S.~{Dias Da Cruz}, and U.~Schröder.
\newblock On decision regions of narrow deep neural networks.
\newblock {\em Neural Networks}, 140:121--129, 2021.
\newblock \doi{10.1016/j.neunet.2021.02.024}.

\bibitem{berzins2023polyhedral}
A.~Berzins.
\newblock Polyhedral complex extraction from relu networks using edge subdivision.
\newblock In {\em International Conference on Machine Learning}, pages 2234--2244. PMLR, 2023.

\bibitem{brown2020language}
T.~B. Brown et~al.
\newblock Language models are few-shot learners.
\newblock In {\em Proceedings of the 34th International Conference on Neural Information Processing Systems (NeurIPS'20)}, volume~33 of {\em Adv. Neural Inf. Process. Syst.}, 2020.
\newblock \doi{10.48550/arXiv.2005.14165}.

\bibitem{bubeck2023sparks}
S.~Bubeck et~al.
\newblock Sparks of artificial general intelligence: Early experiments with gpt-4.
\newblock {\em arXiv preprint arXiv:2303.12712}, 2023.

\bibitem{cybenko1989approximation}
G.~Cybenko.
\newblock Approximation by superpositions of a sigmoidal function.
\newblock {\em Math. Control Signals Syst.}, 2(4):303--314, 1989.
\newblock \doi{10.1007/BF02551274}.

\bibitem{duchi2011adaptive}
J.~Duchi, E.~Hazan, and Y.~Singer.
\newblock Adaptive subgradient methods for online learning and stochastic optimization.
\newblock {\em Journal of machine learning research}, 12(7), 2011.

\bibitem{fawzi2018empirical}
A.~Fawzi, S.-M. Moosavi-Dezfooli, P.~Frossard, and S.~Soatto.
\newblock Empirical study of the topology and geometry of deep networks.
\newblock In {\em Proceedings of the IEEE/CVF Conference on Computer Vision and Pattern Recognition}, pages 3762--3770, 2018.
\newblock \doi{10.1109/CVPR.2018.00396}.

\bibitem{hanin2019deep}
B.~Hanin and D.~Rolnick.
\newblock Deep {ReLU} networks have surprisingly few activation patterns.
\newblock In {\em Proceedings of the 33rd International Conference on Neural Information Processing Systems (NeurIPS'19)}, volume~32 of {\em Adv. Neural Inf. Process. Syst.}, 2019.
\newblock \doi{10.48550/arXiv.1906.00904}.

\bibitem{hanin2017approximating}
B.~Hanin and M.~Sellke.
\newblock Approximating continuous functions by {ReLU} nets of minimal width.
\newblock {\em arXiv:1710.11278}, 2017.
\newblock \doi{10.48550/arXiv.1710.11278}.

\bibitem{he2016deep}
K.~He, X.~Zhang, S.~Ren, and J.~Sun.
\newblock Deep residual learning for image recognition.
\newblock In {\em Proceedings of the 2016 IEEE Conference on Computer Vision and Pattern Recognition (CVPR)}, pages 770--778, 2016.
\newblock \doi{10.1109/CVPR.2016.90}.

\bibitem{huchette2023deep}
J.~Huchette, G.~Mu{\~n}oz, T.~Serra, and C.~Tsay.
\newblock When deep learning meets polyhedral theory: A survey.
\newblock {\em arXiv preprint arXiv:2305.00241}, 2023.

\bibitem{kingma2014adam}
D.~P. Kingma and J.~Ba.
\newblock Adam: A method for stochastic optimization.
\newblock {\em arXiv preprint arXiv:1412.6980}, 2014.

\bibitem{krizhevsky2012imagenet}
A.~Krizhevsky, I.~Sutskever, and G.~E. Hinton.
\newblock Image{N}et classification with deep convolutional neural networks.
\newblock {\em Commun. ACM}, 60(6):84--90, may 2017.
\newblock \doi{10.1145/3065386}.

\bibitem{lee2024defining}
S.~Lee, A.~Mammadov, and J.~C. Ye.
\newblock Defining neural network architecture through polytope structures of dataset.
\newblock {\em arXiv preprint arXiv:2402.02407}, 2024.

\bibitem{liu2022some}
B.~Liu and M.~Shen.
\newblock Some geometrical and topological properties of {DNN}s' decision boundaries.
\newblock {\em Theor. Comput. Sci.}, 908:64--75, 2022.
\newblock \doi{10.1016/j.tcs.2021.11.013}.

\bibitem{lu2017expressive}
Z.~Lu, H.~Pu, F.~Wang, Z.~Hu, and L.~Wang.
\newblock The expressive power of neural networks: A view from the width.
\newblock In {\em Proceedings of the 31st International Conference on Neural Information Processing Systems (NIPS'17)}, volume~30 of {\em Adv. Neural Inf. Process. Syst.}, 2017.
\newblock \doi{10.48550/arXiv.1709.02540}.

\bibitem{montufar2014number}
G.~Mont\'{u}far, R.~Pascanu, K.~Cho, and Y.~Bengio.
\newblock On the number of linear regions of deep neural networks.
\newblock In {\em Proceedings of the 27th International Conference on Neural Information Processing Systems (NIPS'14)}, volume~26 of {\em Adv. Neural Inf. Process. Syst.}, 2014.
\newblock \doi{10.48550/arXiv.1402.1869}.

\bibitem{moosavi2019robustness}
S.-M. Moosavi-Dezfooli, A.~Fawzi, J.~Uesato, and P.~Frossard.
\newblock Robustness via curvature regularization, and vice versa.
\newblock In {\em Proceedings of the IEEE/CVF Conference on Computer Vision and Pattern Recognition}, pages 9078--9086, 2019.
\newblock \doi{10.1109/CVPR.2019.00929}.

\bibitem{nguyen2018neural}
Q.~Nguyen, M.~C. Mukkamala, and M.~Hein.
\newblock Neural networks should be wide enough to learn disconnected decision regions.
\newblock In {\em Proceedings of the 35th International Conference on Machine Learning}, volume~80 of {\em Proc. Mach. Learn. Res.}, pages 3740--3749, 2018.
\newblock \urlprefix\url{https://proceedings.mlr.press/v80/nguyen18b.html}.

\bibitem{park2020minimum}
S.~Park, C.~Yun, J.~Lee, and J.~Shin.
\newblock Minimum width for universal approximation.
\newblock In {\em International Conference on Learning Representations}, 2021.
\newblock \urlprefix\url{https://openreview.net/forum?id=O-XJwyoIF-k}.

\bibitem{piwek2023exact}
P.~Piwek, A.~Klukowski, and T.~Hu.
\newblock Exact count of boundary pieces of relu classifiers: Towards the proper complexity measure for classification.
\newblock In {\em Uncertainty in Artificial Intelligence}, pages 1673--1683. PMLR, 2023.

\bibitem{ray2023chatgpt}
P.~P. Ray.
\newblock Chatgpt: A comprehensive review on background, applications, key challenges, bias, ethics, limitations and future scope.
\newblock {\em Internet of Things and Cyber-Physical Systems}, 3:121--154, 2023.

\bibitem{silver2017mastering}
D.~Silver et~al.
\newblock Mastering the game of {G}o without human knowledge.
\newblock {\em Nature}, 550(7676):354--359, 2017.
\newblock \doi{10.1038/nature24270}.

\bibitem{telgarsky2015representation}
M.~Telgarsky.
\newblock Representation benefits of deep feedforward networks.
\newblock {\em arXiv:1509.08101}, 2015.
\newblock \doi{10.48550/arXiv.1509.08101}.

\bibitem{telgarsky2016benefits}
M.~Telgarsky.
\newblock Benefits of depth in neural networks.
\newblock In {\em 29th Annual Conference on Learning Theory}, volume~49 of {\em Proc. Mach. Learn. Res.}, pages 1517--1539, 2016.
\newblock \urlprefix\url{https://proceedings.mlr.press/v49/telgarsky16.html}.

\bibitem{vallin2023geometry}
J.~Vallin, K.~Larsson, and M.~G. Larson.
\newblock The geometric structure of fully-connected {ReLU}-layers.
\newblock {\em arXiv:2310.03482}, 2023.
\newblock \doi{10.48550/arXiv.2310.03482}.

\bibitem{vaswani2017attention}
A.~Vaswani et~al.
\newblock Attention is all you need.
\newblock In {\em Proceedings of the 31st International Conference on Neural Information Processing Systems (NIPS'17)}, volume~30 of {\em Adv. Neural Inf. Process. Syst.}, 2017.
\newblock \doi{10.48550/arXiv.1706.03762}.

\bibitem{wu2016google}
Y.~Wu et~al.
\newblock Google's neural machine translation system: Bridging the gap between human and machine translation.
\newblock {\em CoRR}, abs/1609.08144, 2016.
\newblock \doi{10.48550/arXiv.1609.08144}.

\bibitem{yarotsky2017error}
D.~Yarotsky.
\newblock Error bounds for approximations with deep {ReLU} networks.
\newblock {\em Neural Networks}, 94:103--114, 2017.
\newblock \doi{10.1016/j.neunet.2017.07.002}.

\bibitem{pmlr-v80-zhang18i}
L.~Zhang, G.~Naitzat, and L.-H. Lim.
\newblock Tropical geometry of deep neural networks.
\newblock In {\em Proceedings of the 35th International Conference on Machine Learning}, volume~80 of {\em Proc. Mach. Learn. Res.}, pages 5824--5832, 2018.
\newblock \urlprefix\url{https://proceedings.mlr.press/v80/zhang18i.html}.

\end{thebibliography}
}

%\bigskip
\bigskip
\noindent
\footnotesize{\bf Authors' addresses:}

\smallskip
\noindent
Jonatan Vallin  \quad \hfill \addressumushort\\
{\tt jonatan.vallin@umu.se}

\smallskip
\noindent
Karl Larsson, \quad \hfill \addressumushort\\
{\tt karl.larsson@umu.se}

\smallskip
\noindent
Mats G. Larson,  \quad \hfill \addressumushort\\
{\tt mats.larson@umu.se}

\normalsize

\pagebreak
\appendix
\section{Proofs}
In this appendix, we present the proofs of auxiliary lemmas and propositions that were omitted from the main text.
\small
\subsection{Proofs of Section~\ref{sec:structure}}
\label{Appendix A}
\begin{lemn}[\ref{lemma:ReLU_on_compact} (Projections on Hyperplanes)]
Let $K\subset \IR^{d+1}$ be a bounded set. Given a closed half-space $\mathbf{U}$ with the hyperplane $\mathbf{P}$ as its boundary and a vector $\xi \in \IR^{d+1}$ not parallel to $\mathbf{P}$, we can construct a projection $\pi:\IR^{d+1} \to S$ defined as in \eqref{eq:projection_pi} such that $\pi$ projects $  \mathbf{U}^c \cap K$ on $\mathbf{P}$ along $\xi$ while acting as the identity on $ \mathbf{U} \cap K$.
\end{lemn}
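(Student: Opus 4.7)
The plan is to explicitly build the polyhedral cone $S$ underlying $\pi$ so that one of its bounding hyperplanes coincides with $\mathbf{P}$ and the dual vector complementary to that hyperplane points along $\xi$. Concretely, I would set $a_{d+1}$ to be the inward-pointing normal to $\mathbf{U}$ and pick $b_{d+1}$ so that $\mathbf{P}_{d+1}=\mathbf{P}$ and $\mathbf{U}_{d+1}=\mathbf{U}$; then choose any linearly independent family $a_1,\ldots,a_d$ lying in the orthogonal complement $\xi^\perp \subset \IR^{d+1}$. Because $\xi$ is not parallel to $\mathbf{P}$ we have $a_{d+1}\cdot \xi \neq 0$, so $a_{d+1}\notin \xi^\perp$ and $\{a_1,\ldots,a_{d+1}\}$ is a basis, giving a non-degenerate cone with a well-defined dual basis and apex $x_0$ (with $x_0 \in \mathbf{P}$ because $Ax_0+b=0$ forces $a_{d+1}\cdot x_0+b_{d+1}=0$).

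The key geometric point to verify next is that $a_{d+1}^*\parallel \xi$. By the description in Section~\ref{sec:structure}, $a_{d+1}^*$ is parallel to the line $L_{d+1}=\bigcap_{i\leq d}\mathbf{P}_i$, which is the orthogonal complement of $\mathrm{span}\{a_1,\ldots,a_d\}=\xi^\perp$; hence $L_{d+1}$ is the line through the origin spanned by $\xi$, and $a_{d+1}^*$ is parallel to $\xi$ as required. For the remaining offsets $b_1,\ldots,b_d$ I would use the boundedness of $K$: choose each $b_i > -\inf_{x\in K} a_i\cdot x$ (finite because $K$ is bounded), which forces $K\subset \mathbf{U}_i$ for every $i\leq d$. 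This is the step where boundedness of $K$ is essential.

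With the construction in place, verifying the two required properties reduces to an expansion in the dual basis. For any $x\in K$, write $x=x_0+\sum_{i=1}^{d+1}\alpha_i a_i^*$. The inclusion $K\subset \mathbf{U}_i$ for $i\leq d$ gives $\alpha_i\geq 0$ there. If $x\in \mathbf{U}\cap K$ then also $\alpha_{d+1}\geq 0$, and by the piecewise definition \eqref{eq:projection_pi} we get $\pi(x)=x$, so $\pi$ is the identity on $\mathbf{U}\cap K$. If $x\in \mathbf{U}^c\cap K$ then $\alpha_{d+1}<0$ while the other coefficients remain non-negative, so $\pi$ zeros only the $(d+1)$-component, i.e.\ $\pi(x)=x-\alpha_{d+1}a_{d+1}^*$. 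Since $a_{d+1}^*\parallel \xi$, this is a translation along $\xi$, and the image has zero $(d+1)$-coefficient, hence lies on $\mathbf{P}_{d+1}=\mathbf{P}$.

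I do not anticipate a serious obstacle; the construction is a straightforward dual-basis choice. The only subtlety I would pay attention to is the sign and orientation bookkeeping needed to conclude that the motion really lands on $\mathbf{P}$ and is along the line spanned by $\xi$ (as opposed to some parallel translate or a different line in $\xi^\perp{}^\perp$); this is controlled entirely by the duality relations $a_j\cdot a_i^*=\delta_{ij}$ and by $x_0\in \mathbf{P}$. The boundedness hypothesis on $K$ is what makes the argument work at all, since without it no finite offsets $b_i$ would be able to keep the auxiliary hyperplanes on one side of $K$, and the projection could not simultaneously be the identity on $\mathbf{U}\cap K$ and a single-direction map on $\mathbf{U}^c\cap K$.
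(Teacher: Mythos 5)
Your construction is essentially the paper's proof: fix one defining inequality of the cone to coincide with $\mathbf U$, orient the remaining $d$ normals so that their common kernel is spanned by $\xi$ (which forces $a_{d+1}^*\parallel\xi$), and use boundedness of $K$ to push the auxiliary hyperplanes far enough away that $K$ lies strictly inside all of $\mathbf U_1,\ldots,\mathbf U_d$; then the partition formula \eqref{eq:projection_pi} gives the two required behaviors. The only (harmless) imprecision is the phrase "$L_{d+1}$ is the line through the origin spanned by $\xi$": the affine line $\bigcap_{i\le d}\mathbf P_i$ passes through the apex $x_0$, not the origin, but it is parallel to $\xi$, which is all your argument actually uses.
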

\begin{proof}
The action of $\pi$ on $K$ will depend on the corresponding cone $S$ and its intersection with $K$. As noted before, the cone can be derived given an affine map and conversely. We will show how to construct a matrix $A$ and a vector $b$ such that the affine map $x\mapsto Ax+b$ generates a cone $S$ for which the corresponding projection $\pi$ satisfies the conditions.
Fix an index $j\in I$ and define the $j$:th row vector $a_j$ in $A$ and the $j$:th scalar $b_j$ in $b$ such that $\mathbf{U}_j=\{x\in \IR^{d+1}:a_j\cdot x+b_j\geq 0\}$ coincides with the given half-space $\mathbf{U}$. Then, we decompose the set $K$ into $K^+=\mathbf{U} \cap K$ and $K^-=\mathbf{U}^c \cap K$.

We want to show that we can choose the other row vectors $a_i$ and the vector components $b_i$ for $i \in I\setminus\{j\}$ such that $\pi$ realizes a map projecting $K^-$ on the hyperplane $\mathbf{P}$ along the given direction $\xi$ while leaving $K^+$ unchanged. We continue by defining a set of $d$ linearly independent vectors $a_i$, $i\in I\setminus \{j\}$, in $\IR^{d+1}$ such that the line 
\begin{align}
L_j=\bigcap_{i\in I\setminus \{j\}} \{x\in \IR^{d+1}:a_i\cdot x=0\}
\label{eq:line_j}
\end{align} 
is parallel to the given direction $\xi$. The dual vector $a_j^*$ is thus parallel to $\xi$. Then, we can choose $b_i\in \IR$, $i\in I\setminus \{j\}$, such that 
\begin{align}
K \subset \bigcap_{i\in I\setminus \{j\}} \mathbf{U}_i^{o} 
\label{eq:omega_contained}
\end{align}
where each $\mathbf{U}_i^{o}=\{x\in \IR^{d+1}:a_i\cdot x+b_i > 0\}$ is the interior of the corresponding closed half-space $\mathbf{U}_i$ defined in \eqref{eq:halfspace}. Note that this is always possible since $K$ is a bounded set in $\IR^{d+1}$ and therefore we can translate each open half-space $\mathbf{U}^{\circ}_i$ in \eqref{eq:omega_contained}, by tuning $b_i$, until $K\subset \mathbf{U}^{\circ}_i$. 

With this set of parameters, the matrix $A$ will have full rank since the linearly independent vectors $\{a_i:i\in I\setminus \{j\}\}$ are all orthogonal to $\xi$ (by construction) which in turn is not orthogonal to $a_j$ (since $\xi$ is, by assumption, not parallel to the hyperplane $\mathbf{P}$ with normal $a_j$). Thus, with these parameters we will get a cone $S=\bigcap_{i\in I}\mathbf{U}_i$ such that
\begin{align}
\begin{split}
K^+&\subset S\\
K^-&\subset S_{(I\setminus \{j\},\{j\})}
\end{split}
\end{align}
Then, according to equation \eqref{eq:projection_pi} the corresponding projection $\pi$ will act as the identity on $K^+$ whereas $K^-$ will be projected along $\xi$ onto the hyperplane $\mathbf{P}$. The construction is depicted in Figure~\ref{fig:projection-bounded}.
\end{proof}

\begin{rem}
The vectors $a_i$, $i\in I\setminus \{j\}$ in the proof are not uniquely determined as the only requirements are that they are linearly independent and that the line in \eqref{eq:line_j} is parallel to the given vector $\xi$. In the same way, there are many different values of the scalars $b_i$, $i\in I\setminus \{j\}$ yielding the same action on $K$ by $\pi$. The condition in \eqref{eq:omega_contained} is met as long as we choose the scalars such that all the corresponding open half-spaces $\mathbf{U}_i^{\circ}$, $i\in I\setminus \{j\}$ contain $K$. Thus, moving these half-spaces such that their corresponding hyperplanes are even further away from $K$ will not change how $K$ is mapped by $\pi$. 
\label{rem:compact}
\end{rem}

\subsection{Proofs of Section~\ref{sec: projecting the graph}}
\label{Appendix B}
\begin{propn}[\ref{proposition:one_projection}]
Suppose $x\in U^c \cap \bar{B}_r^d$ and $t=\text{dist}(x,P)$. If $d>1$, then
\begin{align}
\recallLabel{eq:one_projection}
\boxed{
|\phi(x)+t\nabla_{\beta}\phi(p)-\phi(x+t\beta)|\leq C_1(d-1)\sqrt{r}\delta^{1/2}t
}
\end{align}
where $C_1$ is a constant independent of $\delta$, $d$ and $r$.
\end{propn}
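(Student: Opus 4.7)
The plan is to rewrite $E := \phi(x) + t\nabla_\beta\phi(p) - \phi(x + t\beta)$ as a sum of two Taylor-type remainders, so that the geometry of the spherical cap supplies the $\sqrt{r}\delta^{1/2}$ factor while the bounded-Hessian hypothesis supplies the $(d-1)$ factor. I first set $q := x + t\beta$; since $t = \mathrm{dist}(x,P)$ and $\beta$ is the unit normal to $P$, we have $q \in P$, so the transverse displacement $v := q - p$ is orthogonal to $\beta$ (both $p$ and $q$ lie in $P$). Using $p = -(r-\delta)\beta$ together with $x \in \bar{B}^d_r$, a Pythagorean computation gives the key geometric bound $\|v\|^2 \leq r^2 - (r-\delta)^2 = 2r\delta - \delta^2 \leq 2r\delta$, so $v$ is confined to the $(d-1)$-dimensional disc $P \cap \bar{B}^d_r$.

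Writing $x = q - t\beta$, I split
\begin{align*}
E \;=\; \underbrace{\bigl[\phi(x) - \phi(q) + t\nabla_\beta\phi(q)\bigr]}_{E_1} \;+\; \underbrace{t\bigl[\nabla_\beta\phi(p) - \nabla_\beta\phi(q)\bigr]}_{E_2}.
\end{align*}
Applying Taylor's theorem to $\phi$ along the segment $[q,x]$ (length $t$, direction $-\beta$) yields $E_1 = \tfrac{1}{2}t^2\,\beta^T H\phi(\xi_1)\beta$, and applying the mean value theorem to the scalar field $y \mapsto \nabla_\beta\phi(y)$ along the segment $[q,p] \subset P$ (displacement $-v$) yields $E_2 = -t\,\beta^T H\phi(\xi_2)\,v$, for some $\xi_1$ and $\xi_2$ on the respective segments. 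The bounded-Hessian hypothesis $|\partial_{ij}\phi| \leq D$ bounds the quadratic form $\beta^T H\phi\,\beta$ by a $D$-dependent constant, while the bilinear form $\beta^T H\phi\,v$ admits a sharper bound of order $(d-1)D\|v\|$ obtained by exploiting that $v$ is confined to the codimension-one subspace $\beta^\perp$.

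Combining these, $|E_2| \leq C(d-1)D\|v\|\,t \leq C'(d-1)D\sqrt{r\delta}\,t$, and since $t \leq \delta \leq R(1 - 2^{-1/2}) \leq r$ we have $t\delta \leq t\sqrt{r\delta}$, so the quadratic-in-$t$ contribution $|E_1| \leq C''Dt^2 \leq C''Dt\sqrt{r\delta}$ is absorbed into the $E_2$ estimate (for $d\geq 2$). This yields the claimed bound $|E| \leq C_1(d-1)\sqrt{r}\,\delta^{1/2}\,t$, with $C_1$ depending only on $D$. The main obstacle is obtaining the clean dimensional factor $(d-1)$ rather than a cruder $d$: this requires a codimension-aware estimate of $\beta^T H\phi\,v$, exploiting $\|\beta\|_2 = 1$ together with $v \in \beta^\perp$, instead of a blanket $\|H\phi\|_{\mathrm{op}}$-type bound. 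The remaining geometric and analytic steps — the Pythagorean $\|v\|$-bound, the two Taylor/MVT expansions, and the absorption of the quadratic-in-$t$ term — follow routinely from $\phi\in C^2$, the cap geometry, and the standing hypothesis $\delta \leq R(1-2^{-1/2})$.
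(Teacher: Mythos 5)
Your proposal is correct and follows essentially the same route as the paper's proof. Your coordinate-free split $E = E_1 + E_2$ is identical to the paper's decomposition: the paper introduces an orthonormal coordinate system $(u_1,\ldots,u_d)$ centered at $p$ with $u_1$ along $-\beta$, writes $E$ in those coordinates, and then performs exactly the two expansions you describe --- a second-order Taylor remainder in $u_1$ giving $\tfrac12 t^2 \partial_{u_1}^2\tilde\phi$ (your $E_1$), and a first-order expansion of $\partial_{u_1}\tilde\phi(0,u_{2:d})$ around the origin giving $\sum_{i=2}^d \partial_{u_i u_1}^2\tilde\phi\cdot u_i$ (your $E_2$). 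The ``codimension-aware estimate'' you flag as the main obstacle is precisely what the local coordinate change accomplishes: once $\beta$ is a coordinate axis, $v$ has only $d-1$ nonzero components, so $|\beta^T H v| \leq \sum_{i=2}^d |\partial_{u_i u_1}^2\tilde\phi|\,|u_i| \leq (d-1)D\max_i|u_i| \leq (d-1)D\sqrt{2r}\delta^{1/2}$, and the absorption of the $t^2$ term via $t\leq\delta$ is handled the same way. So your outline is sound; it just states abstractly what the paper carries out via an explicit rotation.
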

\begin{proof}
We first introduce a new local coordinate system of $\IR^d$ situated at the center point $p$ of the base of the spherical cap. We name the corresponding new variables $u_1,u_2,\hdots,u_d$. Let the coordinate axis for the first coordinate $u_1$ be defined along the negative $\beta$-direction, i.e., in the direction pointing out from the half-space $U$. The other coordinate axes can then be chosen arbitrarily as long as the resulting coordinate system is orthogonal. Thus, the coordinate axes $u_2,\hdots,u_d$ meet at the center $p$ and lie in the hyperplane $P$. Figure \ref{fig:local_coordinates} illustrates the construction of the local coordinate system.
\begin{figure} 
\centering
\begin{subfigure}[b]{0.38\textwidth}
\centering
\includegraphics[width=\textwidth]{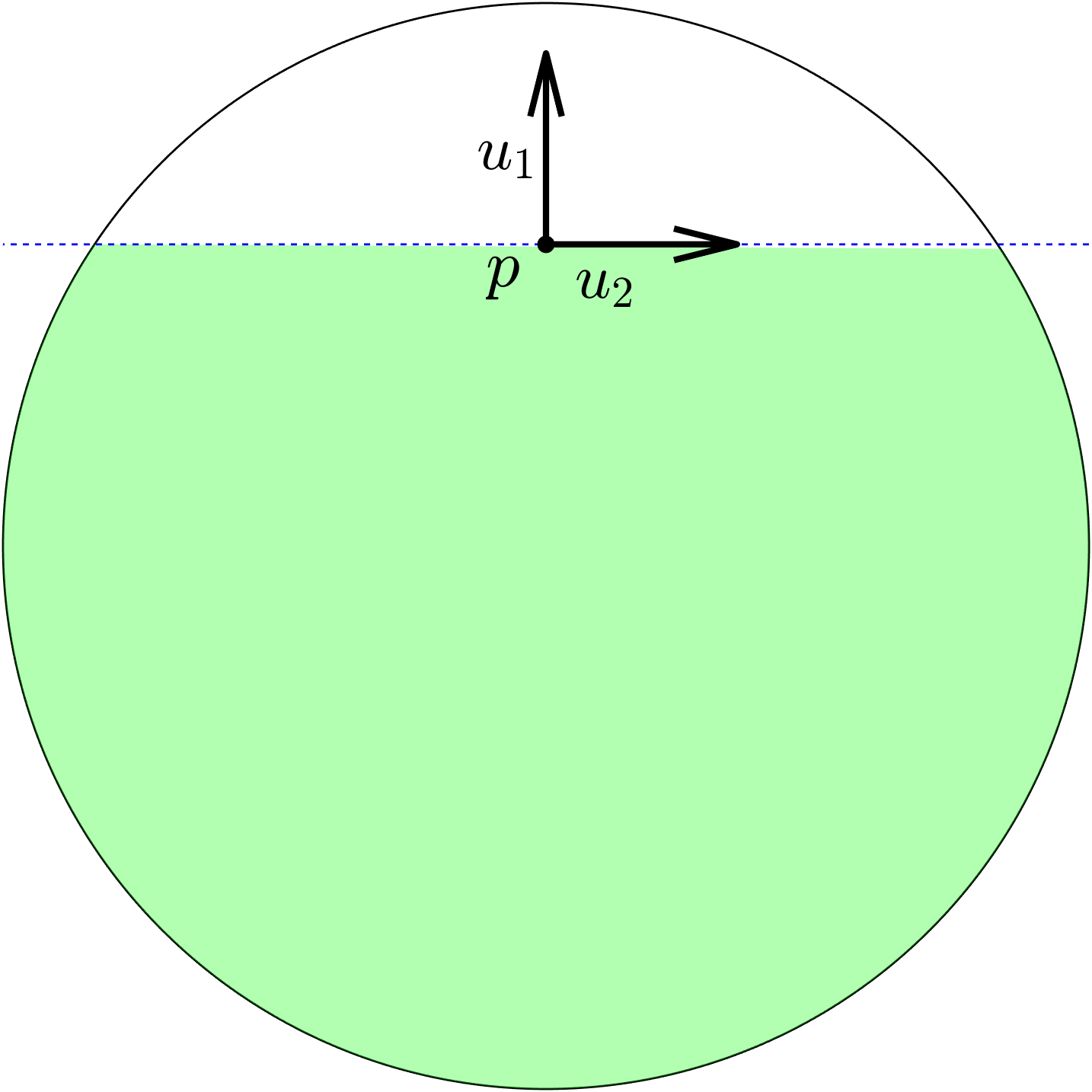}
\caption{ }
\end{subfigure}
\hfill
\begin{subfigure}[b]{0.61\textwidth}
\centering
\includegraphics[width=\textwidth]{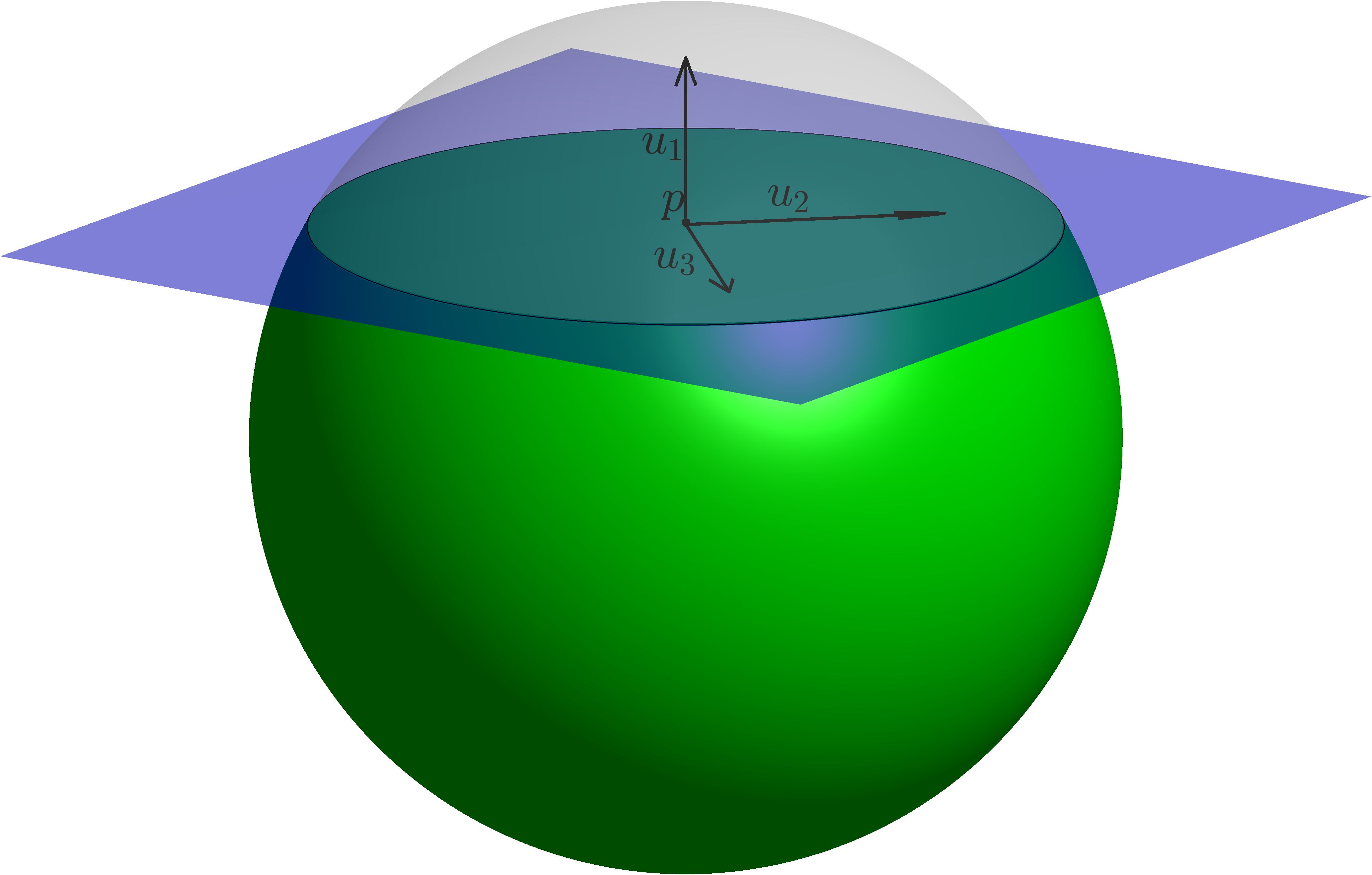}
\caption{ }
\end{subfigure}
\hfill
\caption{\textit{Local Coordinate System.} The construction of the new coordinate system in case \textbf{(a)} $d=2$ and   \textbf{(b)} $d=3$. The first coordinate axis is defined along the negative $\beta$-direction whereas the other coordinate axes are parallel to the hyperplane $P$. The new axes meet at the center point $p$.}
\label{fig:local_coordinates}
\end{figure}
We note that the origin in this coordinate system coincides with the point $p$ and if we let $\tilde{\phi}$ denote the function $\phi$ expressed in the new coordinates we see that $\nabla_{\beta} \phi(p)=-\frac{\partial \tilde{\phi}(0)}{\partial u_1}$. Since $\text{dist}(x,P)=t$, the point $x$ expressed in the new coordinate system will have coordinates $(t,u_2,\hdots,u_d):=(t, u_{2:d})$ for some $u_2,u_3,\hdots, u_d$ satisfying
\begin{align}
|u_i|\leq \sqrt{2r\delta-\delta^2}\leq \sqrt{2r}\delta^{1/2}, \qquad \text{for } i=2,3\hdots,d
\end{align}
as the point lies inside the spherical cap (see Figure \ref{fig:P1_geom}).
Similarly, as $x+t\beta$ lie in $P$ and $\beta$ points in the negative $u_1$ direction, this point will have coordinates $(0,u_{2:d})$. We can now express the quantity in \eqref{eq:quantity} in terms of the new variables
\begin{align}
|\phi(x)+t\nabla_{\beta}\phi(p)-\phi(x+t\beta)|=|\tilde{\phi}(t,u_{2:d})-t\frac{\partial \tilde{\phi}(0)}{\partial u_1}-\tilde{\phi}(0,u_{2:d})|
\label{eq:quantity_new}
\end{align}
By Taylor's theorem we can write 
\begin{align}
\tilde{\phi}(t,u_{2:d})=\tilde{\phi}(0,u_{2:d})+\frac{\partial \tilde{\phi}(0,u_{2:d})}{\partial u_1}t + \frac{1}{2}\frac{\partial^2 \tilde{\phi}(l_1t,u_{2:d})}{\partial u^2_1}t^2
\label{eq:taylor}
\end{align}
for some $l_1\in (0,1)$. Similarly, expanding $\frac{\partial \tilde{\phi}(0,u_{2:d})}{\partial u_1}$ in the expression above as a Taylor polynomial in the variables $u_2,\hdots,u_d$ with a first order remainder term gives
\begin{align}
\frac{\partial \tilde{\phi}(0,u_{2:d})}{\partial u_1}=\frac{\partial \tilde{\phi}(0,0)}{\partial u_1}+\sum_{i=2}^d\frac{\partial^2 \tilde{\phi}(0,l_2 u_{2:d})}{\partial u_i \partial u_1}u_i
\label{eq:taylor2}
\end{align}
for  some $l_2\in (0,1)$. Combining \eqref{eq:taylor} and \eqref{eq:taylor2} we get
\begin{align}
\tilde{\phi}(t,u_{2:d})=\tilde{\phi}(0,u_{2:d}) + \frac{\partial \tilde{\phi}(0,0)}{\partial u_1}t+\sum_{i=2}^d\frac{\partial^2 \tilde{\phi}(0,l_2 u_{2:d})}{\partial u_i \partial u_1}u_it +\frac{1}{2}\frac{\partial^2 \tilde{\phi}(l_1t,u_{2:d})}{\partial u^2_1}t^2
\end{align}
Using this in equation \eqref{eq:quantity_new} we obtain
\begin{align}
\begin{split}
|\phi(x)+t\nabla_{\beta}(p)-\phi(x+t\beta)|&=|\sum_{i=2}^d\frac{\partial^2 \tilde{\phi}(0,l_2 u_{2:d})}{\partial u_i \partial u_1}u_it +\frac{1}{2}\frac{\partial^2 \tilde{\phi}(l_1t,u_{2:d})}{\partial u^2_1}t^2|\\
&\leq \sum_{i=2}^d\bigg|\frac{\partial^2 \tilde{\phi}(0,l_2 u_{2:d})}{\partial u_i \partial u_1}\bigg||u_i|t+\frac{1}{2}\bigg|\frac{\partial^2 \tilde{\phi}(l_1t,u_{2:d})}{\partial u^2_1}\bigg|t^2\\
&\leq \sum_{i=2}^d D\sqrt{2r}\delta^{1/2}t + \frac{D}{2}t^2\\
&= (d-1)D\sqrt{2r}\delta^{1/2}t + \frac{D}{2}t^2\\
& \leq \frac{3D}{\sqrt{2}}(d-1)\sqrt{r}\delta^{1/2}t
\end{split}
\label{eq:bound_w}
\end{align}
where we used that $t\leq\delta< r$ and assumed that $d>1$ in the last inequality. Recall that $D$ is the constant bounding the second order derivatives of $\phi$. Thus,
\begin{align}
|\phi(x)+t\nabla_{\beta}(p)-\phi(x+t\beta)|\leq C_1(d-1)\sqrt{r}\delta^{1/2}t
\end{align}
where $C_1=\frac{3D}{\sqrt{2}}$.
\end{proof}
\begin{rem}
\label{rem:d=1}
If $d=1$, then from the second last inequality in \eqref{eq:bound_w} and the argument in the proof of the Corollary~\ref{cor:one_projection} shows that $\varepsilon$ scales as $\delta^2$ which is more favorable than $\sqrt{r}\delta^{3/2}$ as we get in the higher-dimensional cases. This is because if $d=1$, the size of the piece we cut off by the half-space only scales as $\delta$ since we do not have any of the additional directions perpendicular to $\beta$ that introduced a factor of $\sqrt{r}\delta^{1/2}$ into our estimate.
\end{rem}

\begin{lemn}[\ref{lem:PjPi_intersect}]
If two hyperplanes $P_j$ and $P_i$ in \eqref{eq:hyperplanes_halfspaces} intersect in $\bar{B}^d_r$ then
\begin{align}
 \recallLabel{eq:betajbetai_2}
\beta_j\cdot \beta_i\geq \frac{r^2-4r\delta+2\delta^2}{r}
\end{align}
\end{lemn}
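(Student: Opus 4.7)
The plan is to translate the hypothesis that $P_j \cap P_i \cap \bar B^d_r \ne \emptyset$ into a constraint relating $\beta_i \cdot \beta_j$, the common signed distance $r-\delta$ of each hyperplane from the origin, and the fact that a common point $q$ has norm at most $r$. Concretely, I would pick any $q \in P_j \cap P_i \cap \bar B^d_r$ and record the two identities
\begin{align}
\beta_i \cdot q = -(r-\delta), \qquad \beta_j \cdot q = -(r-\delta),
\end{align}
which come directly from the definition of $P_i, P_j$ in \eqref{eq:hyperplanes_halfspaces}.

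The key step is then to add these two identities and apply Cauchy--Schwarz to the sum $\beta_i + \beta_j$. This yields
\begin{align}
2(r-\delta) = |(\beta_i + \beta_j)\cdot q| \leq \|\beta_i + \beta_j\|\,\|q\| \leq r\sqrt{2 + 2\,\beta_i\cdot\beta_j},
\end{align}
where I used $\|\beta_i\| = \|\beta_j\| = 1$ to expand $\|\beta_i+\beta_j\|^2 = 2 + 2\beta_i\cdot\beta_j$ and $\|q\|\leq r$ since $q \in \bar B^d_r$. Squaring and solving for the inner product gives
\begin{align}
\beta_i \cdot \beta_j \geq \frac{2(r-\delta)^2 - r^2}{r^2} = \frac{r^2 - 4r\delta + 2\delta^2}{r^2},
\end{align}
from which the claimed bound follows (the denominator in the stated inequality matches the form later invoked in the proof of Lemma~\ref{lemma:condition_delta}, where only the sign of $r^2 - 4r\delta + 2\delta^2$ is used).

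There is no real obstacle here beyond choosing the right linear combination to which Cauchy--Schwarz should be applied. An equally valid, slightly longer route is to reduce to the two-dimensional plane spanned by $\beta_i$ and $\beta_j$ and compute the squared norm of the orthogonal projection of $q$ onto that plane explicitly in coordinates aligned with $\beta_i$, but the Cauchy--Schwarz argument above makes no case distinction and avoids treating the degenerate case $\beta_i \cdot \beta_j = 1$ separately.
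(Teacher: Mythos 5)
Your proof is correct, and it takes a genuinely different route from the paper's. The paper sets up a constrained optimization: it minimizes $\frac{1}{2}\|x\|^2$ subject to the two hyperplane constraints, uses Lagrange multipliers to obtain the explicit minimizer $x^* = -\frac{r-\delta}{1+\beta_j\cdot\beta_i}(\beta_j+\beta_i)$ with $\|x^*\|^2 = \frac{2(r-\delta)^2}{1+\beta_j\cdot\beta_i}$, and then imposes $\|x^*\|^2\leq r^2$. You instead take an \emph{arbitrary} witness $q$ in $P_i\cap P_j\cap\bar B^d_r$, sum the two linear identities, and apply Cauchy--Schwarz to $(\beta_i+\beta_j)\cdot q$. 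Both approaches land on the same inequality, but yours is more elementary (no Lagrangian, no explicit minimizer to verify) and, as you note, avoids the implicit division by $1+\beta_i\cdot\beta_j$ in the paper's formula, which would degenerate when $\beta_i=-\beta_j$. What the paper's approach buys in exchange is that it exhibits the minimizer explicitly, showing the bound is sharp over all admissible configurations; your Cauchy--Schwarz step is tight exactly when $q$ is parallel to $\beta_i+\beta_j$ and $\|q\|=r$, which is the same extremal configuration.

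One point worth flagging: both your derivation and the paper's own appendix computation produce
\begin{align}
\beta_j\cdot\beta_i \;\geq\; \frac{2(r-\delta)^2 - r^2}{r^2} \;=\; \frac{r^2 - 4r\delta + 2\delta^2}{r^2},
\end{align}
with denominator $r^2$, whereas the lemma as stated in the paper has denominator $r$. That appears to be a typo in the lemma statement; it is harmless because Lemma~\ref{lemma:condition_delta} only uses the sign of the numerator $r^2 - 4r\delta + 2\delta^2$, which you correctly observed. You may want to state this discrepancy more plainly rather than glossing it as the denominator ``matching the form'' invoked later, since the forms do not literally match.
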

\begin{proof}
Note, from \eqref{eq:hyperplanes_halfspaces} we have that $P_j$ and $P_j$ intersect in $\bar{B}^d_r$ if and only if the minimizer $x^*$ to the constrained optimization problem
\begin{align}
\begin{split}
\min_{x\in \IR^d} \quad &\frac{1}{2}\|x\|^2\\
\text{s.t:} \quad  &\beta_j\cdot x + (r-\delta)=0\\
 &\beta_i\cdot x + (r-\delta)=0
\end{split}
\label{eq:min_problem}
\end{align} 
satisfies $\|x^*\|^2\leq r^2$. To find the minimizer of problem \eqref{eq:min_problem} we construct the Lagrangian
\begin{align}
\mathcal{L}(x,\lambda)=\frac{1}{2}\|x\|^2 + \lambda_1(\beta_j\cdot x + (r-\delta))+\lambda_2(\beta_j\cdot x + (r-\delta)) 
\end{align}
with $\lambda=(\lambda_1,\lambda_2)$. Then, by solving the linear system
\begin{align}
\begin{split}
\nabla_x \mathcal{L}(x,\lambda)&=0\\
\nabla_\lambda \mathcal{L}(x,\lambda)&=0
\end{split}
\end{align}
we get the minimizer $x^*=-\frac{r-\delta}{1+\beta_j \cdot \beta_i}(\beta_j+\beta_i)$ with squared norm $\|x^*\|^2=\frac{2(r-\delta)^2}{1+\beta_j\cdot \beta_i}$. Setting $\|x^*\|^2\leq r^2$ and solving for $\beta_j\cdot \beta_i$ yields \eqref{eq:betajbetai_2}.
\end{proof}

\begin{lemn}[\ref{lemma:mulit_proj}] The following inequality holds
\begin{align} 
\recallLabel{eq:bound_path}
|y_m-\phi(x_m)|\leq C_1(d-1)\sqrt{r}\delta^{1/2}S_m
\end{align}
where $S_m$ is the length of the path defined by the sequence $(x_i)_{i=0}^m$.
\end{lemn}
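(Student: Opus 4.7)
The plan is a telescoping argument combined with the single-step estimate of Proposition~\ref{proposition:one_projection}. Since $y_0 = \phi(x_0)$, I write
\begin{align*}
y_m - \phi(x_m) = \sum_{i=1}^m \bigl[(y_i - \phi(x_i)) - (y_{i-1} - \phi(x_{i-1}))\bigr],
\end{align*}
so the problem reduces to bounding each telescoping increment.

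For each $i$, I split on whether $t_i = 0$ or $t_i > 0$. When $t_i = 0$, the layer $\pi_i$ acts as the identity on $(x_{i-1}, y_{i-1})$ by construction, so the $i$:th increment vanishes. When $t_i > 0$, equation \eqref{eq:map-graph} gives $(x_i, y_i) = (x_{i-1} + t_i \beta_i,\ y_{i-1} + t_i \nabla_{\beta_i}\phi(p_i))$, and the increment collapses algebraically to $\phi(x_{i-1}) + t_i \nabla_{\beta_i}\phi(p_i) - \phi(x_{i-1} + t_i \beta_i)$. This is precisely the quantity controlled by Proposition~\ref{proposition:one_projection} with base point $p_i$ and normal $\beta_i$, yielding a bound $C_1(d-1)\sqrt{r}\delta^{1/2}\, t_i$. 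Applying the triangle inequality to the telescoped sum and summing over $i$ then produces exactly \eqref{eq:bound_path} with $S_m = \sum_{i=1}^m t_i$.

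The only step that genuinely needs care -- and the main technical point of the argument -- is verifying the hypothesis $x_{i-1} \in U_i^c \cap \bar{B}^d_r$ demanded by Proposition~\ref{proposition:one_projection}: the first containment is automatic whenever $t_i > 0$, but the proposition is stated only for points in the closed ball of radius $r$. I plan to establish this by induction on $i$, starting from $x_0 \in \bar{B}^d_r$. For indices with $t_i > 0$, $x_{i-1} \in U_i^c$ implies $\beta_i \cdot x_{i-1} = -t_i - (r - \delta)$, and expanding $\|x_{i-1} + t_i \beta_i\|^2$ directly gives
\begin{align*}
\|x_i\|^2 = \|x_{i-1}\|^2 - t_i^2 - 2 t_i(r - \delta) \leq \|x_{i-1}\|^2,
\end{align*}
so the radial coordinate shrinks monotonically and $x_i \in \bar{B}^d_r$ for all $i$; for indices with $t_i = 0$ the containment is preserved trivially. (This monotonicity is essentially the content of Lemma~\ref{lemma:bound_ti}, which will be reused shortly.) Once the domain condition is secured, the remainder of the proof is purely a bookkeeping assembly of the per-layer estimate along the path.
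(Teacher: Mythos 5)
Your proof is correct and follows essentially the same route as the paper's: a telescoping decomposition of $y_m-\phi(x_m)$ into per-layer increments, each bounded by Proposition~\ref{proposition:one_projection}, then summed to give $C_1(d-1)\sqrt{r}\delta^{1/2}S_m$. The only difference is that you explicitly verify the hypothesis $x_{i-1}\in\bar{B}^d_r$ via the radial monotonicity $\|x_i\|^2=\|x_{i-1}\|^2-t_i^2-2t_i(r-\delta)\leq\|x_{i-1}\|^2$, a point the paper simply asserts (and which is implicit in Lemma~\ref{lemma:bound_ti}).
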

\begin{proof}
From \eqref{eq:map-point} we have
\begin{align}
\begin{split}
y_m-\phi(x_m)&=y_{m-1}+t_m\nabla_{\beta_m}\phi(p_{m}) - \phi(x_m)\\
&=y_{m-1}-\phi(x_{m-1}) + \phi(x_{m-1})+t_m\nabla_{\beta_m}\phi(p_{m}) - \phi(x_m)
\end{split}
\end{align}
Rewriting the first two terms $y_{m-1}-\phi(x_{m-1})$ in the same way and then repeating the procedure $m$-times gives us
\begin{align}
y_m-\phi(x_m)=\sum_{i=1}^m \big(\phi(x_{i-1})+t_{i}\nabla_{\beta_i}\phi(p_i)-\phi(x_i)\big)
\end{align}
where we used that $y_0-\phi(x_0)=0$ by \eqref{eq:map_sequence}. By the triangle inequality we then have
\begin{align}
|y_m-\phi(x_m)|\leq \sum_{i=1}^m|\phi(x_{i-1})+t_{i}\nabla_{\beta_i}\phi(p_i)-\phi(x_i)|
\end{align}
Note that each term for which $t_i=0$ vanishes since then $x_{i}=x_{i-1}$ by \eqref{eq:map-point_2}. If $t_i\neq 0$ then $x_{i-1}\in U^c_i\cap \bar{B}_r$ and hence, we can apply Proposition~\ref{proposition:one_projection}. Thus, we can bound each term by
\begin{align}
\begin{split}
|\phi(x_{i-1})+t_{i}\nabla_{\beta_i}\phi(p_i)-\phi(x_i)|&=|\phi(x_{i-1})+t_{i}\nabla_{\beta_i}\phi(p_i)-\phi(x_{i-1}+t_i\beta_i)|\\
&\leq C_1(d-1)\sqrt{r}\delta^{1/2}t_i
\end{split}
\end{align}
We can now bound the sum as
\begin{align}
|y_m-\phi(x_m)|\leq C_1(d-1)\sqrt{r}\delta^{1/2}\sum_{i=1}^mt_i=C_1(d-1)\sqrt{r}\delta^{1/2}S_m
\end{align}
as desired.
\end{proof}

\begin{lemn}[\ref{lemma:bound_ti}] Let $t_i$ be defined as in \eqref{eq:def_ti}, then
\begin{align}
\recallLabel{eq:bound_ti}
t_i\leq \frac{\|x_{i-1}\|^2-\|x_i\|^2}{2(r-\delta)}
\end{align}
\end{lemn}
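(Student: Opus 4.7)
The plan is to handle the trivial case and the substantive case separately, then use the defining equation of $P_i$ together with the Pythagorean theorem (or a direct algebraic expansion) to obtain the desired bound.

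First, if $t_i=0$, then by \eqref{eq:map-point_2} we have $x_i=x_{i-1}$, so the inequality reduces to $0\leq 0$ and there is nothing to prove. Thus the content of the lemma is in the case $t_i>0$, where $x_{i-1}\in U_i^c$ and $x_i=x_{i-1}+t_i\beta_i\in P_i$. Because $x_i\in P_i$, the defining equation \eqref{eq:hyperplanes_halfspaces} gives $\beta_i\cdot x_i=-(r-\delta)$, which is the single identity that drives the whole estimate.

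The cleanest route is purely algebraic. I would substitute $x_{i-1}=x_i-t_i\beta_i$, expand
\begin{align*}
\|x_{i-1}\|^2=\|x_i\|^2-2t_i\,\beta_i\cdot x_i+t_i^2,
\end{align*}
and plug in $\beta_i\cdot x_i=-(r-\delta)$ to get $\|x_{i-1}\|^2-\|x_i\|^2=2t_i(r-\delta)+t_i^2$. Dropping the nonnegative term $t_i^2$ and dividing by the positive quantity $2(r-\delta)$ yields the bound. The paper signals that it prefers a geometric rendition, which amounts to the same identity: let $p_i=-(r-\delta)\beta_i$ be the foot of the perpendicular from the origin to $P_i$, so that $\|p_i\|=r-\delta$ and every vector $x_i-p_i$ lying in $P_i$ is orthogonal to $\beta_i$. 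Then Pythagoras gives $\|x_i\|^2=(r-\delta)^2+\|x_i-p_i\|^2$ and, since $x_{i-1}-p_i=(x_i-p_i)-t_i\beta_i$ decomposes into an in-plane piece and a $\beta_i$-piece of length $t_i$ plus the baseline $r-\delta$, $\|x_{i-1}\|^2=(r-\delta+t_i)^2+\|x_i-p_i\|^2$. Subtracting gives the same $(r-\delta+t_i)^2-(r-\delta)^2=2(r-\delta)t_i+t_i^2$.

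There is no real obstacle here: the only thing to be careful about is to verify that $r-\delta>0$ so the division is legitimate, which is guaranteed by the standing assumption $0<\delta<r$ (and in any case by the stronger $\delta$-condition \eqref{eq:condition_delta} used throughout the section). The lemma is essentially a one-line computation dressed up as a Pythagorean picture, and its role downstream in Lemma~\ref{lemma:ym} is to turn the telescoping sum $\sum_i t_i$ into the telescoping sum $\sum_i(\|x_{i-1}\|^2-\|x_i\|^2)=\|x_0\|^2-\|x_m\|^2$, which is then bounded using $\|x_0\|\leq r$ and $\|x_m\|\geq r-\delta$.
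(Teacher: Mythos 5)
Your proof is correct and takes a genuinely cleaner route than the paper's. The paper argues geometrically: it introduces the height $\delta' = \delta - (r - \|x_{i-1}\|)$ of the spherical cap $U_i^c \cap \bar{B}^d_{\|x_{i-1}\|}$, forms the right triangle with vertices at the origin, $p_i$, and $x_i$, computes the base radius $s = \sqrt{2\|x_{i-1}\|(\delta' - t_i) - (\delta' - t_i)^2}$, and combines these via the Pythagorean theorem to land on $\|x_i\|^2 = \|x_{i-1}\|^2 - 2(r-\delta)t_i - t_i^2$; it then separately handles the degenerate case where $x_{i-1}$ lies on the ray through $p_i$ so that the triangle collapses. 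Your algebraic version reaches the identical intermediate identity $\|x_{i-1}\|^2 - \|x_i\|^2 = 2(r-\delta)t_i + t_i^2$ in two lines by expanding $\|x_i - t_i\beta_i\|^2$ and inserting $\beta_i\cdot x_i = -(r-\delta)$ from the definition of $P_i$; it handles the degenerate case automatically, and it does not need the auxiliary observation $\|x_{i-1}\| > r-\delta$ that the paper implicitly relies on to make $\delta'$ nonnegative. Your geometric rendition via $p_i = -(r-\delta)\beta_i$ and orthogonal decomposition is also sound and is simpler than the paper's spherical-cap picture. The paper's version conveys the geometric intuition of the construction at the cost of extra bookkeeping; yours is the version one would present when brevity matters, and your remark about the downstream telescoping use in Lemma~\ref{lemma:ym} is accurate.
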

\begin{proof}
If $t_i=0$ the inequality follows trivially, so assume that $0<t_i=\text{dist}(x_{i-1},P_i)=\|x_{i-1}-x_i\|$.
We start by deriving an expression of $\|x_i\|^2$ in terms of $\|x_{i-1}\|^2$ and $t_i$. Note that all the points $p_i$, $x_i$ and $x_{i-1}$ lie in the 2-dimensional plane spanned by $\beta_i$ and the vector pointing from $p_i$ to $x_i$. Figure \ref{geom2} illustrates the geometrical setup in this plane.
\begin{figure} 
\centering
\includegraphics[scale=0.9]{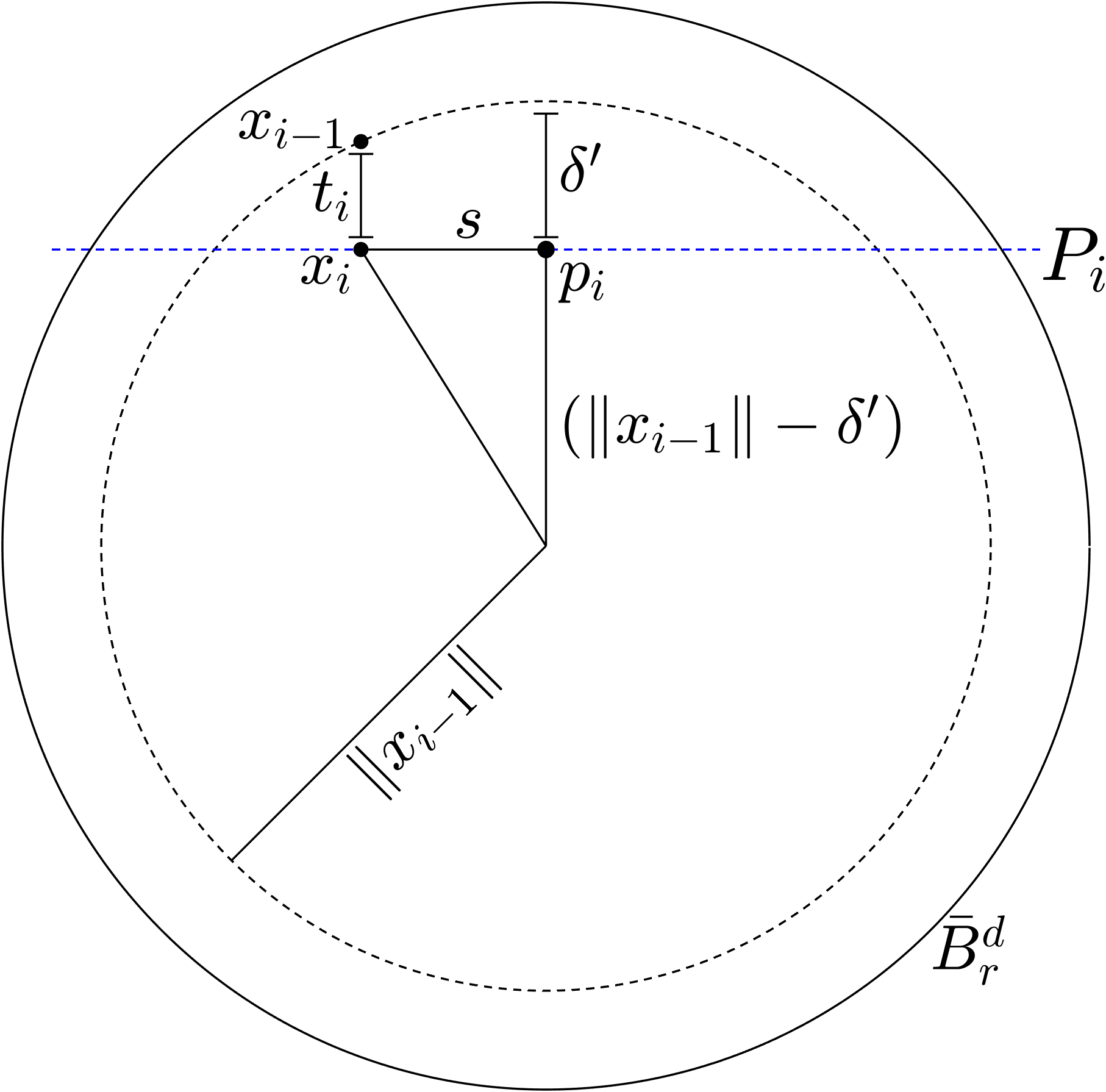}
\caption{\textit{Geometrical Setup.} The origin, the point $p_i$ and $x_i$ define a right triangle where the length of the hypotenuse is $\|x_i\|$. Calculating the side length $s$ allows us to compute $\|x_i\|$.}
\label{geom2}
\end{figure}
Let $\delta'$ be the height of the spherical cap $U^c_i\cap \bar{B}^d_{\|x_{i-1}\|}$ as depicted in the figure, then since $\delta$ is the height of the spherical cap $U^c_i\cap \bar{B}^d_{r}$ it follows that
\begin{align}
\delta'=\delta-(r-\|x_{i-1}\|)
\label{eq:delta_prime}
\end{align}
Now, consider the right triangle in this plane with vertices at the origin, $p_i$ and $x_i$. The hypotenuse has length $\|x_i\|$ which we are interested in and the side from the origin to $p_i$ has length $(\|x_{i-1}\|-\delta')$. We denote by $s$, the length of the third side between $p_i$ and $x_i$, and the Pythagorean theorem gives the relation
\begin{align}
\|x_i\|^2=(\|x_{i-1}\|-\delta')^2+s^2
\label{eq:x_i}
\end{align}

We can calculate the side length $s$ by considering the spherical cap resulting from translating the hyperplane $P_i$ a distance $t_i$ such that it passes through $x_{i-1}$. Then, the radius of its base is precisely $s$. Since the height of the new spherical cap is $\delta'-t_i$ we get that
\begin{align}
s=\sqrt{2\|x_{i-1}\|(\delta'-t_i)-(\delta'-t_i)^2}
\end{align} 
Plugging this into \eqref{eq:x_i} gives
\begin{align}
\begin{split}
\|x_i\|^2&=(\|x_{i-1}\|-\delta')^2+s^2=(\|x_{i-1}\|-\delta')^2+2\|x_{i-1}\|(\delta'-t_i)-(\delta'-t_i)^2\\
&= \|x_{i-1}\|^2-2\|x_{i-1}\|t_i+2\delta't_i-t_i^2
\end{split}
\end{align}
Using the relation in \eqref{eq:delta_prime} in the equation above yields
\begin{align}
\|x_i\|^2= \|x_{i-1}\|^2-2(r-\delta)t_i-t_i^2
\end{align}
and thus
\begin{align}
\frac{\|x_{i-1}\|^2-\|x_i\|^2}{2(r-\delta)}=\frac{2(r-\delta)t_i+t_i^2}{2(r-\delta)}=t_i+\frac{t_i^2}{2(r-\delta)}\geq t_i
\end{align}
which we wanted to show.

In case $x_{i-1}$ lies on the line passing through the origin and $p_i$, then the triangle described above is degenerate as $x_i=p_i$. However, in this case we have the equality $\|x_{i-1}\|-\|x_i\|=t_i$ and thus
\begin{align}
\begin{split}
\frac{\|x_{i-1}\|^2-\|x_i\|^2}{2(r-\delta)}&=\frac{(\|x_{i-1}\|-\|x_i\|)(\|x_{i-1}\|+\|x_i\|)}{2(r-\delta)}= \frac{t_i(\|x_{i-1}\|+\|x_i\|)}{2(r-\delta)}\\
&\geq \frac{t_i((r-\delta)+(r-\delta))}{2(r-\delta)}=t_i
\end{split}
\end{align}
where the last inequality follows from the fact that neither of $x_{i-1}$ and $x_i$ are contained in $B^d_{r-\delta}$. Hence, the inequality also holds in this special case.
\end{proof}

\begin{lemn}[\ref{lemma:P_l_prime}]
The construction of the polytopes guarantees that the inclusions
\begin{align}
\recallLabel{eq:P_l_prime}
 \bar{B}^d_{r_{k}-\delta_k} \subset \mathcal{P}_{k+1} \subset \bar{B}^d_{r_{k}}
\end{align}
hold for all $0\leq k \leq M-1$.
\end{lemn}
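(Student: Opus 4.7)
The plan is to handle the two inclusions separately, with the first being essentially by construction and the second requiring a short geometric argument using the $\epsilon_k$-net property together with the relation $\epsilon_k=\sqrt{\tfrac{1}{2}\delta_k r_k}$.

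For the inclusion $\bar{B}^d_{r_k-\delta_k}\subset \mathcal{P}_{k+1}$: Each half-space $U^{k+1}_i$ in the construction is, by definition, tangent to the sphere $S^{d-1}_{r_k-\delta_k}$ at $p^{k+1}_i$ and oriented to contain the origin. Any such half-space automatically contains the closed ball $\bar{B}^d_{r_k-\delta_k}$, and the intersection $\mathcal{P}_{k+1}=\bigcap_{i=1}^{m_{k+1}}U^{k+1}_i$ therefore also contains it.

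For the inclusion $\mathcal{P}_{k+1}\subset \bar{B}^d_{r_k}$: I claim that every point $y\in S^{d-1}_{r_k}$ lies outside at least one of the half-spaces $U^{k+1}_i$. Let $q_j\in N_{\epsilon_k}$ be a closest net point to $y$, so that $\|y-q_j\|\leq \epsilon_k$. Since $p^{k+1}_j=\tfrac{r_k-\delta_k}{r_k}q_j$, the inward normal to $U^{k+1}_j$ equals $\beta^{k+1}_j=-q_j/r_k$, so that $y\in U^{k+1}_j$ is equivalent to $\tfrac{q_j\cdot y}{r_k}\leq r_k-\delta_k$. By the cosine identity $\|y-q_j\|^2=2r_k^2-2y\cdot q_j$, we get
\begin{align}
\frac{q_j\cdot y}{r_k}=r_k-\frac{\|y-q_j\|^2}{2r_k}\geq r_k-\frac{\epsilon_k^2}{2r_k}=r_k-\frac{\delta_k}{4}>r_k-\delta_k,
\end{align}
where I used $\epsilon_k^2=\tfrac{1}{2}\delta_k r_k$. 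Hence $y\notin U^{k+1}_j$, so $y\notin \mathcal{P}_{k+1}$.

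To conclude, note that $\mathcal{P}_{k+1}$ is convex and contains the origin (being an intersection of half-spaces each containing the origin). If some $z\in \mathcal{P}_{k+1}$ had $\|z\|>r_k$, the segment from $0$ to $z$, which lies in $\mathcal{P}_{k+1}$ by convexity, would cross $S^{d-1}_{r_k}$, producing a point of $\mathcal{P}_{k+1}$ on that sphere and contradicting the previous paragraph. Therefore $\mathcal{P}_{k+1}\subset \bar{B}^d_{r_k}$, completing the proof.

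The only nontrivial point is verifying the strict inequality $\epsilon_k^2/(2r_k)<\delta_k$; the particular choice $\epsilon_k=\sqrt{\delta_k r_k/2}$ gives the comfortable margin of $\delta_k/4$, which is what makes the net resolution sufficient to keep $\mathcal{P}_{k+1}$ inside $\bar{B}^d_{r_k}$.
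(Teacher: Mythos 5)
Your proof is correct and follows essentially the same approach as the paper: the first inclusion holds by construction, and the second is proved by showing every point on $S^{d-1}_{r_k}$ falls outside some half-space $U^{k+1}_j$ using the $\epsilon_k$-net property with $\epsilon_k=\sqrt{\tfrac{1}{2}\delta_k r_k}$, then concluding via a containment argument. The only superficial differences are that you verify exclusion via a direct dot-product identity (whereas the paper computes the minimum distance $\sqrt{2\delta_k r_k}$ from the net point to the interior of the cap via the Pythagorean theorem — both reduce to the same threshold $\epsilon_k<\sqrt{2\delta_k r_k}$), and you invoke convexity of $\mathcal{P}_{k+1}$ where the paper uses its connectedness.
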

\begin{proof}
The first inclusion holds trivially by the construction of $\mathcal{P}_{k+1}$, recall \eqref{eq:B_r-delta-inclusion}. To prove the second inclusion we will show that $S^{d-1}_{r_{k}} \subset (\mathcal{P}_{k+1})^c$ because then $\mathcal{P}_{k+1}\subset (S^{d-1}_{r_{k}})^c$ but since $\mathcal{P}_{k+1}$ is connected and contains the origin, the last inclusion implies that $\mathcal{P}_{k+1}\subset \bar{B}^d_{r_{k}}$.

Let $N_{\epsilon_k}$ be an $\epsilon_k$-net of $S^{d-1}_{r_{k}}$. Take a point $q\in S^{d-1}_{r_{k}}$, then there is a point $q_i\in N_{\epsilon_k}\subset S^{d-1}_{r_{k}}$ such that $\|q_i-q\|\leq \epsilon_k$. With $\epsilon_k=\sqrt{\frac{1}{2}\delta_{k} r_{k}}$ we thus have $\|q_i-q\|\leq \sqrt{\frac{1}{2}\delta_{k} r_{k}}$. By the definition of $U^{k+1}_i$, we have that $U^{k+1}_i \cap \bar{B}^d_{r_{k}}$ is a spherical cap of height $\delta_{k}$ and the radius of the base is precisely $\sqrt{2r_{k}\delta_{k}-\delta_{k}^2}$. As $q_i$ is the point on the top of this spherical cap, and thus a distance $\delta_{k}$ from $U^{k+1}_i$, it follows from the Pythagorean theorem that
\begin{align}
\min(\{\|q_i-x\|:x\in S^{d-1}_{r_{k}}\cap U^{k+1}_i\})=\sqrt{2\delta_{k} r_{k}}
\end{align}
Thus, as $\|q_i-q\|\leq \sqrt{\frac{1}{2}\delta_{k} r_{k}}<\sqrt{2\delta_{k} r_{k}}$ we must have $q\in (U^{k+1}_i)^c\subset (\mathcal{P}_{k+1})^c$. Hence, $S^{d-1}_{r_{k}}\subset  (\mathcal{P}_{k+1})^c$ so $\mathcal{P}_{k+1} \subset \bar{B}^d_{r_{k}}$.
\end{proof}

\begin{lemn}[\ref{lem:num_polytopes} (Number of Polytopes)]
The number of polytopes $M(\delta)$ needed in our construction to ensure $r_M\leq \delta$ satisfies
\begin{align}
\recallLabel{eq:num_polytopes}
\boxed{
M(\delta)\leq \frac{7R}{3\delta}
}
\end{align}
\end{lemn}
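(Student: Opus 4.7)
The plan is to decompose the sequence $(r_k)$ into two consecutive regimes dictated by the definition \eqref{eq:definition_delta_k} of $\delta_k$. So long as $r_k \geq \delta/(1-1/\sqrt{2}) = (2+\sqrt{2})\delta$, we are in the \emph{arithmetic} regime where $\delta_k = \delta$ and Lemma~\ref{proposition:inclusion} gives the uniform drop $r_{k+1} \leq r_k - 3\delta/4$. Once $r_k < (2+\sqrt{2})\delta$, we enter the \emph{geometric} regime where $\delta_k = r_k(1-1/\sqrt{2})$ and Lemma~\ref{proposition:inclusion} produces $r_{k+1} \leq \rho\, r_k$ with $\rho = 1 - 3(1-1/\sqrt{2})/4 = (2+3\sqrt{2})/8 < 1$.

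First I would bound the number $M_1$ of arithmetic steps. Iterating the uniform drop, and noting that $r_{M_1-1} \geq (2+\sqrt{2})\delta$ by definition of the regime, yields
\begin{align*}
M_1 \;\leq\; \frac{4\bigl(R-(2+\sqrt{2})\delta\bigr)}{3\delta}+1 \;=\; \frac{4R}{3\delta} - \frac{4(2+\sqrt{2})}{3} + 1.
\end{align*}
Next, for the number $M_2$ of geometric steps, iterating $r_{M_1+j} \leq \rho^j r_{M_1} \leq \rho^j(2+\sqrt{2})\delta$ reduces the task to finding the smallest $j$ with $\rho^j(2+\sqrt{2}) \leq 1$; a short numerical check gives $\rho^5(2+\sqrt{2}) < 1$, hence $M_2 \leq 5$ regardless of $R/\delta$.

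Combining the two bounds gives $M \leq M_1 + M_2 \leq \tfrac{4R}{3\delta} + \tfrac{10-4\sqrt{2}}{3}$. The hypothesis $\delta \leq R(1-1/\sqrt{2})$ rearranges to $R/\delta \geq 2+\sqrt{2}$, and since $\tfrac{10-4\sqrt{2}}{3} \leq 2+\sqrt{2}$ (equivalent to $4 \leq 7\sqrt{2}$, which is immediate), the additive constant is absorbed into the surplus $\tfrac{7R}{3\delta}-\tfrac{4R}{3\delta} = R/\delta$, yielding the claimed bound $M \leq \tfrac{7R}{3\delta}$. The only mildly subtle step is this final bookkeeping: the geometric regime contributes only a small explicit constant, and the standing hypothesis on $\delta$ is precisely what is needed to absorb it into the linear-in-$R/\delta$ target.
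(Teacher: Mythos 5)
Your proof is correct and follows essentially the same route as the paper's: split the iteration into the arithmetic regime where $\delta_k=\delta$ (giving the uniform drop $r_{k+1}\le r_k-3\delta/4$) and the geometric regime where $\delta_k=r_k(1-1/\sqrt{2})$ (giving the contraction by $\rho=\frac{2+3\sqrt{2}}{8}=\frac{3+\sqrt{2}}{4\sqrt{2}}$), bound each count separately, and absorb the additive constant using $\delta\le R(1-1/\sqrt{2})$. Your bookkeeping yields a marginally smaller additive constant ($\tfrac{10-4\sqrt{2}}{3}\approx 1.45$ versus the paper's $2$) before both are absorbed into the $7R/(3\delta)$ bound, but the argument is the same one.
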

\begin{proof}
Recall that $\delta_k$ is either set to $\delta$ or the value $r_k\left(1-\frac{1}{\sqrt{2}}\right)$ depending on the relative size of $r_k\left(1-\frac{1}{\sqrt{2}}\right)$ and $\delta$, see equation \eqref{eq:definition_delta_k}. Now, if we define 
\begin{align}
r^*=\frac{\delta}{1-\frac{1}{\sqrt{2}}}
\end{align}
then $\delta_k=\delta$ as long as $r_k \geq r^*$. We can also express
\begin{align}
M(\delta)=M_1(\delta)+M_2(\delta)
\label{eq:M=M_1+M_2}
\end{align}
where $M_1(\delta)$ represent the number of polytopes $\mathcal{P}_k$ in our construction for which $r_k\geq r^*$ and $M_2(\delta)$ represent the number of polytopes for which $r_k<r^*$.

Note, for all $k$ with $r_k \geq r^*$ we have by Lemma~\ref{proposition:inclusion} that $r_k\leq R-\frac{3}{4}\delta k$. Solving the equation $R-k\frac{3}{4}\delta= r^*$ for $k$ gives $k= \frac{4R}{3\delta}-\frac{4}{3\left(1-\frac{1}{\sqrt{2}}\right)}$. Thus we can write
\begin{align}
M_1(\delta)\leq \bigg\lceil \frac{4R}{3\delta}-\frac{4}{3\left(1-\frac{1}{\sqrt{2}}\right)}\bigg\rceil \leq  \frac{4R}{3\delta} - 3
\label{eq:bound-M_1}
\end{align}
where $\lceil \cdot \rceil$ is the ceiling function. We can now estimate $M_2(\delta)$ by considering how many polytopes we need in our construction to go from the radius $r^*$ to $\delta$. In this case we have that $\delta_k=r_k\left(1-\frac{1}{\sqrt{2}}\right)$ so by Lemma~\ref{proposition:inclusion}
\begin{align}
r_{k+1}\leq r_k-\frac{3}{4}\delta_k=r_k-\frac{3\left(1-\frac{1}{\sqrt{2}}\right)r_k}{4}=\frac{3+\sqrt{2}}{4\sqrt{2}}r_k
\label{eq:r_k+1_less}
\end{align}
If $k^*$ is the first index such that $r_{k^*}<r^*$ then for any $l\in \mathbb{N}$ we get 
\begin{align}
r_{k^*+l}\leq\big( \frac{3+\sqrt{2}}{4\sqrt{2}}\big)^lr_{k^*}<\big( \frac{3+\sqrt{2}}{4\sqrt{2}}\big)^lr^*
\end{align}
Solving $\big( \frac{3+\sqrt{2}}{4\sqrt{2}}\big)^lr^*= \delta$ gives $l= \frac{\ln\left(1-\frac{1}{\sqrt{2}}\right)}{\ln{\big(\frac{\sqrt{2}+3}{4\sqrt{2}}\big)}}$ so we get
\begin{align}
M_2(\delta)\leq \left\lceil \frac{\ln\left(1-\frac{1}{\sqrt{2}}\right)}{\ln{\big(\frac{\sqrt{2}+3}{4\sqrt{2}}\big)}}\right\rceil = 5
\label{eq:bound-M_2}
\end{align}
Combining \eqref{eq:bound-M_1} and \eqref{eq:bound-M_2} in \eqref{eq:M=M_1+M_2} gives the bound
\begin{align}
M(\delta)= M_1+M_2\leq \frac{4R}{3\delta} + 2\leq \frac{7R}{3\delta}
\label{eq:M(delta)-bound}
\end{align}
where the last inequality holds since $\delta \leq R\left(1-\frac{1}{\sqrt{2}}\right)$. 
\end{proof}

\begin{lemn}[\ref{lem:cardinality of net}]
There is an $\epsilon$-net $N_{\epsilon}$ of the sphere $S^{d-1}_r$ such that $|N_{\epsilon}|\leq 2d\big(1+\frac{2r}{\epsilon}\big)^{d-1}$. 
\end{lemn}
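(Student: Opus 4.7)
The plan is to leverage the equivalence between maximal $\epsilon$-separations and $\epsilon$-nets provided by Lemma~\ref{lemma:sep-net}. By that lemma, any maximal $\epsilon$-separation $N^{sep}_\epsilon$ of $S^{d-1}_r$ automatically is an $\epsilon$-net of the sphere, so constructing such a separation and bounding its cardinality is sufficient. Maximal separations always exist by a straightforward greedy/Zorn argument, so I only need to bound $|N^{sep}_\epsilon|$.

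The main tool will be a standard volume-packing argument. Around each point $q_i\in N^{sep}_\epsilon$, place an open Euclidean ball $B(q_i,\epsilon/2)\subset \IR^d$. By the definition of an $\epsilon$-separation, any two distinct centers are at distance strictly greater than $\epsilon$, so these balls are pairwise disjoint. Since every $q_i$ lies on the sphere of radius $r$, every such ball is contained in the thickened spherical shell
\begin{align}
A = \{x\in \IR^d : r-\tfrac{\epsilon}{2}\leq \|x\|\leq r+\tfrac{\epsilon}{2}\}.
\end{align}
Taking $d$-dimensional Lebesgue volumes on both sides of the disjoint inclusion $\bigsqcup_i B(q_i,\epsilon/2)\subseteq A$ yields
\begin{align}
|N^{sep}_\epsilon|\cdot V_d\bigl(\tfrac{\epsilon}{2}\bigr)^d \leq V_d\Bigl(\bigl(r+\tfrac{\epsilon}{2}\bigr)^d - \bigl(r-\tfrac{\epsilon}{2}\bigr)^d\Bigr),
\end{align}
where $V_d$ denotes the volume of the unit $d$-ball. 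The unit-ball constants cancel, which is exactly why the bound has no explicit $d$-dependent geometric factor beyond what follows.

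To reduce the right-hand side I would apply the mean value theorem to the function $t\mapsto t^d$ on the interval $[r-\epsilon/2,\, r+\epsilon/2]$, giving
\begin{align}
\bigl(r+\tfrac{\epsilon}{2}\bigr)^d - \bigl(r-\tfrac{\epsilon}{2}\bigr)^d \leq d\bigl(r+\tfrac{\epsilon}{2}\bigr)^{d-1}\epsilon.
\end{align}
Substituting into the previous inequality and simplifying,
\begin{align}
|N^{sep}_\epsilon| \leq \frac{d\bigl(r+\tfrac{\epsilon}{2}\bigr)^{d-1}\epsilon}{(\epsilon/2)^d} = 2d\cdot\frac{(2r+\epsilon)^{d-1}}{\epsilon^{d-1}} = 2d\Bigl(1+\tfrac{2r}{\epsilon}\Bigr)^{d-1},
\end{align}
which is the claimed bound. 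Setting $N_\epsilon=N^{sep}_\epsilon$ and invoking Lemma~\ref{lemma:sep-net} finishes the argument.

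There is no serious obstacle here; the only subtlety is choosing the right thickness of the shell so that the shell contains the small balls while keeping the volume estimate as sharp as the stated bound. The choice of half-radius $\epsilon/2$ around each center, combined with a shell of the same half-thickness, is exactly what produces the constant $2d$ and the factor $(1+2r/\epsilon)^{d-1}$ after the MVT step.
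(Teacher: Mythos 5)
Your proof is correct and follows essentially the same route as the paper: take a maximal $\epsilon$-separation, pack disjoint $\epsilon/2$-balls into the spherical shell of thickness $\epsilon$, compare volumes, and invoke Lemma~\ref{lemma:sep-net}. The only cosmetic difference is that you bound the shell volume via the mean value theorem on $t\mapsto t^d$, whereas the paper writes the same difference as $\int_{r-\epsilon/2}^{r+\epsilon/2} d\,\rho^{d-1}\,d\rho$ and bounds the integrand; these yield the identical estimate.
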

\begin{proof}
Let $N^{sep}_{\epsilon}$ be a maximal $\epsilon$-separation of $S^{d-1}_r$. Since it is an $\epsilon$-separation, the balls $\{B^d_{\frac{\epsilon}{2}}(p):p\in N^{sep}_{\epsilon}\}$ are disjoint. Moreover, the disjoint union $\bigcup_{p\in N^{sep}_{\epsilon}}B^d_{\frac{\epsilon}{2}}(p)$ is contained in the spherical shell $B^d_{r+\frac{\epsilon}{2}}\setminus B^d_{r-\frac{\epsilon}{2}}$. By comparing volumes we get the inequality
\begin{align}
|N^{sep}_{\epsilon}|\cdot \vol(B^d_{\frac{\epsilon}{2}})\leq \vol(B^d_{r+\frac{\epsilon}{2}})-\vol(B^d_{r-\frac{\epsilon}{2}})
\end{align}
where $\vol(B^d_r)$ denotes the volume of a $d$ dimensional ball of radius $r$. Now, we can estimate the right-hand side (assuming $d>1$)
\begin{align}
\begin{split}
\vol(B^d_{r+\frac{\epsilon}{2}})-\vol(B^d_{r-\frac{\epsilon}{2}})
&=\frac{\pi^{d/2}}{\Gamma(\frac{d}{2}+1)}\bigg((r+\epsilon/2)^d-(r-\epsilon/2)^d\bigg)
\\&
=\frac{\pi^{d/2}}{\Gamma(\frac{d}{2}+1)}\int_{r-\epsilon/2}^{r+\epsilon/2}d\cdot \rho^{d-1}d\rho\\
&\leq \frac{\pi^{d/2}}{\Gamma(\frac{d}{2}+1)}d(r+\epsilon/2)^{d-1}\int_{r-\epsilon/2}^{r+\epsilon/2} d\rho
\\&
=\frac{\pi^{d/2}d(r+\epsilon/2)^{d-1}\epsilon}{\Gamma(\frac{d}{2}+1)}
\end{split}
\end{align}
Dividing by $\vol(B^d_{\frac{\epsilon}{2}})$ gives the upper bound
\begin{align}
|N^{sep}_{\epsilon}|\leq 2d\bigg(1+\frac{2r}{\epsilon}\bigg)^{d-1}
\end{align}
The result now follows from Lemma~\ref{lemma:sep-net}. 
\end{proof}

\subsection{Proofs of Section~\ref{sec:Decision Boundary}}
\label{Appendix C}
%Here we give the proofs omitted in Section~\ref{sec:Decision Boundary}.
\begin{lemn}[\ref{lem:preimage pi} (Preimages)]
Let $g:B^d_R\to \IR$ be a continuous function and denote its graph by $\Gamma_g$. Consider a layer $\pi^k_i:B^d_R \times \IR \to \IR^{d+1}$ in our network $\tilde{F}$. Then, the preimage of $\Gamma_g$ under $\pi^k_i$ fulfills
\begin{align}
\recallLabel{eq: preimage of Gamma_g}
(\pi^k_i)^{-1}[\Gamma_g]=\Gamma_{\tilde{g}}
\end{align}
where $\Gamma_{\tilde{g}}$ is the graph of a continuous function $\tilde{g}:B^d_R\to \IR$. Moreover, given any $\tau\geq 0$ we have
\begin{align}
\recallLabel{eq: preimage of Sigma}
(\pi^k_i)^{-1}[\Sigma^{\tau}_{g}(B^d_R)]=\Sigma^{\tau}_{\tilde{g}}(B^d_R)
\end{align}
\end{lemn}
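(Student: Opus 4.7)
The plan is to exploit the explicit hybrid description of $\pi^k_i$ on the slab $B^d_R\times\IR$ supplied by Lemma~\ref{lemma:ReLU_on_compact}: after choosing a sufficiently large $c>0$, the layer is the identity on $\mathbf{U}^k_i\cap(B^d_R\times[-c,c])=U^k_i\times[-c,c]$ and projects $(\mathbf{U}^k_i)^c\cap(B^d_R\times[-c,c])$ onto $\mathbf{P}^k_i=P^k_i\times\IR$ along the fixed direction $\xi^k_i=(\beta^k_i,\nabla_{\beta^k_i}\phi(p^k_i))$. Since $g$ is bounded, the relevant preimages all lie in this bounded slab, so this hybrid description suffices.

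First, I would define the candidate $\tilde g:B^d_R\to\IR$ piecewise. For $x\in U^k_i\cap B^d_R$ set $\tilde g(x)=g(x)$. For $x\in (U^k_i)^c\cap B^d_R$, let $s=\text{dist}(x,P^k_i)$ and set $\tilde g(x)=g(x+s\beta^k_i)-s\nabla_{\beta^k_i}\phi(p^k_i)$. A short computation, using that the origin lies in $U^k_i$ so $\beta^k_i\cdot x<0$ for $x\in(U^k_i)^c$, yields $\|x+s\beta^k_i\|^2=\|x\|^2+s(\beta^k_i\cdot x - c')<\|x\|^2\leq R^2$, where $c'$ is the signed distance of $P^k_i$ from the origin. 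Thus the argument of $g$ in the second branch lies in $B^d_R$ and $\tilde g$ is globally well defined. The two branches agree on $P^k_i$ (where $s=0$), and continuity of $s$ in $x$ together with continuity of $g$ gives continuity of $\tilde g$ on all of $B^d_R$.

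Second, to prove $(\pi^k_i)^{-1}[\Gamma_g]=\Gamma_{\tilde g}$ I would argue by direct case analysis on an arbitrary test point $(x,y)\in B^d_R\times\IR$. If $x\in U^k_i$, the identity action gives $\pi^k_i(x,y)=(x,y)$, so $(x,y)\in(\pi^k_i)^{-1}[\Gamma_g]$ is equivalent to $y=g(x)=\tilde g(x)$. If $x\in(U^k_i)^c$, the projection action gives $\pi^k_i(x,y)=(x+s\beta^k_i,\,y+s\nabla_{\beta^k_i}\phi(p^k_i))$, so $(x,y)\in(\pi^k_i)^{-1}[\Gamma_g]$ reduces to $y+s\nabla_{\beta^k_i}\phi(p^k_i)=g(x+s\beta^k_i)$, i.e., $y=\tilde g(x)$. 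In both cases membership in the preimage is equivalent to $y=\tilde g(x)$, establishing the graph identity.

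Third, the $\varepsilon$-band identity follows from the very same case split together with the crucial observation that $\pi^k_i$ preserves the vertical deviation in the following sense: writing $\pi^k_i(x,y)=(x',y')$, one checks in each case that $g(x')-y'=\tilde g(x)-y$. Hence $\pi^k_i(x,y)\in\Sigma^\tau_g(B^d_R)$ iff $|\tilde g(x)-y|\leq\tau$, which is precisely $(x,y)\in\Sigma^\tau_{\tilde g}(B^d_R)$. The main obstacle is not analytic but bookkeeping: one must arrange the piecewise definition of $\tilde g$ so it is simultaneously globally defined on $B^d_R$, continuous across the interface $P^k_i$, and compatible with both the graph identity and the $\Sigma$ identity. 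The underlying geometric point that makes this possible is that orthogonal projection toward the origin-containing side of $P^k_i$ never escapes $B^d_R$, so the single formula for $\tilde g$ covers the whole ball in one stroke.
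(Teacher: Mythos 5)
Your proposal is correct and takes essentially the same route as the paper's proof. The paper defines $\tilde g$ implicitly via the decomposition $x = x' - t\beta$ with $x'\in P$, $t\geq 0$, and verifies domain containment of $x'$ by a geometric argument comparing the radii of the slices $P_x\cap B^d_R$ and $P\cap B^d_R$ (using that $U$ contains the origin); you instead give the explicit closed-form $\tilde g(x)=g(x+s\beta)-s\nabla_\beta\phi(p)$ and check domain containment by the norm computation $\|x+s\beta\|^2=\|x\|^2 + s(\beta\cdot x - c')<\|x\|^2$, which is a slightly more direct calculation exploiting the same geometric fact. For the $\Sigma$-band identity both arguments boil down to the same observation, namely that $\pi$ acts by a $y$-translation independent of $y$ along the fibre, so vertical deviations are preserved exactly. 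One small point worth making explicit: since $B^d_R$ is open, a continuous $g$ need not be bounded, so the phrase ``since $g$ is bounded'' deserves to be replaced by the same local argument the paper uses (choose $c$ large enough for the bounded sets actually encountered, or note the hybrid description is applied to points whose $y$-coordinate is controlled by $g$ on a relevant compact subset); the paper itself glosses over this by declaring $c$ ``large enough'' in Section~\ref{sec:compact_domains}.
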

\begin{proof}
To simplify the notation we will drop the sub- and superscript on the layer and simply denote it by $\pi$. Let $U$ be the closed half-space, associated with the layer, with boundary hyperplane $P$ and inward pointing unit normal $\beta$. Further let $\xi \in \IR^{d+1}$ be the projection direction and let $p\in \IR^d$ be the point such that $\xi=(\beta,\nabla_{\beta}\phi(p))$. Then it follows that 
\begin{align}
\pi^{-1}(\Gamma_g\cap \mathbf{U}^{\circ})=\Gamma_g\cap \mathbf{U}^{\circ}
\label{eq:preimage interior}
\end{align}
as $\pi$ act as the identity on the interior $\mathbf{U^{\circ}}\subset \IR^{d+1}$ and any point in $(\mathbf{U}^{\circ})^c$ will be mapped to $\mathbf{P}$ disjoint with $\mathbf{U}^{\circ}$. Therefore, the only part of $\Gamma_g$ with a non empty and a non trivial preimage is precisely $\Gamma_g\cap \mathbf{P}$. 

For a point $(x',g(x'))\in \Gamma_g\cap \mathbf{P}$, we have that $x'\in P \cap B^d_R$. From \eqref{eq:map-point} we can then immediately see that a point $(x,y)$ will be mapped to $(x',g(x'))$ if and only if
\begin{align}
\begin{split}
x&=x'-t\beta\\
y&=g(x')-t\nabla_{\beta}\phi(p)
\end{split}
\end{align}
for some $t\geq 0$. Therefore, we can express the preimage of this point as
\begin{align}
\pi^{-1}[(x',g(x'))]=\big\{(x,y)\in B^d_R\times \IR: x=x'-t\beta, y=g(x')-t\nabla_{\beta}\phi(p), t\geq 0\big\}
\label{eq:preimage of point}
\end{align}
and the entire preimage of $\Gamma_g\cap \mathbf{P}$ can be written as
\begin{align}
\begin{split}
\pi^{-1}&[\Gamma_g\cap \mathbf{P}]=\\
&\big\{(x,y)\in B^d_R\times \IR: x=x'-t\beta, y=g(x')-t\nabla_{\beta}\phi(p), t\geq 0, x'\in P\cap B^d_R\big\}
\end{split}
\label{eq:preimage intersection}
\end{align}
Fixing $t=0$ in the set builder above we see that, in particular,
\begin{align}
\Gamma_g\cap \mathbf{P} \subseteq \pi^{-1}&[\Gamma_g\cap \mathbf{P}]
\label{eq:intersectino in preimage}
\end{align}
Further, as $t\geq 0$ we will have that $x'-t\beta \in (P\cup U^c)\cap B^d_R$. This suggests that we can define the function $\tilde{g}:(P\cup U^c)\cap B^d_R \to \IR$ by
\begin{align}
\tilde{g}(x)=g(x')-t\nabla_{\beta}\phi(p)
\label{eq:tilde g definition}
\end{align}
where $x'\in P\cap B^d_R$ and $t\geq 0$ are defined by the equation $x=x'-t$. However, we must show that for all $x\in (P\cup U^c)\cap B^d_R$ we can find such a point $x'$ and non-negative scalar $t$ and that they are unique so the function $\tilde{g}$ is well-defined.

Given a point $x\in (P\cup U^c)\cap B^d_R$, we can define the unique hyperplane $P_x\subset (P\cup U^c)$ parallel to $P$ such that $x\in P_x\cap B^{d}_R$. Then we let $t=$dist$(P_x,P)\geq 0$ and we define $x'$ to be the orthogonal projection of $x$ onto $P$. Clearly, $x'\in P$ but as we also know that $U$ contains the origin (this is true for the associated half-spaces to all the layers in our construction) the ball $P_x\cap B^d_R$ has smaller or equal radius than the parallel ball $P\cap B^d_R$ and hence we have that $x'\in P \cap B^d_R$. Then, by construction we have $x=x'-t\beta$.

To show the uniqueness, suppose
\begin{align}
x'_1-t_1\beta=x'_2-t_2\beta
\end{align}
for $x'_1,x'_2\in P\cap B^d_R$ and $t_1,t_2 \geq 0$. We can rewrite the equation as 
\begin{align}
x'_1-x'_2=(t_1-t_2)\beta
\label{eq:uniqueness}
\end{align}
As both the points $x_1',x_2'$ are contained in $P$, the vector on the left-hand side in \eqref{eq:uniqueness} is perpendicular to the vector on the right-hand side. Thus, for the equality to hold we need both sides to vanish meaning that $x'_1=x'_2$ and $t_1=t_2$. Hence, we can conclude that $\tilde{g}$ defined in \eqref{eq:tilde g definition} is a valid and well-defined function. 

We can also note that if $x\in P \cap B^d_R$ then $\tilde{g}(x)=g(x)$ since $t=0$ in this case so $x=x'$. Therefore, we can continuously extend $\tilde{g}$ to the entire set $B^d_R$ by letting $\tilde{g}:B^d_R\to \IR$ be defined by
\begin{align}
\tilde{g}(x)=\begin{cases} 
g(x), \quad \text{if } x\in U\cap B^d_R \\
g(x')-t\nabla_{\beta}\phi(p), \quad \text{if } x\in U^c\cap B^d_R
\end{cases}
\label{eq:tilde g extended definition}
\end{align}
By comparing the definition of $\tilde{g}$ in \eqref{eq:tilde g extended definition} with \eqref{eq:preimage interior} and \eqref{eq:preimage intersection} we can easily see that

\begin{align}
\pi^{-1}[\Gamma_g]=\Gamma_{\tilde{g}}
\end{align}
The preimage of the graph of $\phi$ under such a layer is shown in Figure~\ref{fig:Preimagefirst}.

To prove the second part of the lemma, suppose $\tau \geq 0$ and consider the set $\Sigma^{\tau}_{g}(B^d_R)$. Similarly as before, we have that
\begin{align}
\pi^{-1}[\Sigma^{\tau}_{g}(B^d_R) \cap \mathbf{U}^{\circ}]=\Sigma^{\tau}_{g}(B^d_R) \cap \mathbf{U}^{\circ}=\Sigma^{\tau}_{\tilde{g}}(B^d_R) \cap \mathbf{U}^{\circ}
\label{eq:preimage of Sigma 1}
\end{align}
where the second equality follows from the fact that $\tilde{g}=g$ on $U\cap B^d_R$. Thus, the only part with non empty and non trivial preimage is exactly $\Sigma^{\tau}_{g}(B^d_R) \cap \mathbf{P}$. Therefore, consider a point $(x,y)\in \pi^{-1}[ \Sigma^{\tau}_{g}(B^d_R)\cap \mathbf{P}]$. Then by definition of preimages we have that $\pi(x,y)=(x',y')\in \Sigma^{\tau}_{g}(B^d_R)\cap \mathbf{P}$. By definition of $\Sigma^{\tau}_{g}(B^d_R)\cap \mathbf{P}$ we need $x'\in B^d_R \cap P$ and $y'=g(x')+h$ with $|h|\leq \tau$. Using equation \eqref{eq:map-point} we therefore need
\begin{align}
\begin{split}
x&=x'-t\beta\\
y&=g(x')+h-t\nabla_{\beta}\phi(p)=\tilde{g}(x)+h
\end{split}
\end{align}
for some $t\geq 0$. But $(x,\tilde{g}(x)+h)\in \Sigma^{\tau}_{\tilde{g}}(B^d_R)\cap (\mathbf{P}\cup \mathbf{U}^c)$ so $\pi^{-1}[\Sigma^{\tau}_{g}(B^d_R)\cap \mathbf{P}]\subseteq \Sigma^{\tau}_{\tilde{g}}(B^d_R)\cap \mathbf{U}^c$. If we similarly consider a point $(x,y)\in \Sigma^{\tau}_{\tilde{g}}(B^d_R)\cap (\mathbf{P}\cup \mathbf{U}^c)$ we have that $x\in P\cup U^c$ and $y=\tilde{g}(x)+h$ with $|h|\leq\tau$. As $x\in P\cup U^c$ we can express it as $x=x'-t\beta$ for some $x'\in P$ and $t\geq 0$. Thus, $\tilde{g}(x)=g(x')-t\nabla_{\beta}\phi(p)$. 

Now, evaluating $\pi(x,y)$ using \eqref{eq:map-point} yields
\begin{align}
\pi(x,y)=(x',g(x')+h)\in \Sigma^{\tau}_{g}(B^d_R)\cap \mathbf{P}
\end{align}
Thus, $(x,y)\in  \pi^{-1}[ \Sigma^{\tau}_{g}(B^d_R)\cap \mathbf{P}]$ so $\Sigma^{\tau}_{\tilde{g}}(B^d_R)\cap (\mathbf{P}\cup \mathbf{U}^c) \subseteq \pi^{-1}[ \Sigma^{\tau}_{g}(B^d_R)\cap \mathbf{P}]$. We can now conclude that
\begin{align}
\pi^{-1}[ \Sigma^{\tau}_{g}(B^d_R)\cap \mathbf{P}]=\Sigma^{\tau}_{\tilde{g}}(B^d_R)\cap (\mathbf{P}\cup \mathbf{U}^c)
\label{eq:preimage of Sigma 2}
\end{align}
The two equations \eqref{eq:preimage of Sigma 1} and \eqref{eq:preimage of Sigma 2} now imply \eqref{eq: preimage of Sigma}.
\end{proof}

\end{document}